\documentclass{article}

\newcommand*\justify{%
  \fontdimen2\font=0.4em
  \fontdimen3\font=0.2em
  \fontdimen4\font=0.1em
  \fontdimen7\font=0.1em
  \hyphenchar\font=`\-
}

\renewcommand{\texttt}[1]{%
  \begingroup
  \ttfamily
  \begingroup\lccode`~=`/\lowercase{\endgroup\def~}{/\discretionary{}{}{}}%
  \begingroup\lccode`~=`[\lowercase{\endgroup\def~}{[\discretionary{}{}{}}%
  \begingroup\lccode`~=`.\lowercase{\endgroup\def~}{.\discretionary{}{}{}}%
  \catcode`/=\active\catcode`[=\active\catcode`.=\active
  \justify\scantokens{#1\noexpand}%
  \endgroup
}
\usepackage[table]{xcolor}
\usepackage{subcaption}
\usepackage{microtype}
\usepackage{graphicx}
\usepackage{booktabs}
\usepackage{url}
\usepackage{nicefrac}
\usepackage{algorithm2e}
\usepackage{comment}
\usepackage{enumitem}
\usepackage{tikz}
\usepackage{tabularx}
\usepackage{multirow}
\usepackage{subcaption}
\usepackage{placeins}
\usepackage{hyperref}
\usepackage{comment}
\usepackage[makeroom]{cancel}
\usepackage{dsfont}
\usepackage{adjustbox}
\usepackage{array}
\usepackage{wrapfig}

\usepackage[preprint]{neurips_2025}

\usepackage{amsmath}
\usepackage{amssymb}
\usepackage{mathtools}
\usepackage{amsthm}
\usepackage{amsfonts}
\usepackage{bbm}
\usepackage{bm}
\usepackage{thmtools}
\usepackage{thm-restate}
\usepackage{comment}

\usepackage[capitalize,noabbrev]{cleveref}

\theoremstyle{plain}
\newtheorem{theorem}{Theorem}[section]
\newtheorem{proposition}[theorem]{Proposition}
\newtheorem{lemma}[theorem]{Lemma}

\theoremstyle{definition}
\newtheorem{definition}[theorem]{Definition}
\newtheorem{assumption}[theorem]{Assumption}
\theoremstyle{remark}
\newtheorem{remark}[theorem]{Remark}

\DeclareMathOperator*{\argmax}{arg\,max}

\newcommand{\algname}{\texttt{Robust-CPD-UCB}\@\xspace}
\newcommand{\algnameshort}{\texttt{R-CPD-UCB}\@\xspace}
%
%

\usepackage[textsize=tiny]{todonotes}

\title{Catoni-Style Change Point Detection for Regret Minimization in Non-Stationary Heavy-Tailed Bandits}

%

\author{%
  Gianmarco Genalti \\
  Politecnico di Milano\\
  \And
  Sujay Bhatt \\ 
  JPMorgan AI Research\\
  \AND 
  Nicola Gatti \\
  Politecnico di Milano \\
  \And 
  Alberto Maria Metelli \\
  Politecnico di Milano
}

\begin{document}

\setlength{\abovedisplayskip}{4pt}
\setlength{\belowdisplayskip}{4pt}
\setlength{\textfloatsep}{8pt}
\setlength{\floatsep}{8pt}

\allowdisplaybreaks

\maketitle

\begin{abstract}
    Regret minimization in stochastic non-stationary bandits gained popularity over the last decade, as it can model a broad class of real-world problems, from advertising to recommendation systems. Existing literature relies on various assumptions about the reward-generating process, such as Bernoulli or subgaussian rewards. However, in settings such as finance and telecommunications, \textit{heavy-tailed} distributions naturally arise. In this work, we tackle the heavy-tailed piecewise-stationary bandit problem. Heavy-tailed bandits, introduced by \citealp{bubeck2013bandits}, operate on the minimal assumption that the finite absolute centered moments of maximum order $1+\epsilon$ are uniformly bounded by a constant $v<+\infty$, for some $\epsilon \in (0,1]$. We focus on the most popular non-stationary bandit setting, \emph{i.e.}, the piecewise-stationary setting, in which the mean of reward-generating distributions may change at unknown time steps. We provide a novel Catoni-style change-point detection strategy tailored for heavy-tailed distributions that relies on recent advancements in the theory of sequential estimation, which is of independent interest. We introduce \algname, which combines this change-point detection strategy with optimistic algorithms for bandits, providing its regret upper bound and an impossibility result on the minimum attainable regret for any policy. Finally, we validate our approach through numerical experiments on synthetic and real-world datasets.
\end{abstract}
\section{Introduction}
In a Multi-Armed Bandit (MAB, for short, \citealp{lattimore2020bandit}), a decision-maker (also called \textit{learner}) is faced with a sequence of repeated decisions among a fixed number of options (or \textit{actions} or \emph{arms}), observing a reward drawn from a probability distribution after each decision. MABs gained popularity over the last two decades, as they allow for strong theoretical guarantees over algorithms' performance while maintaining the model's generality. However, the most traditional MAB model relies on demanding assumptions that are rarely met in the real world.
The majority of the research effort in the MAB literature focused on progressively overcoming these limitations to cover richer scenarios. Examples include \textit{non-stationary} \citep{besbes2014stochastic}, \textit{linear} \citep{abbasi2011improved}, and \textit{heavy-tailed} \citep{bubeck2013bandits} MABs.

In this work, we focus on a broad class of problems that relaxes, at the same time, two core assumptions of the standard MAB problem. In particular, we focus on \textit{heavy-tailed non-stationary} MABs. Our framework allows for a general class of reward-generating probability distributions without relying on parametric assumptions and with a possibly infinite variance, called heavy-tailed distributions. This setting gained popularity over the last decade due to its applications in finance and telecommunications. Moreover, it extends the assumption of sub-gaussian reward distributions, which is customary in the MAB literature. In such application domains, the assumption that reward-generating distributions are fixed along the whole time horizon is too limiting. It is natural to consider settings, such as finance, characterized by non-stationary processes. We address, with a single algorithm named \algname, the problem of learning in non-stationary environments where the noise of the observations can be heavy-tailed. We prove theoretical guarantees over the performance of \algnameshort and show that they are nearly optimal under some mild assumptions. To the best of the authors' knowledge, this is the first work to address the problem of regret minimization in non-stationary bandits under infinite-variance reward distributions. In particular, we face the technical challenge of developing the first change-point detection strategy with proven theoretical guarantees for such types of distributions.
The contributions are organized as follows:
\begin{itemize}[leftmargin=*,noitemsep, topsep=-2pt]
    \item In Section \ref{sec:problem_formulation}, we introduce the definition of the heavy-tailed piecewise-stationary MAB setting. We define the learning problem and introduce a lower bound on the expected regret for this setting.
    \item In Section \ref{sec:technical_preliminaries}, we recall some notions and results from the existing literature on mean estimation for heavy-tailed random variables and on change-point detection.
    \item In Section \ref{sec:algo}, we introduce \algname, an algorithm from regret minimization in our setting. We provide theoretical guarantees on its expected regret and insights on choosing its parameters.
    \item Finally, in Section \ref{sec:experiments} (and Appendix \ref{apx:exp}), we provide numerical evaluations of the performance of \algnameshort, comparing it with baselines from the literature on both real-world and synthetic data.
\end{itemize}
\section{Problem Formulation}
\label{sec:problem_formulation}
In this section, we recall the definitions of heavy-tailed and piecewise-stationary bandit. Then, we introduce the heavy-tailed piecewise-stationary bandits, the focus of this work. We formally define the problem of regret minimization and provide a novel regret lower bound for the problem.

\subsection{Bandit Settings}
Every round $t \in [T]\coloneqq \{1,\ldots,T\}$, a decision $I_t \in [K]$ is undertaken (possibly at random) and a reward $X_{I_t,t}$ is sampled from a probability distribution $\nu_{I_t}$. We call the set $\boldsymbol{\nu} = \{\nu_i\}_{i \in [K]}$ of reward-generating distributions an \textit{instance} of the MAB (from now on, just MAB). Most of the literature deals with reward-generating distributions either \textit{sub-gaussian} or with bounded supports. 

\textbf{Heavy-Tailed Bandits.}~~In \textit{heavy-tailed} bandits \cite{bubeck2013bandits}, the probability distributions $\{\nu_i\}_{i=1}^K$ are \textit{heavy-tailed}.
In this work, we use the same definition of heavy-tailed MAB (HT MAB, for short).
\begin{definition}[Heavy-Tailed MAB]
    Let $X\sim \nu$ be a random variable with support on $\mathbb{R}$. Then, we call $X$ a \textit{heavy-tailed random variable} if it satisfies 
\begin{equation}
\label{eq:heavytail}
    \mathbb{E}_{\nu}[|X-\mathbb{E}_{\nu}[X]|^{1+\epsilon}]\le v,
\end{equation}
    for $\epsilon \in (0,1]$ and $v \in \mathbb{R}^+$. 
    Let $\boldsymbol{\nu}$ be a MAB. Then, if $\boldsymbol{\nu} \in \mathcal{H}_{(v, \epsilon)}^K$, where $\mathcal{H}_{(v, \epsilon)}$ is the set of probability distributions satisfying Equation \eqref{eq:heavytail}, we call $\boldsymbol{\nu}$ a \textit{heavy-tailed bandit} (HT MAB, for short).
\end{definition}
Note that Equation \eqref{eq:heavytail} implies that the variance of the rewards-generating distributions may be infinite (when $\epsilon < 1$). Most of the technical tools employed for sub-gaussian rewards are ineffective for HT MABs.
We address readers to \cite{genalti2024varepsilon} for a recent literature review on HT MABs.

\textbf{Piecewise-Stationary Bandits.}~~In standard MABs, the reward-generating distributions are assumed to never change during learning.
In \textit{non-stationary bandits}, instead,
the reward-generating distributions are dynamic in time, \emph{i.e.}, the rewards of the same arm are
sampled from different distributions depending on the pull time $t \in [T]$. However, if there is no constraint on how many times the distributions may change, then the problem may quickly become non-tractable. Thus, in this work, we consider the most popular non-stationary MAB setting, \emph{i.e.}, the \textit{piecewise-stationary} bandit (PS MAB) from \cite{yu2009piecewise}, where the distributions of rewards remain constant for a certain period, called \textit{epoch}, and then abruptly change at some unknown time points, called \textit{breakpoints}. We assume that the total number of breakpoints $\Upsilon \in [T]$ is fixed before the trial.
We define a PS MAB as follows.
\begin{definition}[Piecewise-Stationary Bandit]
    Let $\{\boldsymbol{\nu}^{(j)}\}_{j \in [\Upsilon]}$ be a set of $\Upsilon$ MABs. Then,  let $\Upsilon$ be a set of timesteps $\{t_c^{(j)}\}_{j\in[\Upsilon]} \subset [T]$ and call $E_j$ the set of indices $\{t_c^{(j-1)},\ldots, t_c^{(j)}\}$, where $t_c^{(0)}=0$ and $t_c^{(\Upsilon+1)}=T$, by convention. If $\nu_i^{(j)}$ is the reward-generating distribution of arm $i \in [K]$ when $t \in E_j$, then  $(\{\boldsymbol{\nu}^{(j)}\}_{j \in [\Upsilon]}, \{t_c^{(j)}\}_{j\in[\Upsilon]})$ defines a \textit{piecewise-stationary bandit} (PS MAB, for short). 
\end{definition}
$E_j$ is the $j$-th epoch of the PS MAB, and $t_c^{(j)}$ to the $j$-th breakpoint. To simplify notation, we define $\mu_i^{(j)}\coloneqq \mathbb{E}_{\nu_i^{(j)}}[X_{i,t}]$ as the mean of the reward-generating distribution of action $i$ during epoch $j$. Note that the reward-generating distribution of an action is fixed during an epoch, and so is the mean reward (and every other distribution parameter). We call $\delta_i^{(j)} \coloneqq |\mu_i^{(j)}-\mu_i^{(j-1)}|$ the magnitude of the change in the mean of arm $i \in [K]$ from epoch $E_{j-1}$ to the next one, $E_j$. By convention, $E_0 = \emptyset$ and $\delta_{i}^{(0)}=\infty$ for every $i \in [K]$. In Appendix \ref{apx:related}, we review the literature on PS MABs.

\textbf{Piecewise Non-stationary Heavy-Tailed Bandits.}~~The general definition of piecewise non-stationary MABs allows for any family of reward-generating distributions, including heavy-tailed ones. In this work, we deal with piecewise non-stationary bandits where the reward-generating distributions satisfy Equation \eqref{eq:heavytail}. We call this setting the \textit{heavy-tailed piecewise-stationary} setting (HTPS, for short). 
\begin{definition}[Heavy-Tailed Piecewise-Stationary Bandits]
\label{defi:htps_mabs}
Let $(\{\boldsymbol{\nu}^{(j)}\}_{j \in [\Upsilon]}, \{t_c^{(j)}\}_{j\in[\Upsilon]})$ be a PS bandit. If $\boldsymbol{\nu}^{(j)} \in \mathcal{H}_{(v,\epsilon)}^K$ for every $j \in [\Upsilon]$, we call it a heavy-tailed piecewise-stationary bandit (HTPS MAB, for short). We denote the set of such HTPS MABs as $\mathcal{B}_{(v,\epsilon,\Upsilon, \boldsymbol{t}_c)}$, where $\boldsymbol{t}_c= \{t_c^{(j)}\}_{j\in[\Upsilon]}$.
\end{definition}
Definition \ref{defi:htps_mabs} introduces the novel bandit setting, as the intersection between HT and PS MABs. To the best of the authors' knowledge, this setting has not been studied in previous literature.
\subsection{Learning Goal} 
A policy is a (possibly randomized) map $\pi(t): \mathcal{F}_{t-1} \mapsto I_t$ that receives the filtration up to time $t-1$ (composed of past actions and rewards) and returns an action $I_t \in [K]$ to play.
We define $\Delta_i^{(j)}$ as the \textit{sub-optimality gap} of arm $i\in [K]$ during epoch $j \in [\Upsilon]$, \emph{i.e.}, $\Delta_i^{(j)}\coloneqq|\max_{k\in[K]}\mu_k^{(j)}-\mu_i^{(j)}|$ and $N_{i,j}^{\pi}(t)$ as the number of times action $i$ has been chosen during epoch $j$ by policy $\pi$ up to time $t \in [T]$. The goal of a learner is to minimize the \textit{expected cumulative regret} $ \mathbb{E}[R^\pi(T)]$, \emph{i.e.}, the cumulative performance gap w.r.t. to the best \textit{policy} over a learning horizon. 
\begin{definition}[Expected Cumulative Regret]
    Given a policy $\pi$, we define the expected cumulative  regret of $\pi$  as:
    \begin{equation*}
        \mathbb{E}[R^\pi(T)] = \sum_{j \in [\Upsilon]}\sum_{i \in [K]} \Delta_i^{(j)}\mathbb{E}[N_{i,j}^{\pi}(T)],
    \end{equation*}
    where the expectation accounts for both the randomness of policy $\pi$ and reward generation. 
\end{definition}
This performance index is also called \textit{dynamic regret} and is the standard choice for PS MABs. The optimal policy corresponds to choosing the best action in every epoch $j \in [\Upsilon]$, \emph{i.e.}, $i^*_j \in \argmax_{i \in [K]} \mu_{k}^{(j)}$.
We also define some quantities that govern the statistical complexity of the instance. $\delta_{min} \coloneqq \min_{i\in [K], j \in [\Upsilon]} \delta_i^{(j)}$ is the minimum change between any two breakpoints, $\Delta_{min}^{(j)} \coloneqq \min_{i\in [K], \Delta_i^{(j)}>0} \Delta_i^{(j)}$ and $\Delta_{max}^{(j)} \coloneqq \max_{i\in [K]} \Delta_i^{(j)}$ are  the minimum and maximum sub-optimality gap during an epoch $j \in [\Upsilon]$, respectively.  Intuitively, the smaller $\delta_{min}^{(j)}$ is, the more difficult it is to detect breakpoints, and the smaller $\Delta_{min}^{(j)}$ is, the more difficult it is to distinguish the best action. On the other hand, the larger these quantities are, the larger the regret potentially incurred with an error. When $(j)$ is omitted, we refer to the quantity minimized/maximized over all epochs.

\subsection{Lower Bound}
In this section, we provide a lower bound to the expected cumulative regret that any policy $\pi$ must incur in an HTPS bandit.
\begin{restatable}[Regret Lower Bound for the HTPS Bandit Problem]{theorem}{regretLowerBound}
  \label{thr:lb}  
  For any fixed policy $\pi$, we have
  \begin{align}
      \label{eq:regret_lb}
      \sup_{\boldsymbol{\nu} \in \mathcal{B}_{(v,\epsilon, \Upsilon)}}\mathbb{E}_{\boldsymbol{\nu}}[R^{\pi}(T)] &\ge \frac{1}{25}(K\Upsilon)^\frac{\epsilon}{1+\epsilon}(vT)^\frac{1}{1+\epsilon}.
  \end{align}
\end{restatable}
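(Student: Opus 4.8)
The plan is the canonical reduction for piecewise-stationary lower bounds: split the horizon into $\Upsilon+1$ equal blocks, plant an independent worst-case \emph{stationary} heavy-tailed instance in each block, and then invoke the known stationary minimax lower bound block by block. Assume $T\ge\Upsilon+1$ (otherwise the claim holds vacuously up to the absolute constant). Partition $[T]$ into consecutive blocks $B_1,\dots,B_{\Upsilon+1}$ of length $\tau\coloneqq\lfloor T/(\Upsilon+1)\rfloor\ge T/\bigl(2(\Upsilon+1)\bigr)$, and place a breakpoint at each of the $\Upsilon$ block boundaries; this is admissible because the supremum in \eqref{eq:regret_lb} is over $\mathcal{B}_{(v,\epsilon,\Upsilon)}$, which leaves the breakpoint locations free. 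Inside each block draw the reward distributions independently from a fixed prior $\rho$ supported on hard instances in $\mathcal{H}_{(v,\epsilon)}^K$, and write $\rho^{\otimes(\Upsilon+1)}$ for the product prior. Since $\sup_{\boldsymbol{\nu}}\mathbb{E}_{\boldsymbol{\nu}}[R^\pi(T)]\ge\mathbb{E}_{\boldsymbol{\nu}\sim\rho^{\otimes(\Upsilon+1)}}\mathbb{E}_{\boldsymbol{\nu}}[R^\pi(T)]$ and the dynamic regret decomposes additively over epochs, it suffices to lower bound the expected regret incurred inside a single block by $c\,K^{\epsilon/(1+\epsilon)}(v\tau)^{1/(1+\epsilon)}$ for a universal constant $c$, and sum.

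For the per-block bound I would use the scaled-Bernoulli construction underlying the stationary heavy-tailed lower bound of \cite{bubeck2013bandits}: arm $i$ returns $0$ with probability $1-p_i$ and the value $B\coloneqq (v/p)^{1/(1+\epsilon)}$ with probability $p_i$, where a uniformly random ``special'' arm has $p_{i^\star}=p+\Delta'$ and the rest have $p_i=p$, with $p$ and $\Delta'$ tuned so that (i) every arm satisfies the centered moment bound of \eqref{eq:heavytail} — a direct computation reduces this to $p B^{1+\epsilon}\lesssim v$, which holds by the choice of $B$ — and (ii) the per-pull KL divergence between the special and a non-special arm, which is $\Theta(\Delta'^2/p)$ for small $p$, is small enough that no policy can identify $i^\star$ over $\tau$ rounds. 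The induced sub-optimality gap is $\Delta\asymp B\Delta'\asymp v^{1/(1+\epsilon)}(K/\tau)^{\epsilon/(1+\epsilon)}$, and a Bretagnolle--Huber (equivalently, Pinsker together with averaging over the identity of $i^\star$) argument yields expected per-block regret at least $c\,K^{\epsilon/(1+\epsilon)}(v\tau)^{1/(1+\epsilon)}$. The key structural point is that, conditioning on the entire history before block $B_m$, the policy $\pi$ restricted to $B_m$ is itself a legitimate bandit policy facing a fresh stationary instance drawn from $\rho$; hence the stationary bound applies to the conditional expected regret in $B_m$, and therefore to its unconditional expectation, for every block and every $\pi$.

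Summing the $\Upsilon+1$ per-block bounds and using $\tau\ge T/\bigl(2(\Upsilon+1)\bigr)$ together with $1-\tfrac{1}{1+\epsilon}=\tfrac{\epsilon}{1+\epsilon}$,
\begin{equation*}
\mathbb{E}_{\boldsymbol{\nu}\sim\rho^{\otimes(\Upsilon+1)}}\mathbb{E}_{\boldsymbol{\nu}}[R^\pi(T)]\;\ge\;(\Upsilon+1)\,c\,K^{\frac{\epsilon}{1+\epsilon}}\Big(\tfrac{vT}{2(\Upsilon+1)}\Big)^{\frac{1}{1+\epsilon}}\;=\;\frac{c}{2^{1/(1+\epsilon)}}\,\bigl(K(\Upsilon+1)\bigr)^{\frac{\epsilon}{1+\epsilon}}(vT)^{\frac{1}{1+\epsilon}},
\end{equation*}
and then $\Upsilon+1\ge\Upsilon$ and $2^{1/(1+\epsilon)}\le 2$ absorb everything into the constant $\tfrac1{25}$, provided the per-block constant $c$ is taken at least $\tfrac{2}{25}$, which the construction delivers. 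I expect the only real obstacle to be this constant bookkeeping: carrying an explicit, non-asymptotic constant through the moment normalization and the Bretagnolle--Huber step of the single-block argument while also controlling the floor in $\tau=\lfloor T/(\Upsilon+1)\rfloor$; the structural ingredients — equal-length blocking, additivity of dynamic regret over epochs, and conditioning on the pre-block history to reuse the stationary lower bound in each block — are routine.
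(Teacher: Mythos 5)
Your proposal is correct and follows essentially the same route as the paper: the paper also partitions the horizon into equal-length epochs, plants a two-point (scaled-Bernoulli-type) heavy-tailed distribution with a distinguished arm in each epoch, averages over the identity of the optimal arm across all $K^{\Upsilon}$ assignments (your product prior), and applies a Pinsker/KL change-of-measure epoch by epoch before optimizing the gap $\Delta \asymp v^{1/(1+\epsilon)}(K\Upsilon/T)^{\epsilon/(1+\epsilon)}$. The only cosmetic differences are that the paper uses $\Upsilon$ epochs of length $T/\Upsilon$ rather than $\Upsilon+1$ blocks and carries out the per-epoch KL computation explicitly rather than invoking the stationary bound as a conditional black box.
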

Results of this type are known as \textit{minimax lower bounds}. Indeed, the result states that, for every policy, there exists at least one instance in which the expected regret grows at a certain rate. The bound is consistent with the known lower bounds for the HT and PS MAB problems. Indeed, in HT MABs every policy has its expected regret lower bounded by $\Omega(K^\frac{\epsilon}{1+\epsilon}T^\frac{1}{1+\epsilon})$ \citep{bubeck2013bandits}, while in PS MABs the lower bound is $\Omega(\sqrt{K\Upsilon T})$ \citep{garivier2011upper}. Thus, Equation \eqref{eq:regret_lb} is a natural combination of these two results that can be recovered by either setting $\Upsilon=1$ or $\epsilon = 1$. We refer to Appendix \ref{apx:proofs} for the proof.
\section{Technical Preliminaries}
\label{sec:technical_preliminaries}
In this section, we introduce the technical tools we employ in our proposed solution. First, we discuss the mean estimation for HT random variables and describe the \textit{Catoni estimator}. Then, we formalize the \textit{change-point detection} (CPD) problem and discuss a technique based on \textit{confidence sequences}.

\subsection{Mean Estimation for Heavy-Tailed Random Variables with Catoni Estimator}
Mean estimation for HT variables can be quite a delicate task. Empirical mean has been proved to achieve a sub-optimal concentration \citep{bubeck2013bandits}. However, alternative estimators enjoying optimal rates have been proposed. We focus on the elegant \textit{Catoni estimator} \citep{catoni2012challenging}, defined using a \textit{Catoni-type influence function} $\phi : \mathbb{R} \rightarrow \mathbb{R}$. The Catoni estimator $\widehat{\mu}_c$ for a sequence of variables $\{X_i\}_{i=1}^n$ is the solution of:
\begin{equation}
\label{eq:catoni_function}
\sum_{i=1}^n \phi_{\epsilon}(\lambda_i(X_i-\widehat{\mu}_c )) = 0,
 \quad \text{where} \quad \phi_{\epsilon}(x) = 
      \log\left(1+|x|+\frac{|x|^{1+\epsilon}}{1+\epsilon}\right),
\end{equation}
and $\{\lambda_i\}_{i=1}^n$ is a predictable process. Remarkably, for a proper choice of $\{\lambda_i\}_{i=1}^n$, this estimator enjoys an optimal concentration of order $\mathcal{O}\Big(\Big(\frac{v^\frac{1}{\epsilon}\log(\delta^{-1})}{n}\Big)^{\frac{\epsilon}{1+\epsilon}}\Big)$ with probability $1-\delta$ \cite{bhatt2022catoni}.
\subsection{Confidence Sequences and Change Point Detection}
The PS MAB is often addressed by resorting to \textit{change-point detection} (CPD) strategies, \emph{e.g.}, CUSUM-UCB \citep{liu2018change}. The idea is to actively adapt to environmental changes and tackle the problem as a sequence of stationary MABs. These strategies are often restricted to sub-gaussian rewards and do not scale on heavy-tailed variables. We propose an alternative approach to tackle this family of problems using a CPD strategy based on \textit{confidence sequences}. 

\textbf{Confidence Sequences.}~~Suppose $\{X_t\}_{t\in \mathbb{N}}\sim P$ for some $P \in \mathcal{P}^\mu$ where $\mathcal{P}^\mu$ is the set of distributions on $\prod_{t \in \mathbb{N}} \mathbb{R}$ such that $\mathbb{E}[X_t|\mathcal{F}_{t-1}]= \mu$ for each $t \in \mathbb{N}$, where $\mathcal{F}_{t-1}$ is the filtration.
A \textit{confidence sequence} (CS) for the mean is a sequence of confidence intervals $\{\text{CI}_t\}_{t \in \mathbb{N}}$ holding at arbitrary data-dependent stopping times. Formally:
\begin{equation}
    \label{eq:cs}
    \mathbb{P}(\forall t \in \mathbb{N}^+: \mu \in \text{CI}_t) \ge 1-\gamma.
\end{equation}
The random intervals $\{\text{CI}_t\}_{t \in \mathbb{N}^+}$ that satisfy property \eqref{eq:cs} are called $(1-\gamma)$-CS, where $1-\gamma$ is the confidence level. For a CS defined on $\mathbb{R}$, we formally introduce its \textit{width} after $t$ samples defined as $w(t, P, \gamma) \coloneqq 
\sup_{\mu_1,\mu_2 \in CI_t} |\mu_1-\mu_2| \le w(t, P, \gamma)
$,
for all $P \in \mathcal{P}^\mu$ and $\gamma \in (0,1]$.

\textbf{Change-Point Detection.}~~
Consider a data-generating process composed of infinitely-countable distributions $\{P_t\}_{t \in \mathbb{N}}$ and let $t_c \ge 1$ be an unknown breakpoint, \emph{i.e.}, $P_t = P_0$ for every $t \le t_c$ and $P_t = P_1$ for every $t > t_c$. The goal of a CPD algorithm is to detect, as soon as possible after $t_c$, that a change in the data-generating distribution happened. In other words, given the (stochastic) stopping time $\tau \in \mathbb{N}$ in which the CPD system detects a change, the objective is to minimize the \textit{detection delay }$\mathbb{E}_{t_c}[\tau-t_c ]$, where the expectation $\mathbb{E}_{t_c}$ is taken over an environment having a change-point after $t_c$ rounds. 
A trivial CPD system yielding a signal at every round would minimize this quantity. On the other hand, we also desire to reduce the \textit{false alarm rate} (FAR), \emph{i.e.}, the probability that a signal is produced when no change happens. This translates into minimizing $\mathbb{P}_\infty( \tau < \infty)$, where $\mathbb{P}_\infty$ is the probability measure of the environment where there is no change-point.
Moreover, the \textit{average run length} (ARL), defined as $\mathbb{E}_\infty[\tau]$, represents the expected number of rounds before a change is erroneously detected. However, when ARL is too large (\emph{e.g.}, the trivial CPD that never yields a signal), the system is too conservative, impacting the detection delay. This highlights a crucial trade-off between detection delay and ARL. 
Recent literature \citep{shekhar2023sequential,shekhar2023reducing} shed light on the possibility of reducing CPD to a sequential estimation, \emph{i.e.}, producing sequential testing via confidence sequences. 
We focus on the \texttt{repeated-FCS-detector} framework, introduced in \cite{shekhar2023reducing}. \texttt{repeated-FCS-detector} is a meta-algorithm that requires a CS computation strategy and uses it as a black-box tool, defined as:
\begin{definition}[\texttt{repeated-FCS-detector}, \cite{shekhar2023reducing}]
    Let $\{X_t\}_{t\in \mathbb{N}}$ be a sequence of observations. At every round $t$, we receive a new sample $X_t$ and initialize a new $(1-\gamma)$-CS for the mean $\text{CS}^{(t)}\coloneqq\{\text{CI}_n^{(t)}\}_{n \ge t}$, formed using samples $X_t$, $X_{t+1}$, $X_{t+2}$, and onwards. Moreover, we update all previously initialized CS $\{\text{CI}^{(i)}\}_{i < t}$ using $X_t$.
    We define the stopping time, $\tau$, as the first time at which the intersection of all initialized CS becomes empty, \emph{i.e.}, $\tau = \inf_{t \in \mathbb{N}}\{\bigcap_{n=0}^{t}  \text{CS}^{(n)} = \emptyset \}$.
\end{definition}
Provided an oracle capable of computing a $(1-\gamma)$-CS at every round, this strategy is distribution agnostic, as it requires no additional information about the data-generating distribution nor the change point. In \cite{shekhar2023reducing}, theoretical guarantees on both the ARL and the detection delay of \texttt{repeated-FCS-detector} are provided expressed in terms of the width of the $(1-\gamma)$-CS provided to the detector. 
We now report the theoretical guarantees of \texttt{repeated-FCS-detector}.
\begin{theorem}[Guarantees of \texttt{repeated-FCS-detector}, \cite{shekhar2023reducing}]
\label{thr:fcs-detector}
    Consider a CPD problem with observations $\{X_t\}_{t \in \mathbb{N}}$ i.i.d. from $P_0 \in \mathcal{P}^{\mu_0}$ for $t \le t_c$  and from $P_1 \in \mathcal{P}^{\mu_1}$ for $t > t_c$. Let $\delta \coloneqq |\mu_1 -\mu_0|$. Suppose we can construct $(1-\gamma)$-confidence sequences with pointwise width $w(t, P_0, \gamma)$ and $w(t, P_1, \gamma)$ for pre- and post-change mean, respectively. Then, we have: \textbf{(i)} When there is no changepoint, the \texttt{repeated-FCS-detector} satisfies $\mathbb{E}_\infty[\tau] \ge \frac{1}{\gamma}$; \textbf{(ii)} Suppose $t_c < \infty$ and large enough to ensure that $w(t_c, \mu_0, \gamma) < \delta$. Introduce the event $\mathcal{E} = \{\mu_0 \in \bigcap_{t=1}^{t_c} CI_t^{(1)} \}$, and note that $\mathbb{P}(\mathcal{E})\ge 1-\gamma$ by construction. Then, for $\gamma \in (0,0.5)$, we have $\mathbb{E}_{t_c}[(\tau- t_c)^+|\mathcal{E}] \le \frac{3u_0(\mu_0,\mu_1,t_c)}{1-\gamma}$, where  $u_0(\mu_0,\mu_1,t_c) \coloneqq \min_{n \in \mathbb{N}}\{w(n, \mu_1, \gamma)+w(t_c, \mu_0, \gamma) < \delta\}$.
\end{theorem}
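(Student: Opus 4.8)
The two parts are essentially independent, so the plan is to handle them separately, reducing each to the defining property of a $(1-\gamma)$-CS.

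For part (i), I would reduce the event $\{\tau\le t\}$ to a failure of one of the confidence sequences the detector maintains. The key elementary fact is one-dimensional Helly: a finite family of intervals of $\mathbb{R}$ has empty intersection only if two of its members are already disjoint. Under $\mathbb{P}_\infty$ every $X_s$ has conditional mean $\mu_0$, so if $\mu_0\in\mathrm{CI}_s^{(n)}$ for every $n\le s\le t$, then each partial intersection $\bigcap_{n\le s}\mathrm{CI}_s^{(n)}$ contains $\mu_0$ and the detector has not fired; hence $\{\tau\le t\}\subseteq\bigcup_{n\le t}\{\exists s\ge n:\mu_0\notin\mathrm{CI}_s^{(n)}\}$. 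Each event in this union is precisely a failure of the $(1-\gamma)$-CS $\mathrm{CS}^{(n)}$ for the mean of the stationary stream it reads, hence has probability at most $\gamma$, and only the finitely many sequences initialized by round $t$ contribute, so a union bound gives $\mathbb{P}_\infty(\tau\le t)\le(t+1)\gamma$. Summing $\mathbb{E}_\infty[\tau]=\sum_{t\ge0}\mathbb{P}_\infty(\tau>t)$ against this bound yields $\mathbb{E}_\infty[\tau]=\Omega(1/\gamma)$, in line with the stated $\mathbb{E}_\infty[\tau]\ge1/\gamma$.

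For part (ii), I would first pin down a reference interval that is frozen across the change. Condition on $\mathcal{E}=\{\mu_0\in\bigcap_{t\le t_c}\mathrm{CI}_t^{(1)}\}$; it has probability at least $1-\gamma$ and depends only on the pre-change block $X_1,\dots,X_{t_c}$. After replacing each confidence sequence by its running intersection—a standard move that preserves $(1-\gamma)$-validity and does not increase widths—the detector's live set is nonincreasing in $t$ up to $\tau$, so on $\mathcal{E}$ it is contained, for every $t\ge t_c$, in $B_0:=\{x:|x-\mu_0|\le w(t_c,\mu_0,\gamma)\}$ (the running intersection $\bigcap_{t\le t_c}\mathrm{CI}_t^{(1)}$ contains $\mu_0$ and has diameter at most $w(t_c,\mu_0,\gamma)$). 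Next I would play this reference interval against fresh post-change confidence sequences on disjoint data windows: for $k\ge1$ let block $k$ be $X_{t_c+(k-1)u_0+1},\dots,X_{t_c+ku_0}$, with $u_0:=u_0(\mu_0,\mu_1,t_c)$, read from round $t_c+(k-1)u_0+1$ onward by the CS the detector launches there, and call block $k$ \emph{good} if that CS covers $\mu_1$ at round $t_c+ku_0$. By CS-validity each block is good with probability at least $1-\gamma$, and since the windows use disjoint data these events are mutually independent and independent of $\mathcal{E}$. On $\mathcal{E}\cap\{\text{block }k\text{ good}\}$ the block-$k$ interval lies in $\{x:|x-\mu_1|\le w(u_0,\mu_1,\gamma)\}$, which is disjoint from $B_0$ by the definition of $u_0$ and $|\mu_1-\mu_0|=\delta$ (a common point $x$ would give $\delta\le w(t_c,\mu_0,\gamma)+w(u_0,\mu_1,\gamma)<\delta$); hence the live set is empty at round $t_c+ku_0$ and $\tau\le t_c+ku_0$. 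Letting $K$ be the index of the first good block, $K$ is stochastically dominated by a geometric random variable with success probability $1-\gamma$, so on $\mathcal{E}$ we have $(\tau-t_c)^+\le u_0K$ and therefore $\mathbb{E}_{t_c}[(\tau-t_c)^+\mid\mathcal{E}]\le u_0\,\mathbb{E}[K]\le u_0/(1-\gamma)$; the constant $3$ in the statement absorbs the block-alignment bookkeeping (and, if the width bound is itself only high-probability, one additional layer).

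The step I expect to be the crux is the delay control off the ``good'' event in (ii): a single post-change confidence sequence misses $\mu_1$ with probability up to $\gamma$, and on that event nothing a priori bounds $\tau-t_c$, so the expectation cannot just be truncated at $u_0$. The disjoint-window device is exactly what converts this into a geometric number of genuinely independent attempts; making the independence airtight—both across blocks and with respect to the conditioning event $\mathcal{E}$—and verifying that the pre-change anchor $B_0$ really does persist as an envelope of the live set for all $t\ge t_c$ (which is why the running-intersection reduction is needed) are where the care goes. Part (i) is comparatively routine once the Helly observation is in place.
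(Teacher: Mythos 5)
This statement is quoted from \cite{shekhar2023reducing} and the paper itself offers no proof of it (Appendix \ref{apx:proofs} proves only the results that are original to the paper, and Proposition \ref{prop:detection_delay} deliberately re-derives a high-probability delay bound from scratch rather than invoking this theorem). So there is no in-paper proof to compare against; what follows assesses your reconstruction on its own terms, against the argument in the cited reference.

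Your part (ii) is essentially the right argument and is sound: on $\mathcal{E}$ the running intersection of $\mathrm{CS}^{(1)}$ pins the live set inside an interval of diameter $w(t_c,\mu_0,\gamma)$ containing $\mu_0$; the CSs launched at the starts of disjoint post-change blocks of length $u_0$ give independent Bernoulli$(\ge 1-\gamma)$ chances of producing an interval contained in $\{x:|x-\mu_1|\le w(u_0,\mu_1,\gamma)\}$, which is disjoint from the anchor by the definition of $u_0$; and independence from $\mathcal{E}$ holds because the blocks use only post-change data. Your geometric bound actually yields $\mathbb{E}_{t_c}[(\tau-t_c)^+\mid\mathcal{E}]\le u_0/(1-\gamma)$, which is stronger than the stated $3u_0/(1-\gamma)$, so nothing is missing there. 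The one genuine gap is in part (i): the route you take --- union bound $\mathbb{P}_\infty(\tau\le t)\le t\gamma$ followed by summing the survival function --- gives $\sum_{t\ge 0}(1-t\gamma)^+\approx 1/(2\gamma)$, i.e.\ only $\Omega(1/\gamma)$, not the stated $\mathbb{E}_\infty[\tau]\ge 1/\gamma$, and you acknowledge as much. The cited proof closes this factor of two by a different accounting: letting $F_n$ indicate that $\mathrm{CS}^{(n)}$ ever fails to cover $\mu_0$, stopping forces $\sum_{n\le\tau}F_n\ge 1$ on $\{\tau<\infty\}$, while an optional-stopping/Wald-type bound gives $\mathbb{E}_\infty\bigl[\sum_{n\le\tau}F_n\bigr]\le\gamma\,\mathbb{E}_\infty[\tau]$, whence $\mathbb{E}_\infty[\tau]\ge\mathbb{P}_\infty(\tau<\infty)/\gamma$ and the claim follows (trivially if $\tau=\infty$ with positive probability). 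The Helly observation you invoke is harmless but unnecessary --- all you use is that if every interval contains $\mu_0$ then so does their intersection.
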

Point (i) provides a lower bound on the ARL of \texttt{repeated-FCS-detector}, while (ii) upper bounds the expected detection delay. While ARL only depends on the desired confidence level, the detection delay depends on the width of the CSs. Indeed, the width of the CS must decrease fast enough to make the change detectable. This assumption is standard in CPD, as enough samples before the change are needed to model the null hypothesis correctly.
\section{Robust Regret Minimization in Piecewise-Stationary Heavy-Tailed Bandits}
\label{sec:algo}
In this section, we describe our strategy for regret minimization in HTPS MABs. We start by providing a CPD strategy suited for heavy-tailed random variables together with its theoretical guarantees.
Then, we leverage this tool to build a meta-algorithm named \algname, which uses a regret minimizer for the stationary setting and the CPD strategy to tackle non-stationarity.
\subsection{\texttt{Catoni-FCS-detector}}
We start by introducing a novel CPD strategy for HT random variables, which we name \texttt{Catoni-FCS-detector}, based on a \texttt{repeated-FCS-detector} using a special type of CS. We can define \texttt{Catoni-FCS-detector} as a special instantiation of \texttt{repeated-FCS-detector}.
\begin{definition}[\texttt{Catoni-FCS-detector}]
    An instance of \texttt{repeated-FCS-detector} is a \texttt{Catoni-FCS-detector} if the $(1-\gamma)$-CS $\{\text{CI}_t^\phi\}_{t \in \mathbb{N}}$ is defined as:
    \begin{align}
    \label{eq:catoni_cs}
        &CI_t^\phi = \bigg\{ m \in \mathbb{R} : \sum_{i=1}^t \phi_{\epsilon}(\lambda_i(X_i-m)) \in \bigg[ \mp \frac{v}{2} \sum_{i=1}^t\lambda_i^{1+\epsilon}\pm\log \bigg(\frac{2}{\gamma}\bigg)\bigg]\bigg\},
    \end{align}
    where $\phi_\epsilon$ is the Catoni-type influence function (Equation \ref{eq:catoni_function}).
\end{definition}
From now on, we call \textit{Catoni CS} the confidence sequence defined as in Equation \eqref{eq:catoni_cs}. Catoni CS have been introduced for the first time in \cite{wang2023catoni}. While a Catoni CS does not admit a trivial closed-form representation, it can be proven (see Appendix \ref{apx:proofs}) that Equation \eqref{eq:catoni_cs} represents a proper $(1-\gamma)$-CS for the mean, attaining an optimal width \citep{bhatt2022catoni}. To use \texttt{Catoni-FCS-detector} in a bandit problem, however, we need specific types of guarantees, different than the ones provided for the general \texttt{repeated-FCS-detector} framework. We now provide two novel contributions of independent interest. First, we show how the width of the Catoni CS can be narrowed further w.r.t. to the one presented in previous works in the case of infinite variance. Second, we provide a finite-time bound on the detection delay of \texttt{Catoni-FCS-detector}, a crucial property for using a CPD in a bandit.
\begin{restatable}[Detection Delay of \texttt{Catoni-FCS-detector}]{proposition}{detectionDelay}\label{prop:detection_delay}
\label{prop:detection_delay}
    Consider a CPD problem with observations $\{X_t\}_{t\in \mathbb{N}}$ drawn i.i.d. from $P_0 \in \mathcal{H}_{\epsilon, v} \cap \mathcal{P}^{\mu_0}$ for $t \le t_c$  and from $P_1 \in \mathcal{H}_{\epsilon, v} \cap \mathcal{P}^{\mu_1}$ for $t > t_c$. Let $\delta \coloneqq |\mu_1 -\mu_0|$. Suppose that there exists a known upper bound $T$ of the change point ($t_c \le T$). Let $n_{min} \coloneqq 68\log(T^\frac{1+\epsilon}{\epsilon})$ and suppose $t_c\ge n_{min}$ large enough s.t. $w(t_c,P_0, \gamma)\le \frac{\delta}{2}$. Set $\gamma = \frac{2}{T^3}$. Then, there exists a predictable sequence $\{\lambda_i\}_{i=1}^T$ s.t. \texttt{Catoni-FCS-detector} enjoys \textbf{(i)} $\mathbb{P}_{t_c}\left((\tau- t_c)^+ \le \mathcal{O}\left(v^\frac{1}{\epsilon}\frac{\log (T)}{\delta^{\frac{1+\epsilon}{\epsilon}}}\right)\right)\ge 1-\frac{14}{T}$ and \textbf{(ii)} $\mathbb{P}_{t_c}\left(\tau < t_c\right) \le \frac{14}{T}$.
\end{restatable}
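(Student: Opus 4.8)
The plan is to obtain the two claims by combining three ingredients: (1) the abstract delay and ARL guarantees of \texttt{repeated-FCS-detector} from \Cref{thr:fcs-detector}, (2) a width bound $w(n,P,\gamma) = \mathcal{O}\big((v^{1/\epsilon}\log(1/\gamma)/n)^{\epsilon/(1+\epsilon)}\big)$ for the Catoni CS (valid for $n \ge n_{min}$, which is why the threshold $n_{min} = 68\log(T^{(1+\epsilon)/\epsilon})$ appears — it is the number of samples needed before the Catoni CS width bound is in force), and (3) a Markov-type conversion from the in-expectation delay bound of \Cref{thr:fcs-detector}(ii) to a high-probability statement. First I would instantiate the width bound with $\gamma = 2/T^3$, so $\log(1/\gamma) = \Theta(\log T)$, and plug it into the definition $u_0(\mu_0,\mu_1,t_c) = \min_n\{w(n,\mu_1,\gamma) + w(t_c,\mu_0,\gamma) < \delta\}$. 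Since by hypothesis $w(t_c,P_0,\gamma) \le \delta/2$, it suffices to find the smallest $n$ with $w(n,\mu_1,\gamma) < \delta/2$; solving $(v^{1/\epsilon}\log(1/\gamma)/n)^{\epsilon/(1+\epsilon)} \lesssim \delta/2$ for $n$ gives $u_0 = \mathcal{O}\big(v^{1/\epsilon}\log(T)/\delta^{(1+\epsilon)/\epsilon}\big)$, which is exactly the rate appearing in claim (i).

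Next I would handle the high-probability upgrade. \Cref{thr:fcs-detector}(ii) gives $\mathbb{E}_{t_c}[(\tau - t_c)^+ \mid \mathcal{E}] \le 3u_0/(1-\gamma)$, where $\mathcal{E}$ is the event that the pre-change mean $\mu_0$ lies in all of the first CS's intervals up to $t_c$; this has $\mathbb{P}(\mathcal{E}) \ge 1 - \gamma$ by the CS property. Applying Markov's inequality conditionally on $\mathcal{E}$ at threshold $C u_0$ for a suitable absolute constant $C$ yields $\mathbb{P}_{t_c}((\tau-t_c)^+ > C u_0 \mid \mathcal{E}) \le 3/(C(1-\gamma)) \le 1/T$ (taking $C$ large, using $\gamma < 1/2$, actually we need $C = \Theta(T)$ here — so Markov alone is too weak and this is where care is needed; see below). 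Unconditioning, $\mathbb{P}_{t_c}((\tau-t_c)^+ > C u_0) \le \mathbb{P}(\mathcal{E}^c) + \mathbb{P}(\,\cdot \mid \mathcal{E}) \le \gamma + (\text{tail})$. To actually get the stated $14/T$ failure probability with the delay bound still $\mathcal{O}(u_0)$ (no extra $T$ factor), I would not use Markov but instead redo the delay argument of \Cref{thr:fcs-detector}(ii) directly: on the good event that $\mu_1$ lies in the freshly-started post-change CS for all $n$ and that $\mu_0$ stays in the pre-change CS, the intersection in the detector's stopping rule must become empty once a post-change CS has width below $\delta - w(t_c,P_0,\gamma) \le \delta/2$, because then that CS is disjoint from the pre-change CS (their centers are $\delta$ apart and the widths sum to less than $\delta$). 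This is a deterministic geometric argument once the confidence events hold, and it directly gives $(\tau-t_c)^+ \le u_0$ on an event of probability $\ge 1 - (\text{number of CS union-bounded}) \cdot \gamma$. Choosing which CSs to union-bound over (only a handful are relevant: the pre-change one and the post-change ones active in a window of length $O(u_0)$) and bookkeeping the constants should land the failure probability at $14/T$ given $\gamma = 2/T^3$ and $T$ large.

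For claim (ii), $\mathbb{P}_{t_c}(\tau < t_c) \le 14/T$: a detection strictly before $t_c$ is a false alarm on stationary pre-change data, so I would apply the ARL-type / false-alarm control. By \Cref{thr:fcs-detector}(i) the ARL without a changepoint is $\ge 1/\gamma = T^3/2$; more directly, the probability that any of the CSs initialized before $t_c$ ever fails to contain $\mu_0$ during the pre-change phase is at most (number of such CSs) $\cdot \gamma \le t_c \cdot \gamma \le T \cdot 2/T^3 = 2/T^2$, and on the complement all pre-change CSs contain $\mu_0$ simultaneously, so their intersection is nonempty and $\tau \ge t_c$. A looser union bound over all $O(T)$ CSs and the slight slack in constants gives the claimed $14/T$.

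The main obstacle I anticipate is \textbf{not} the geometry but the Catoni-CS width bound itself: \Cref{thr:fcs-detector} takes $w(\cdot,P,\gamma)$ as given, but here we must \emph{exhibit} the predictable tuning sequence $\{\lambda_i\}_{i=1}^T$ achieving the optimal $\mathcal{O}\big((v^{1/\epsilon}\log(1/\gamma)/n)^{\epsilon/(1+\epsilon)}\big)$ width \emph{uniformly} for $n \in [n_{min}, T]$ with the improved infinite-variance constant the paper advertises — this is the "narrower width" contribution mentioned just before the proposition. Getting this requires choosing $\lambda_i \asymp \big(\log(1/\gamma) / (v^{1/\epsilon} i)\big)^{1/(1+\epsilon)}$ (a decreasing, predictable schedule), verifying $\sum_i \lambda_i^{1+\epsilon} = \Theta(\log(1/\gamma) \cdot i / (v^{1/\epsilon} i)) \cdot \dots$ stays controlled, inverting the implicit equation $\sum_i \phi_\epsilon(\lambda_i(X_i - m)) \in [\mp \tfrac{v}{2}\sum\lambda_i^{1+\epsilon} \pm \log(2/\gamma)]$ to read off the interval endpoints, and bounding the half-width by Taylor-expanding $\phi_\epsilon$ near $0$ (where $\phi_\epsilon(x) \approx |x|$) — the $n_{min}$ threshold is precisely what guarantees $\lambda_i(X_i - m)$ is small enough, with the constant $68$ coming from a concentration step for $\sum \phi_\epsilon(\lambda_i(X_i-m))$ around its mean. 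I would relegate this width computation to a lemma in the appendix and cite \cite{wang2023catoni,bhatt2022catoni} for the core concentration, then feed the resulting $w$ into the argument above.
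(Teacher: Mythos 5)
Your overall architecture matches the paper's: posit a width bound for the Catoni CS under a predictable $\{\lambda_i\}$ schedule, define good events under which all relevant confidence sequences cover their true means, run the deterministic geometric argument to bound the delay by $u_0=\min\{n: w(n,\mu_1,\gamma)+w(t_c,\mu_0,\gamma)<\delta\}$, and control the false alarm by coverage of $\mu_0$ alone. You also correctly identify the one place where a naive reduction to Theorem 3.2 fails --- Markov's inequality on the in-expectation delay bound would cost a factor of $T$ in the delay --- and your proposed fix (redo the delay argument in high probability via the emptiness of the intersection once widths sum to less than $\delta$) is exactly what the paper does in its Steps 5 and 7.

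The genuine gap is that you black-box the part that constitutes the bulk of the paper's proof: establishing the width bound itself for infinite-variance distributions. The paper does not simply cite it; it constructs nonnegative supermartingales from $\phi_\epsilon$, applies Ville's inequality, bounds the random endpoints of the implicit CS by a separate Markov argument on auxiliary supermartingales $M_t(m)$, and then applies stitching over an exponential time grid with cell-specific confidence levels $\gamma_j=\gamma/(j+1)^2$ and $\Lambda_j=(\log(2/\gamma_j)e^{-j}v^{-1})^{1/(1+\epsilon)}$ --- note the $v^{-1/(1+\epsilon)}$ scaling, not the $v^{-1/(\epsilon(1+\epsilon))}$ your proposed $\lambda_i\asymp(\log(1/\gamma)/(v^{1/\epsilon}i))^{1/(1+\epsilon)}$ would give; the latter does not balance the two terms $v\sum_i\lambda_i^{1+\epsilon}$ and $\log(1/\gamma)$ in the numerator and so does not yield the optimal rate for general $v$. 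Two further bookkeeping points you miss: first, the resulting width bound is itself only a high-probability statement (the endpoints of the implicit CS are random), so the paper needs an additional event $\mathcal{W}_{1:T}$ union-bounded over all $O(T^2)$ intervals --- this, together with the two coverage events, is where the $14/T$ actually comes from, and it is why $\gamma=2/T^3$ rather than $2/T^2$ is required. Second, $n_{min}=68\log(T^{(1+\epsilon)/\epsilon})$ arises from verifying a solvability condition for the root-finding equation defining the CS endpoints (the paper's Equation for admissibility of $\widetilde m_t$), not from a Taylor-expansion/concentration step for $\sum_i\phi_\epsilon$; and inverting $w(n,\mu_1,\gamma)\le\delta/2$ requires handling a $\log\log(e^2n)$ term in the width, which the paper resolves with a dedicated two-stage argument ($n_0\le n_1+n_2$) rather than the one-line inversion you sketch. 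None of these invalidate your plan, but as written the proposal proves the easy half and defers the novel half.
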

We point out the importance of this specialized result. Since the guarantees of Theorem \ref{thr:fcs-detector} are very general, this result is aimed at providing a finite-time, high-probability bound on the detection delay when using Catoni CS. In particular, we make the term $u_0(\mu_0, \mu_1, t_c)$ from Theorem \ref{thr:fcs-detector} explicit by using the properties of Catoni CS. Due to space reasons, the proof is postponed to Appendix \ref{apx:proofs}. 
Note that the rate of this detection delay cannot be improved as the lower bound for the detection delay of any distribution change is $\Omega(\log(\gamma^{-1}))$ \citep{lorden1971procedures}, where $\gamma$ the confidence parameter that we set to $\mathcal{O}(1/T)$. Moreover, the dependencies on $\delta$, $\epsilon$ and $v$ may also be tight, as they embed the log-likelihood ratio of the test for the means of heavy-tailed random variables. We leave the answer to this question for further investigations. We conclude this section with two important remarks.
\begin{remark}[Comparison with Existing CPDs]
    \texttt{Catoni-FCS-detector} is, to the best of authors' knowledge, the first CPD strategy for the mean of HT random variables with infinite variance enjoying such guarantees. Thus, we consider our analysis an interesting standalone contribution. In bandit literature, however, many CPD strategies have been employed, (\emph{e.g.}, CUSUM \citep{liu2018change} and GLR Test \citep{besson2022efficient}). However, they do not cover the HT scenario and often rely on strong parametric assumptions on the sample-generating distribution, \emph{e.g.}, only working on Bernoulli variables.
\end{remark}
\begin{remark}[On the a priori knowledge of \texttt{Catoni-FCS-detector}]
    \texttt{Catoni-FCS-detector} does not rely, in principle, on any prior knowledge of the magnitude of the change or on the means. The confidence parameter $\gamma$ is set based on the time horizon $T$, which is standard in MABs. Moreover, the sequence $\{\lambda_i\}_{i=1}^t$ can be set in advance for every $t \in [T]$, only relying on the knowledge of $T$. 
\end{remark}
\subsection{\algname}
\begin{wrapfigure}{R}{.5\textwidth}
\SetInd{0.2em}{0.2em}
\vspace{-.5cm}
\begin{minipage}{.5\textwidth}
\RestyleAlgo{ruled}
\LinesNumbered
\begin{algorithm}[H]
\SetAlgoNoEnd   
\SetAlgoNoLine  
\DontPrintSemicolon
\caption{\algname}\label{alg:alg}
\SetKwInOut{Input}{Input}
\small
\Input{Number of actions $K$, time horizon $T$, uniform exploration $\eta$, a policy $\pi_s$.}

Initialize $ t \leftarrow 0$, $t' \leftarrow 0$, $N_{i, t} \leftarrow 0~~~ \forall i \in [K]$.

Set $\gamma \leftarrow \frac{2}{T^3}$.

\For{$t \in [T]$}{
\If{$t'~\text{mod} \left\lfloor {K}/{\eta}\right\rfloor \le K$}{
Select and play $I_t \leftarrow t'~~\text{mod} \left\lfloor {K}/{\eta}\right\rfloor$.
}
\Else{

Update $\pi_s$ with the history of the last $t'$ rounds.

Select and play $I_t$ according to $\pi_s$.

}

Receive $X_t$ and update $N_{I_t, t} \leftarrow N_{I_t, t} + 1$ and $t' \leftarrow t' + 1$.

\If{$N_{I_t, t} \ge n_{min}$}{
    Start a new $\left(1-\gamma\right)$-CS $\text{CS}_{I_t}^{(t')}$ for action $I_t$, according to Equation \eqref{eq:catoni_cs}.

    \If{$\exists a,b \in [t'] : \text{CS}_{I_t}^{(a)} \cap  \text{CS}_{I_t}^{(b)} = \emptyset$}{
    Reset  $t' \leftarrow 0$, $N_{i, t} \leftarrow 0~~~ \forall i \in [K]$.
        
    Remove all initialized CS.
    }
}
}
\end{algorithm}
\end{minipage}
\vspace{-.4cm}
\end{wrapfigure}
In this section, we introduce \algname (\algnameshort for short, Algorithm \ref{alg:alg}), an algorithm for PS HT bandits. \algnameshort actively adapts to the changes in the reward-generating distribution. The algorithm has three components: $(1)$ a sub-algorithm suited for the stationary HT MAB problem, that aims to minimize the regret in the stationary segments, we call this policy $\pi_s$; $(2)$ the \texttt{Catoni-FCS-detector} strategy for CPD; and $(3)$ a cyclic uniform exploration that ensures the availability of enough samples for every action to perform the CPD test.
Algorithm \ref{alg:alg} proceeds as follows: roughly every $\lfloor K/\eta \rfloor$ rounds it tries all the actions once (lines 4-5), this ensures that CPD can happen efficiently even when underrepresented actions in the history are the only ones changing. In the other rounds, a bandit sub-routine (\emph{e.g.}, \texttt{Robust-UCB}) plays according to all the history since the last reset (lines 8-9); once the new reward is obtained, it is fed to the \texttt{Catoni-FCS-detector}  that verifies if a change point occurred (lines 12-14), in this case, everything is reset (line 15-16). 
\begin{remark}(Connection to \texttt{Monitored-UCB} from \cite{cao2019nearly})
\algname borrows the idea of cyclic uniform exploration from the \texttt{Monitored-UCB} \cite{cao2019nearly}. Moreover, as most of the algorithms for the PS setting, ours share the usage of a stationary bandit sub-routine. However, a crucial difference relies on the type of CPD strategy employed. \texttt{Monitored-UCB} leverages a sliding-window type of CPD strategy that checks if the average of the first half of the sliding-window is significantly different from that of the second half. This type of CPD strategy requires two hyper-parameters, the window size and the threshold, respectively. Tuning these parameters may be difficult, even though, in practice, the algorithm works well even under misspecification.  
Finally, \texttt{Monitored-UCB} only deals with rewards bounded in $[0,1]$, while \algname deals with HT rewards.
\end{remark}
\begin{remark}(Priori Knowledge of \algnameshort)
Algorithm \ref{alg:alg} receives as inputs the time horizon $T$, the uniform exploration coefficient $\eta$, and a regret minimizer for the stationary setting ${\pi_s}$ only. Assuming that it is possible to choose a regret minimizer that does not require additional parameters other than $T$ (which is, as we will show in the next section, rather natural), then the only knowledge that \algnameshort requires on the environment is the time horizon $T$. 
Thus, in principle, our algorithm requires knowledge of $T$ only. In practice, this property ensures that no tuning must happen.
\end{remark}

\subsection{Theoretical Guarantees of \algname}
As customary in the literature of PS MABs, we introduce a technical assumption regarding the length of any epoch ensuring that exploration is frequent enough to detect for every action.
\begin{assumption}
\label{ass:length}
    For every epoch $j \in [\Upsilon]$, let $\widetilde{\delta}_{min}^{(j)} \coloneqq \min\{\delta_{min}^{(j-1)}, \delta_{min}^{(j)}\}$,
    and let $|E_j|$ be its length and $L_j \coloneqq 6 (236)^\frac{1+\epsilon}{\epsilon}v^\frac{1}{\epsilon}\frac{\log\left(\log\left(1/{{\widetilde{\delta}_{min}^{(j)
}}}\right)\right)+\log(T)}{(\widetilde{\delta}_{min}^{(j)
})^{\frac{1+\epsilon}{\epsilon}}}$.
    The learner can select $\eta$ such that, for every $j \in [\Upsilon]$, it holds that $|E_j|\ge 2 n_{min} + 2\left\lceil L_j {K}/{\eta}\right\rceil$.
\end{assumption}
This assumption ensures that proper learning can be performed in such an environment
Indeed, we enforce that every epoch $j \in [\Upsilon]$ is large enough so that, due to the forced exploration only, the algorithm chooses every action at least $L_j$ times. This kind of assumption is ubiquitous in the piecewise-stationary bandits literature. Notable examples include Assumptions 4 and 7 in \citep{besson2022efficient}, Assumptions 1 and 2 in \citep{cao2019nearly} and Assumption 1 in \citep{liu2018change}. Some are equivalent to ours, while others are neither weaker nor stronger. Alternative assumptions, such as the monotonicity of the mean change, also allow for theoretical tractability, \emph{e.g.}, Assumption 1 in \citep{seznec2020single}, which forces expected rewards to evolve in a decreasing manner. Note that Assumption \ref{ass:length} is a technical assumption aimed at the theoretical analysis of the algorithm. Algorithm \ref{alg:alg} can operate regardless of this assumption, as shown in Section \ref{sec:experiments}. We are now ready to present our main result.
\begin{restatable}[Regret Upper Bound of \texttt{R-CPD-UCB}]{theorem}{regretBound}
  \label{thr:ub}  
  Under Assumption \ref{ass:length}, \algnameshort suffers an expected cumulative regret bounded as:
  {\thinmuskip=1mu
\medmuskip=1mu \thickmuskip=1mu
  \begin{align}
      &\mathbb{E}[R^{\pi^{\text{\emph{\algnameshort}}}}(T)] \le \mathcal{O}\Bigg(  \underbrace{\sum_{j=1}^{\Upsilon} \frac{v^\frac{1}{\epsilon}\log(T)}{{(\widetilde{\delta}_{min}^{(j)
})}^{\frac{1+\epsilon}{\epsilon}}}\left\lceil\frac{K}{\eta}\right\rceil\Delta_{max}^{(j)}}_{\text{\emph{\textcolor{blue}{(\texttt{A}) Detection Delay Contribution}}}} + \underbrace{\sum_{j=1}^{\Upsilon}\mathbb{E}[R^{{\pi_s}}(|E_j|)]}_{\text{\emph{\textcolor{red}{(\texttt{B}) Stationary Policy Regret}}}} + \underbrace{\eta \sum_{j=1}^\Upsilon |E_j|\Delta_{max}^{(j)}}_{\text{\emph{\textcolor{green!50!black}{(\texttt{C}) Uniform Exploration}}}} \Bigg). \label{eq:regret_ub_cpd_ucb}
  \end{align}}
\end{restatable}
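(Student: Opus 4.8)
The plan is an epoch-by-epoch decomposition in which, on a high-probability event, \algnameshort reduces to running the stationary sub-routine $\pi_s$ on each segment, paying extra only for the detection delay at each breakpoint, for the cyclic forced exploration, and for a lower-order false-alarm term. I would first fix the event $\mathcal{G}$ on which, simultaneously for all $j\in[\Upsilon]$: \textbf{(a)} some detector opened after the breakpoint starting $E_j$ fires within $D_j \coloneqq \mathcal{O}\!\big(v^{1/\epsilon}\log(T)\,(\widetilde{\delta}_{min}^{(j)})^{-(1+\epsilon)/\epsilon}\big)\lceil K/\eta\rceil$ rounds of that breakpoint; and \textbf{(b)} no detector fires strictly before the breakpoint it should catch, nor inside a stationary stretch. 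For \textbf{(a)} I apply Proposition~\ref{prop:detection_delay}(i) to the arm attaining $\max_i\delta_i^{(j)}\ge\widetilde{\delta}_{min}^{(j)}$; its hypothesis $w(t_c,P_0,\gamma)\le\delta/2$ is exactly what Assumption~\ref{ass:length} buys, since the assumed length of $E_{j-1}$ forces the detector opened at the previous reset to gather $\Omega\big(v^{1/\epsilon}\log(T)(\widetilde{\delta}_{min}^{(j)})^{-(1+\epsilon)/\epsilon}\big)$ i.i.d.\ pre-change samples of \emph{every} arm before the breakpoint (here the cyclic exploration is essential, as $\pi_s$ might otherwise starve the arm about to change; the $\min$ over consecutive changes in $\widetilde{\delta}_{min}^{(j)}$ is what reconciles the delay incurred entering $E_j$ with the sample budget needed to detect the breakpoint leaving $E_j$). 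For \textbf{(b)} I use Proposition~\ref{prop:detection_delay}(ii) ($\le 14/T$ of firing early, per breakpoint–arm pair) together with Theorem~\ref{thr:fcs-detector}(i) with $\gamma=2/T^3$ (a spurious firing in a stationary stretch forces some confidence sequence to fail, which happens with probability $\le\gamma$ each). A union bound over $\le\Upsilon$ breakpoints and $\le K$ arms gives $\mathbb{P}(\mathcal{G}^c)=\mathcal{O}(\Upsilon K/T)$.

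\textbf{Step 2 (regret on $\mathcal{G}$).} On $\mathcal{G}$ the resets of Algorithm~\ref{alg:alg} are in bijection with the true breakpoints, the $j$-th reset falling within $D_j$ rounds of $t_c^{(j-1)}$. Fixing $j$, I split $E_j$ into three parts. \emph{(A) Detection-delay prefix}: the $\le D_j$ rounds from $t_c^{(j-1)}$ to the $j$-th reset; every instantaneous regret there is $\le\Delta_{max}^{(j)}$, for a total $\le D_j\Delta_{max}^{(j)}$, i.e.\ term~(\texttt{A}). \emph{(C) Forced-exploration rounds}: at most $K\lceil\eta|E_j|/K\rceil\le\eta|E_j|+K$ of them in $E_j$, each of cost $\le\Delta_{max}^{(j)}$, giving term~(\texttt{C}) plus a lower-order $K\Delta_{max}^{(j)}$. \emph{(B) Remaining rounds}, in which $\pi_s$ is in control: after the $j$-th reset (which lies inside part (A)) the history feeding $\pi_s$ consists only of fresh i.i.d.\ samples from the single instance $\boldsymbol{\nu}^{(j)}$, $\pi_s$ is invoked on at most $|E_j|$ such rounds, and the interleaved forced-exploration samples only help an optimistic sub-routine such as \texttt{Robust-UCB}; hence the expected regret over part (B) of $E_j$ is at most $\mathbb{E}[R^{\pi_s}(|E_j|)]$, i.e.\ term~(\texttt{B}).

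\textbf{Step 3 (assembling).} Summing parts (A), (B), (C) over $j\in[\Upsilon]$, then adding the bad-event cost $\mathbb{P}(\mathcal{G}^c)\,T\max_j\Delta_{max}^{(j)}=\mathcal{O}(\Upsilon K\max_j\Delta_{max}^{(j)})$ and the leftover warm-up/ceiling terms $\mathcal{O}\big(\sum_j(n_{min}+K)\lceil K/\eta\rceil\Delta_{max}^{(j)}\big)$ — all of the same form as, or dominated by, term~(\texttt{A}) — yields the claimed inequality.

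\textbf{Anticipated obstacle.} The delicate point is part (B): $\mathcal{G}$ is a global event depending on the data of all epochs, so conditioning on it could bias the rewards $\pi_s$ observes inside $E_j$. I would make this rigorous with a restart / strong-Markov argument: on $\mathcal{G}$ the post-reset run inside $E_j$ is distributionally identical to starting $\pi_s$ (with interleaved cyclic exploration) from scratch on the stationary instance $\boldsymbol{\nu}^{(j)}$, so that $\mathbb{E}\big[(\text{part-(B) regret})\,\mathbbm{1}_{\mathcal{G}}\big]\le\mathbb{E}[R^{\pi_s}(|E_j|)]$ using $\mathbbm{1}_{\mathcal{G}}\le 1$ and monotonicity of the sub-routine's regret in its horizon. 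The remaining, purely bookkeeping, difficulty is checking that the explicit constants in Assumption~\ref{ass:length} ($6(236)^{(1+\epsilon)/\epsilon}$ and $n_{min}=68\log(T^{(1+\epsilon)/\epsilon})$) are precisely the ones obtained when the hypothesis $w(t_c,P_0,\gamma)\le\delta/2$ of Proposition~\ref{prop:detection_delay} is unwound into a pre-change sample-count requirement.
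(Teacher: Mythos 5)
Your proposal is correct and follows essentially the same route as the paper: the paper's proof conditions on the same good event (its Lemma~\ref{lemma:best_event}, with $\mathbb{P}(\mathcal{G}_T^C)\le 15K\Upsilon/T$ obtained exactly as in your Step 1 via Proposition~\ref{prop:detection_delay} and a union bound over arms and breakpoints), then peels off epochs recursively to obtain the same three contributions (delay $\lceil L_j K/\eta\rceil\Delta_{max}^{(j)}$, stationary regret $\mathbb{E}[R^{\pi_s}(|E_j|)]$, and forced exploration $\eta|E_j|\Delta_{max}^{(j)}$), before adding the $\mathcal{O}(K\Upsilon)$ bad-event cost. Your explicit treatment of the conditioning issue in part (B) is in fact more careful than the paper, which drops the conditioning on $\mathcal{G}_T$ from the $\mathbb{E}[R^{\pi_s}(\cdot)]$ term without comment.
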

The regret can be decomposed into three  contributions due to: the detection delay (part \textcolor{blue}{(A)}), the regret-per-epoch of the stationary policy (part \textcolor{red}{(B)}), and the rounds of uniform exploration (part \textcolor{green!50!black}{(C)}).

\textbf{Uniform Exploration Trade-off.}~~The uniform exploration parameter $\eta$ appears in both \textcolor{blue}{(A)} and \textcolor{green!50!black}{(C)}. Setting aside part \textcolor{red}{(B)}, it is clear that $\eta$ creates a trade-off between these two: the larger $\eta$ is, the quicker the algorithm can detect a change, and the smaller is \textcolor{blue}{(A)}; on the other hand, excessive uniform exploration inflates the regret of \algnameshort and the contribution from  \textcolor{green!50!black}{(C)}. Finding the optimal value for $\eta$ would require extensive prior knowledge, which is, in general, not available. A good trade-off is to set $\eta = \sqrt{\Upsilon/T}$, which impose both \textcolor{blue}{(A)} and \textcolor{green!50!black}{(C)} to be $\widetilde{\mathcal{O}}(\sqrt{\Upsilon T})$. However, it is possible to define a forced exploration strategy that does not require any knowledge of $\Upsilon$, making the algorithm more versatile while keeping the same order of performance. In particular, we can leverage the methodology developed in \cite{besson2022efficient} and obtain the following result.
\begin{restatable}{corollary}{choosingeta}
  \label{thr:choosingeta} 
  Let $\{\eta_j\}_{j\in \mathbb{N}}$ where $\eta_j = \eta_0 \sqrt{jK\log(T)/T}$ for some $\eta_0 > 0$ be an increasing sequence. \algnameshort using $\eta_{j+1}$ after the $j$-th detection satisfies:
  \begin{align}
  \text{\textcolor{blue}{(A)}} &\le \frac{v^\frac{1}{\epsilon}\sqrt{K\Upsilon T \log(T)}}{\eta_0 {{\delta}_{min}}^{\frac{1+\epsilon}{\epsilon}}}\Delta_{max},  \qquad 
  \text{\textcolor{green!50!black}{(C)}} \le \eta_0\sqrt{K(\Upsilon+1) T \log(T)}\Delta_{max}. \label{eq:choosingeta_2}
  \end{align}
\end{restatable}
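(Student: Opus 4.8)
The plan is to apply the fixed-rate regret decomposition of Theorem~\ref{thr:ub} ``epoch by epoch'' with the time-varying exploration rate, and then to sum the three resulting terms; no new concentration argument is needed beyond Proposition~\ref{prop:detection_delay}. First I would observe that the proof of Theorem~\ref{thr:ub} is local: after each detection the algorithm resets, so the actions played, the detector, and the stationary sub-routine inside epoch~$j$ depend only on the samples gathered since the previous reset and on the forced-exploration frequency active during that epoch. Hence, running the increasing schedule of \cite{besson2022efficient} --- which plays with rate $\eta_{j}$ throughout the $j$-th epoch --- affects each per-epoch estimate only through the substitution $\eta\mapsto\eta_j$, provided Assumption~\ref{ass:length} is read with the epoch-dependent rate, so that epoch~$j$ is long enough ($|E_j|\ge 2n_{min}+2\lceil L_j K/\eta_j\rceil$) to collect $L_j$ forced pulls of each arm before the detector must fire. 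Under this reading Theorem~\ref{thr:ub} yields
\[
\mathrm{(A)}=\sum_{j}\frac{v^{1/\epsilon}\log(T)}{(\widetilde{\delta}_{min}^{(j)})^{(1+\epsilon)/\epsilon}}\Big\lceil\frac{K}{\eta_j}\Big\rceil\Delta_{max}^{(j)},\qquad
\mathrm{(C)}=\sum_{j}\eta_j|E_j|\Delta_{max}^{(j)},
\]
with $\mathrm{(B)}$ untouched.

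The remaining work is elementary series arithmetic. For $\mathrm{(C)}$, since $\eta_j=\eta_0\sqrt{jK\log(T)/T}$ is nondecreasing in $j$ and there are at most $\Upsilon+1$ epochs, I would bound $\eta_j\le\eta_0\sqrt{(\Upsilon+1)K\log(T)/T}$ and $\Delta_{max}^{(j)}\le\Delta_{max}$, factor both out, and use $\sum_j|E_j|=T$, obtaining $\mathrm{(C)}\le\eta_0\sqrt{K(\Upsilon+1)T\log(T)}\,\Delta_{max}$. For $\mathrm{(A)}$, I would use $\widetilde{\delta}_{min}^{(j)}\ge\delta_{min}$ and $\Delta_{max}^{(j)}\le\Delta_{max}$ to pull out $v^{1/\epsilon}\log(T)\,\Delta_{max}\,\delta_{min}^{-(1+\epsilon)/\epsilon}$, leaving $\sum_j\lceil K/\eta_j\rceil$; since $K/\eta_j=\eta_0^{-1}\sqrt{KT/(j\log T)}\ge 1$ we have $\lceil K/\eta_j\rceil\le 2K/\eta_j$, so $\sum_{j=1}^{\Upsilon}\lceil K/\eta_j\rceil\le \tfrac{2}{\eta_0}\sqrt{KT/\log(T)}\sum_{j=1}^{\Upsilon}j^{-1/2}$, and $\sum_{j=1}^{\Upsilon}j^{-1/2}\le 2\sqrt{\Upsilon}$ gives $\mathrm{(A)}\le \tfrac{4\, v^{1/\epsilon}\sqrt{K\Upsilon T\log(T)}}{\eta_0\,\delta_{min}^{(1+\epsilon)/\epsilon}}\,\Delta_{max}$, i.e.\ the claimed rate up to a universal constant absorbed into the $\mathcal{O}(\cdot)$ of Theorem~\ref{thr:ub}.

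The one genuinely delicate point is the reduction in the first paragraph: one must verify that the reset-based analysis of Theorem~\ref{thr:ub} truly localizes to epochs when the exploration frequency varies, so that epoch~$j$ contributes precisely its bound evaluated at $\eta_j$ and the schedule does not couple distinct epochs. Concretely, I expect to revisit each place where Proposition~\ref{prop:detection_delay} is invoked inside the proof of Theorem~\ref{thr:ub}, check that it is applied with the correct epoch-dependent forced-exploration period $\lfloor K/\eta_j\rfloor$, and confirm that the high-probability accounting for false alarms and missed detections already present in Theorem~\ref{thr:ub} still closes when $\eta$ is advanced at every reset --- which is exactly the step that follows the methodology of \cite{besson2022efficient}. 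Everything after that reduction is the arithmetic above.
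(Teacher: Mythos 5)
Your proposal is correct and follows essentially the same route as the paper: the authors likewise assert that the proof of Theorem~\ref{thr:ub} goes through verbatim with $\eta$ replaced by the per-epoch schedule $\{\eta_j\}$ (conditioning on $\mathcal{G}_T$ so there are exactly $\Upsilon$ restarts), then bound \textcolor{green!50!black}{(C)} via monotonicity $\eta_j\le\eta_{\Upsilon+1}$ together with $\sum_j|E_j|=T$, and \textcolor{blue}{(A)} via $\widetilde{\delta}_{min}^{(j)}\ge\delta_{min}$, $\Delta_{max}^{(j)}\le\Delta_{max}$, and $\sum_{j=1}^{\Upsilon}\eta_j^{-1}=\eta_0^{-1}\sqrt{T/(K\log T)}\sum_j j^{-1/2}\lesssim\eta_0^{-1}\sqrt{\Upsilon T/(K\log T)}$. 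Your extra care about reading Assumption~\ref{ass:length} with the epoch-dependent rate and handling the ceiling is a harmless refinement of the same argument.
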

Note that, if ${\delta}_{min}$ is known, setting $\eta_0 = \delta_{min}^{-\frac{1+\epsilon}{2\epsilon}}$ can further reduce the regret bound. 

\textbf{Choosing $\pi_s$.}~~ 
The choice of the inner regret minimizer $\pi_s$ determines the magnitude of part \textcolor{red}{(B)}. The best choice is to select a policy that has a regret upper bound matching the known lower bound of $\Omega(K^\frac{\epsilon}{1+\epsilon}T^\frac{1}{1+\epsilon})$. We can instantiate \algnameshort using the \texttt{Robust UCB} policy with \textit{median-of-means estimator} from (\citealp{bubeck2013bandits}, Section 2.2). As a result, we get the following bounds.
{\thinmuskip=1mu
\medmuskip=1mu 
\thickmuskip=1mu\begin{restatable}{corollary}{regretBoundInstance}
  \label{thr:ub_mom} 
    Let $\pi_s$ be the \texttt{Robust UCB} policy with median-of-means estimator (\citealp{bubeck2013bandits}, Section 2.2). Under Assumption \ref{ass:length}, \algnameshort suffers an expected cumulative regret bounded as:
    \begin{align}
      \mathbb{E}[&R^{\pi^{\text{\emph{\algnameshort}}}}(T)] \le \mathcal{O}\Bigg( \text{\emph{\textcolor{blue}{(\texttt{A})}}} + \underbrace{\sum_{j=1}^{\Upsilon}\sum_{i:\Delta_i^{(j)}>0} \frac{v^\frac{1}{\epsilon}\log \left(|E_j|\right)}{(\Delta_i^{(j)})^\frac{1}{\epsilon}}}_{{\text{\makebox[0pt]{\emph{\textcolor{red}{(\texttt{B}$_1$) Robust UCB Regret}}  \text{\emph{\textcolor{red}{(Instance Dependent)}}} }}}} + \text{ \emph{\textcolor{green!50!black}{(\texttt{C})}}} \Bigg). \label{eq:regret_mom}
  \end{align}
  Moreover, if $\log(|E_j|) \ge \frac{5(\Delta_{max}^{(j)})^\frac{1+\epsilon}{\epsilon}}{2v^\frac{1}{\epsilon}}$ for every $j\in[\Upsilon]$, we have:
  \begin{equation}\label{eq:regret_instance_indep}
      \mathbb{E}[R^{\pi^{\text{\emph{\algnameshort}}}}(T)] \le \widetilde{\mathcal{O}}( \text{\emph{\textcolor{blue}{(\texttt{A})}}} + \underbrace{\textcolor{white}{\Bigg|}(K\Upsilon)^\frac{\epsilon}{1+\epsilon}(vT)^\frac{1}{1+\epsilon}}_{{\text{\makebox[0pt]{\emph{\textcolor{red}{(\texttt{B}$_2$) Robust UCB Regret}}  \text{\emph{\textcolor{red}{(Instance Independent)}}} }}}} + \text{ \emph{\textcolor{green!50!black}{(\texttt{C})}}} ).
  \end{equation}
\end{restatable}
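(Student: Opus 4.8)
The plan is to obtain both displays as immediate specializations of Theorem~\ref{thr:ub}. That theorem already decomposes the regret into the three separable pieces (A), (B) $=\sum_{j}\mathbb{E}[R^{\pi_s}(|E_j|)]$, and (C), and since (A) and (C) do not depend on the choice of $\pi_s$, the entire task reduces to bounding $\sum_{j}\mathbb{E}[R^{\pi_s}(|E_j|)]$ when $\pi_s$ is Robust UCB with the median-of-means estimator, run from scratch on each stationary segment of length $|E_j|$ with the true gaps $\{\Delta_i^{(j)}\}_{i\in[K]}$.

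For the instance-dependent bound \eqref{eq:regret_mom}, I would simply invoke the known stationary guarantee for Robust UCB with median of means (\citealp{bubeck2013bandits}, Section~2.2): over a horizon $n$, each suboptimal arm $i$ is pulled in expectation $\mathcal{O}\big(v^{1/\epsilon}\log n/(\Delta_i)^{(1+\epsilon)/\epsilon}\big)$ times, up to a lower-order additive constant. Multiplying by $\Delta_i$ and summing over $i$ gives $\mathbb{E}[R^{\pi_s}(n)] \le \mathcal{O}\big(\sum_{i:\Delta_i>0} v^{1/\epsilon}\log n/(\Delta_i)^{1/\epsilon}\big)$, plus an $\mathcal{O}(\sum_i \Delta_i)$ remainder that is dominated by (C)/lower-order terms. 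Taking $n=|E_j|$ and summing over the epochs reproduces term (B$_1$) verbatim, so \eqref{eq:regret_mom} follows by adding back (A) and (C).

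For the instance-independent bound \eqref{eq:regret_instance_indep}, I would run the standard gap-threshold (peeling) argument inside each epoch. Fix a threshold $\Delta^{\star}_j$: arms with $\Delta_i^{(j)}\le \Delta^{\star}_j$ jointly contribute at most $\Delta^{\star}_j|E_j|$ to the segment regret, since their pull counts sum to at most $|E_j|$, while arms with $\Delta_i^{(j)}>\Delta^{\star}_j$ contribute at most $\mathcal{O}\big(K\,v^{1/\epsilon}\log|E_j|/(\Delta^{\star}_j)^{1/\epsilon}\big)$ by the instance-dependent bound. Balancing the two by choosing $(\Delta^{\star}_j)^{(1+\epsilon)/\epsilon}\asymp K v^{1/\epsilon}\log|E_j|/|E_j|$ yields $\mathbb{E}[R^{\pi_s}(|E_j|)] \le \widetilde{\mathcal{O}}\big(K^{\epsilon/(1+\epsilon)}(v|E_j|)^{1/(1+\epsilon)}\big)$; the hypothesis $\log|E_j|\ge \tfrac{5}{2}(\Delta_{max}^{(j)})^{(1+\epsilon)/\epsilon}/v^{1/\epsilon}$ is exactly the regime condition that makes this refined bound the operative one (i.e., that the logarithmic regime beats the trivial $\Delta_{max}^{(j)}|E_j|$ bound and that the additive constants stay lower order). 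Finally I would sum over the (at most $\Upsilon+1$) epochs: since $x\mapsto x^{1/(1+\epsilon)}$ is concave because $1+\epsilon\le 2$, Hölder's inequality gives $\sum_j |E_j|^{1/(1+\epsilon)} \le (\Upsilon+1)^{\epsilon/(1+\epsilon)}\big(\sum_j|E_j|\big)^{1/(1+\epsilon)} = (\Upsilon+1)^{\epsilon/(1+\epsilon)}T^{1/(1+\epsilon)}$, so $\sum_j \mathbb{E}[R^{\pi_s}(|E_j|)] \le \widetilde{\mathcal{O}}\big((K\Upsilon)^{\epsilon/(1+\epsilon)}(vT)^{1/(1+\epsilon)}\big)$, which is (B$_2$); adding (A) and (C) concludes.

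The only genuinely delicate point, and thus the step I expect to require the most care, is the bookkeeping around the additive/constant terms and the regime condition: one must check that the ``$+\,$constant'' pieces in the per-arm Robust-UCB pull bounds, once multiplied by the gaps and summed over both $i\in[K]$ and $j\in[\Upsilon]$, are lower order relative to (A), (C), and (B$_1$)/(B$_2$), and that the stated inequality on $\log|E_j|$ is precisely what is needed for the worst-case conversion to go through without extra terms. Everything else — substituting the known stationary bound, the threshold optimization, and the Hölder step across epochs — is routine.
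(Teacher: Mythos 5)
Your proposal is correct and follows essentially the same route as the paper: the paper obtains \eqref{eq:regret_mom} by plugging the instance-dependent bound of Theorem~3 of \cite{bubeck2013bandits} into term (B) of Theorem~\ref{thr:ub}, and \eqref{eq:regret_instance_indep} by invoking the distribution-free bound (Proposition~1 of \cite{bubeck2013bandits}) on each epoch and then summing via Jensen's inequality, $\sum_{j}K^{\epsilon/(1+\epsilon)}(v|E_j|)^{1/(1+\epsilon)}\le (K\Upsilon)^{\epsilon/(1+\epsilon)}(vT)^{1/(1+\epsilon)}$, which is exactly your H\"older step across epochs. The only cosmetic difference is that you re-derive the gap-threshold conversion that the paper simply cites.
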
}
Equation \eqref{eq:regret_mom} is a direct consequence of Theorem \ref{thr:ub} and Theorem 3 of \cite{bubeck2013bandits}. Equation \eqref{eq:regret_instance_indep} follows from Theorem \ref{thr:ub}, Proposition 1 of \cite{bubeck2013bandits}, and Jensen's inequality. \texttt{Robust UCB} enjoys both instance-dependent and instance-independent guarantees: part \textcolor{red}{(B$_1$)} depends on the sub-optimality gaps $\Delta_i^{(j)}$ and the individual lengths of the epochs, while part \textcolor{red}{(B$_2$)} does not, as it accounts for a worst-case scenario of the sub-optimality gaps. 
We can now combine all and get the following. 
\begin{restatable}{corollary}{regretBoundFinal}
  \label{thr:ub_final} 
    Let $\pi_s$ be the \texttt{Robust UCB} policy with median-of-means estimator from (\citealp{bubeck2013bandits}, Section 2.2). Let $\{\eta_j\}_{j\in \mathbb{N}}$ where $\eta_j = \eta_0 \sqrt{jK\log(T)/T}$ for some $\eta_0 > 0$.  Under Assumption \ref{ass:length}, \algnameshort using $\eta_{j+1}$ after the $j$-th detection suffers an expected cumulative regret bounded as:
    \begin{align}
      \mathbb{E}[&R^{\pi^{\text{\emph{\algnameshort}}}}(T)] \le  \mathcal{O}\Bigg(  \frac{v^\frac{1}{\epsilon}\sqrt{K\Upsilon T \log(T)}}{\eta_0 {{\delta}_{min}}^{\frac{1+\epsilon}{\epsilon}}}\Delta_{max}+ \frac{K\Upsilon v^\frac{1}{\epsilon}\log \left(T/\Upsilon\right)}{\Delta_{min}^\frac{1}{\epsilon}} \Bigg). \label{eq:final_regret_id}
  \end{align}%
  Moreover, if $\log(|E_j|) \ge 3(\Delta_{max}^{(j)})^\frac{1+\epsilon}{\epsilon}v^{-\frac{1}{\epsilon}}$ for every $j\in[\Upsilon]$, and $\delta_{min}^\frac{1+\epsilon}{\epsilon} \ge v^\frac{1}{\epsilon(1+\epsilon)}(\Upsilon K / T)^\frac{1-\epsilon}{2(1+\epsilon)}\sqrt{\log(T)}$, we have:
  \begin{align}
      &\mathbb{E}[R^{\pi^{\text{\emph{\algnameshort}}}}(T)] \le \widetilde{\mathcal{O}}( \textcolor{white}{\Bigg|}(K\Upsilon)^\frac{\epsilon}{1+\epsilon}(vT)^\frac{1}{1+\epsilon} ). \label{eq:final_regret}
  \end{align}
\end{restatable}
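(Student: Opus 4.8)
The statement is an assembly of three results already established: the generic three-term regret decomposition of Theorem~\ref{thr:ub}, the bounds on terms \textcolor{blue}{(\texttt{A})} and \textcolor{green!50!black}{(\texttt{C})} for the adaptive exploration schedule of Corollary~\ref{thr:choosingeta}, and the control of the stationary-policy term \textcolor{red}{(\texttt{B})} obtained by instantiating $\pi_s$ with \texttt{Robust UCB} in Corollary~\ref{thr:ub_mom}. The plan is: (i) start from the bound~\eqref{eq:regret_ub_cpd_ucb}, observing that its derivation is additive over epochs and therefore composes verbatim with a time-varying exploration coefficient, since each detection resets \algnameshort and the epoch $E_j$ simply uses the current value $\eta_j$; hence with $\eta_{j+1}=\eta_0\sqrt{(j+1)K\log(T)/T}$ the contributions \textcolor{blue}{(\texttt{A})} and \textcolor{green!50!black}{(\texttt{C})} are bounded as in~\eqref{eq:choosingeta_2}; (ii) replace \textcolor{red}{(\texttt{B})} either by the instance-dependent bound \textcolor{red}{(\texttt{B}$_1$)} or by the instance-independent bound \textcolor{red}{(\texttt{B}$_2$)} of Corollary~\ref{thr:ub_mom}; (iii) simplify and absorb the lower-order contributions.

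For the instance-dependent bound~\eqref{eq:final_regret_id}, I take the decomposition $\text{\textcolor{blue}{(\texttt{A})}}+\text{\textcolor{red}{(\texttt{B}$_1$)}}+\text{\textcolor{green!50!black}{(\texttt{C})}}$ of~\eqref{eq:regret_mom}, plug in the schedule bounds from~\eqref{eq:choosingeta_2}, and upper bound $\text{\textcolor{red}{(\texttt{B}$_1$)}}=\sum_{j}\sum_{i:\Delta_i^{(j)}>0}v^{1/\epsilon}\log(|E_j|)/(\Delta_i^{(j)})^{1/\epsilon}$ by using $\Delta_i^{(j)}\ge\Delta_{min}$ and at most $K$ terms in the inner sum, then $\sum_j\log|E_j|\le\Upsilon\log(T/\Upsilon)$ (concavity of $\log$ and $\sum_j|E_j|\le T$); this yields the term $K\Upsilon v^{1/\epsilon}\log(T/\Upsilon)/\Delta_{min}^{1/\epsilon}$. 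Finally, $\text{\textcolor{green!50!black}{(\texttt{C})}}\le 2\eta_0\sqrt{K\Upsilon T\log T}\,\Delta_{max}$ has the same $\sqrt{K\Upsilon T\log T}\,\Delta_{max}$ shape as $\text{\textcolor{blue}{(\texttt{A})}}= v^{1/\epsilon}\sqrt{K\Upsilon T\log T}\,\Delta_{max}/(\eta_0\delta_{min}^{(1+\epsilon)/\epsilon})$, the two differing only by the factor $\eta_0^{2}\delta_{min}^{(1+\epsilon)/\epsilon}/v^{1/\epsilon}$; choosing $\eta_0=O(v^{1/(2\epsilon)}\delta_{min}^{-(1+\epsilon)/(2\epsilon)})$ (e.g.\ $\eta_0=\delta_{min}^{-(1+\epsilon)/(2\epsilon)}$ when $\delta_{min}$ is known, cf.\ the remark after Corollary~\ref{thr:choosingeta}) makes \textcolor{green!50!black}{(\texttt{C})} dominated by \textcolor{blue}{(\texttt{A})}, which leaves exactly~\eqref{eq:final_regret_id}.

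For the instance-independent bound~\eqref{eq:final_regret}, I instead use $\text{\textcolor{red}{(\texttt{B}$_2$)}}=\widetilde{\mathcal{O}}((K\Upsilon)^{\epsilon/(1+\epsilon)}(vT)^{1/(1+\epsilon)})$ from~\eqref{eq:regret_instance_indep}, which applies because the hypothesis $\log|E_j|\ge 3(\Delta_{max}^{(j)})^{(1+\epsilon)/\epsilon}v^{-1/\epsilon}$ implies the condition $\log|E_j|\ge \tfrac{5}{2}(\Delta_{max}^{(j)})^{(1+\epsilon)/\epsilon}v^{-1/\epsilon}$ required there. It then remains to show \textcolor{blue}{(\texttt{A})} and \textcolor{green!50!black}{(\texttt{C})} do not dominate this rate. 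Substituting the second hypothesis $\delta_{min}^{(1+\epsilon)/\epsilon}\ge v^{1/(\epsilon(1+\epsilon))}(\Upsilon K/T)^{(1-\epsilon)/(2(1+\epsilon))}\sqrt{\log T}$ into the expression for \textcolor{blue}{(\texttt{A})} and collecting the exponents of $v$, $K\Upsilon$ and $T$ produces precisely $(K\Upsilon)^{\epsilon/(1+\epsilon)}(vT)^{1/(1+\epsilon)}$ up to the instance constant $\Delta_{max}/\eta_0$, hence $\text{\textcolor{blue}{(\texttt{A})}}=\widetilde{\mathcal{O}}((K\Upsilon)^{\epsilon/(1+\epsilon)}(vT)^{1/(1+\epsilon)})$; and $\text{\textcolor{green!50!black}{(\texttt{C})}}=\widetilde{\mathcal{O}}(\sqrt{K\Upsilon T})$ is dominated by the heavy-tailed rate whenever $\epsilon\le1$ and $T\gtrsim K\Upsilon$, the latter being forced by Assumption~\ref{ass:length} (each epoch contains $\Omega(K)$ rounds). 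Summing the three $\widetilde{\mathcal{O}}$ terms gives~\eqref{eq:final_regret}.

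The computations above are essentially mechanical; the only real obstacle is the exponent bookkeeping in the last step, i.e.\ checking that the two hypotheses on $\delta_{min}$ and on $\log|E_j|$ are exactly the thresholds at which the detection-delay term \textcolor{blue}{(\texttt{A})} and the stationary heavy-tailed term \textcolor{red}{(\texttt{B}$_2$)} balance against the minimax rate, with no residual dependence on $\Upsilon$, $K$ or $T$ hiding in $\widetilde{\mathcal{O}}$. A secondary point to verify is that the per-epoch analysis underlying Theorem~\ref{thr:ub} indeed composes with the time-varying schedule $\{\eta_j\}$ so that Corollary~\ref{thr:choosingeta} applies as stated, which is immediate since detections reset the algorithm and the regret bound is additive across epochs with the value of $\eta$ active in each.
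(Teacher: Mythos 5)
Your proposal is correct and follows essentially the same route as the paper: the paper's own proof likewise reruns Theorem~\ref{thr:ub} with the epoch-indexed sequence $\{\eta_j\}$, bounds $\sum_{j=1}^{\Upsilon}\eta_j^{-1}\le \eta_0^{-1}\sqrt{\Upsilon T/(K\log T)}$ to control \textcolor{blue}{(A)} (exactly the content of Corollary~\ref{thr:choosingeta}), and imports the \texttt{Robust UCB} bounds of Corollary~\ref{thr:ub_mom} for \textcolor{red}{(B)}; your exponent bookkeeping showing that the hypothesis on $\delta_{min}$ turns \textcolor{blue}{(A)} into $(K\Upsilon)^{\epsilon/(1+\epsilon)}(vT)^{1/(1+\epsilon)}$, and that $K\Upsilon\le T$ (forced by Assumption~\ref{ass:length}) makes \textcolor{green!50!black}{(C)} lower order, checks out. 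The one place you diverge is the treatment of \textcolor{green!50!black}{(C)} in \eqref{eq:final_regret_id}: you absorb it into \textcolor{blue}{(A)} by fixing $\eta_0 = O\bigl(v^{1/(2\epsilon)}\delta_{min}^{-(1+\epsilon)/(2\epsilon)}\bigr)$, which adds a constraint not present in the statement (where $\eta_0>0$ is arbitrary), whereas the paper's proof simply does not discuss \textcolor{green!50!black}{(C)} for that display. Your version is arguably the more careful one --- for arbitrary $\eta_0$ the right-hand side of \eqref{eq:final_regret_id} should carry the extra $\eta_0\sqrt{K\Upsilon T\log(T)}\,\Delta_{max}$ term --- but be aware that, as written, you are proving a mildly restricted variant of the stated corollary rather than the statement itself.
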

\vspace{-.3cm}
Equation \eqref{eq:final_regret_id}, depends on both the minimum mean change $\delta_{min}$, and the extreme sub-optimality gaps $\Delta_{min}$ and $\Delta_{max}$, along the whole trial. We consider this bound an instance-dependent guarantee over the performance of \algnameshort. Equation \eqref{eq:final_regret}, instead, does not contain any of these quantities. The second assumption fundamentally states that $\delta_{min}$ can be assumed to be a constant w.r.t. the other quantities, in particular $T$. In this case, an instance-independent bound can be obtained. Equation \eqref{eq:final_regret} matches, up to constants, the lower bound presented in Theorem \ref{thr:lb}. Thus, if we focus on the dependence on $T$, $\Upsilon$, $v$, and $K$ the performance guarantees of \algnameshort are nearly-optimal.
\section{Numerical Evaluation}
\label{sec:experiments} We now provide a numerical evaluation of \algname ($\pi_s$ chosen as \texttt{Robust UCB} with median-of-means estimator). We refer to Appendix \ref{apx:exp} for additional details and experimental campaigns.
\subsection{Casting Real-World Data to HTPS MABs}
We model a real-world scenario as an HTPS MAB and, then, we leverage a real dataset to generate an HTPS MAB instance on which \algnameshort is tested. 
~\textbf{Setting.}~~We consider the problem of profit maximization in financial trading. As pointed out by \cite{panahi2016model}, financial data exhibit heavy tails. A financial application of HT MABs is identifying the most profitable cryptocurrency among $K$ options. In fact, at the start of each day, an investor would like to invest a share of their money in the cryptocurrency with the highest closing price. This very same application has been studied, for example, in \cite{yu2018pure} and \cite{lee2022minimax}, both in the context of HT MABs. We use the same dataset (\href{https://www.kaggle.com/datasets/sudalairajkumar/cryptocurrencypricehistory?select=coin_Ethereum.csv}{Kaggle link}) employed in \cite{lee2022minimax}. In Figure \ref{fig:crypto_fit}, we report the closing prices of four selected currencies among the top ten by market capitalization, along with a piecewise-constant fit of the data that minimizes the squared error. We observe two things: first, the piecewise constant approximation is a better fit than any constant approximation (in the previous works on HT MABs, the reward from a given currency was always considered stationary); second, this approximation suffers a high error in certain segments where the stochastic fluctuations are really strong. Following the existing literature \cite{panahi2016model}, we fit the price distribution inside every segment with a Pareto distribution having its mean centered on the segment height, using $\epsilon < 1$ and $v=3$. Thus, the profit maximization problem in cryptocurrency trading can be treated as an HTPS MAB. ~\textbf{Results.}~~Starting from the piecewise-constant fit of the prices, we can build an HTPS MAB environment on which we test \algnameshort, together with \texttt{Sliding Window UCB} \cite{garivier2011upper} and \texttt{MR-APE} \cite{lee2022minimax}, which was already tested on the same dataset when assuming stationarity. In Figure \ref{fig:crypto_regret}, we report the cumulative regrets obtained by the three algorithms averaged over $20$ trials ($x$-axis is rescaled). \algnameshort performs better than the two competitors, as it is the only algorithm tackling both heavy-tailedness and non-stationarity of the setting.
\begin{figure}[t]
    \centering
    \includegraphics[width=\linewidth]{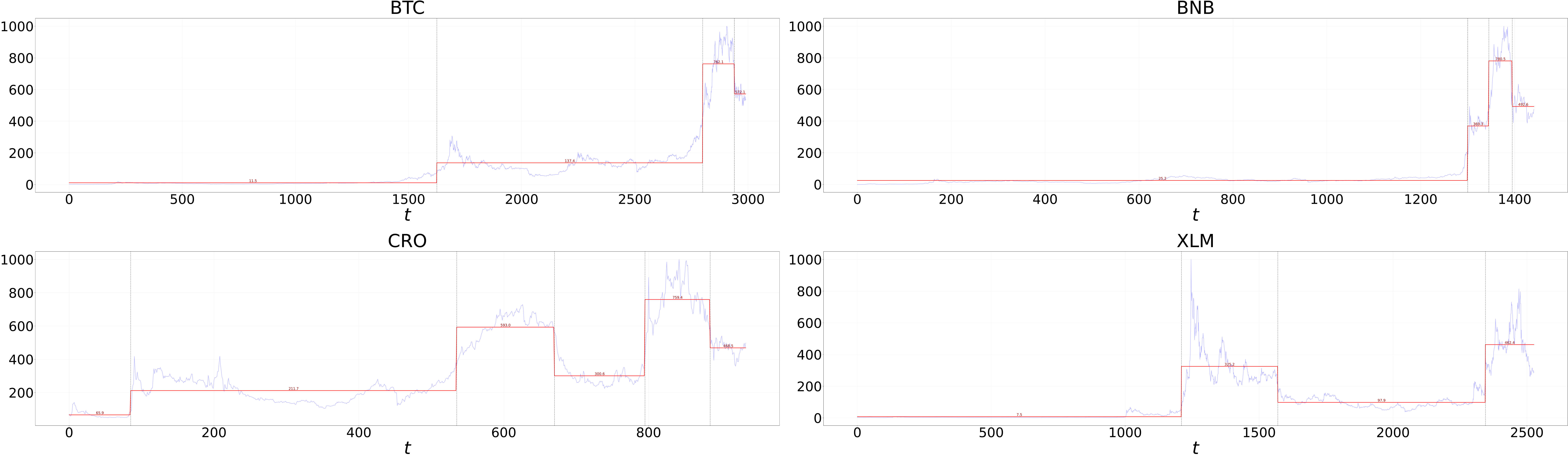}
    \caption{{Rescaled closing prices of four selected cryptocurrencies (blue) with a piecewise-constant approximation (red). Each time step is a day starting in April $2016$. Source: \href{https://www.kaggle.com/datasets/sudalairajkumar/cryptocurrencypricehistory?select=coin_Ethereum.csv}{Kaggle Dataset}.}}
    \label{fig:crypto_fit}
\end{figure}

\begin{figure}[!t]
  \centering
  \raisebox{.57cm}[0pt][0pt]{%
  \begin{minipage}[t]{0.32\textwidth}
    \centering
    \includegraphics[width=\linewidth]{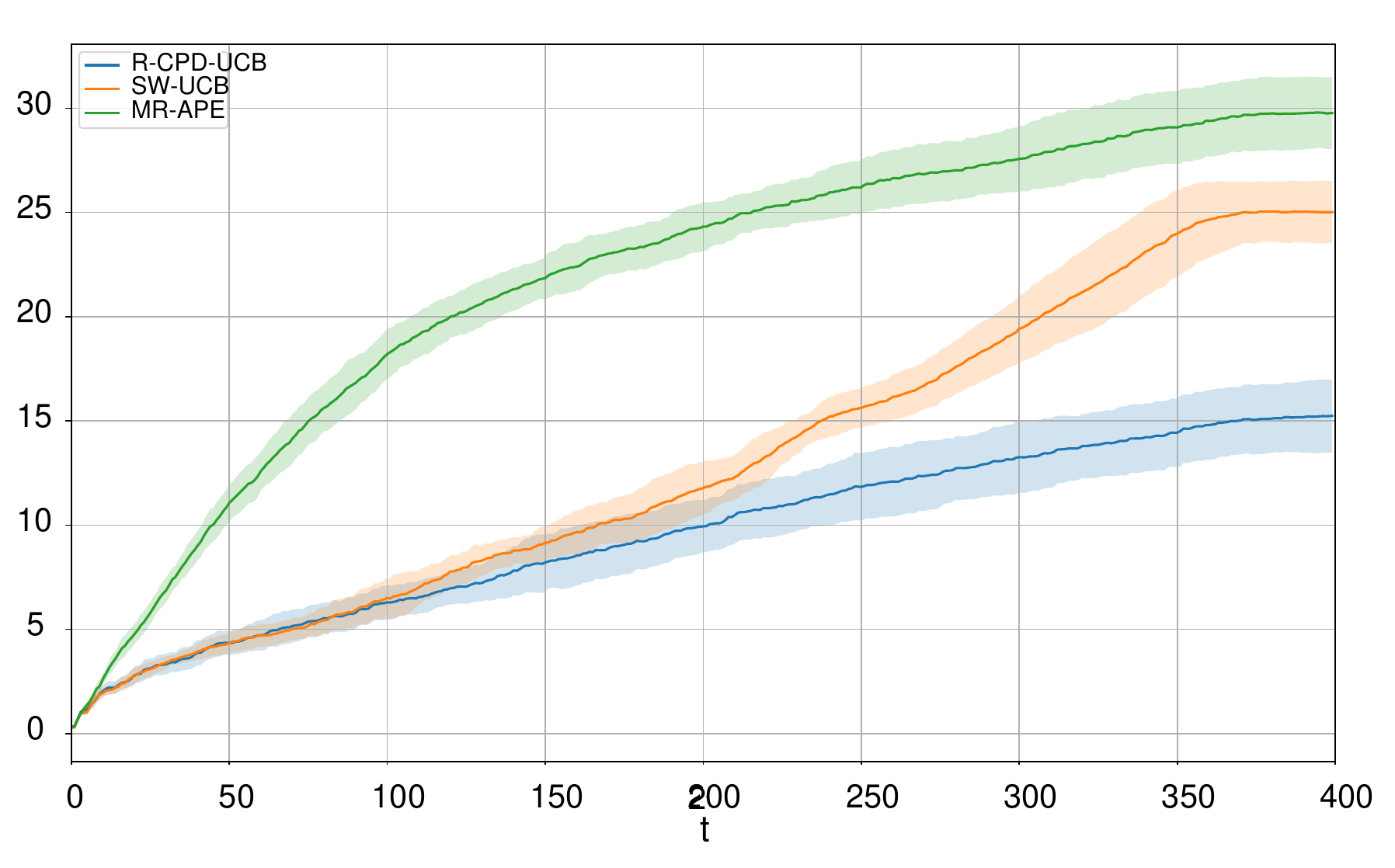} 
    \captionof{figure}{{Cumulative regrets on HTPS built from cryptocurrency dataset. $20$ trials, mean $\pm$ std.}}
    \label{fig:crypto_regret}
  \end{minipage}}
  \hfill
  \begin{minipage}[t]{0.64\textwidth}
    \centering
      \begin{subfigure}[t]{0.48\textwidth}
        \centering
        \includegraphics[width=\linewidth]{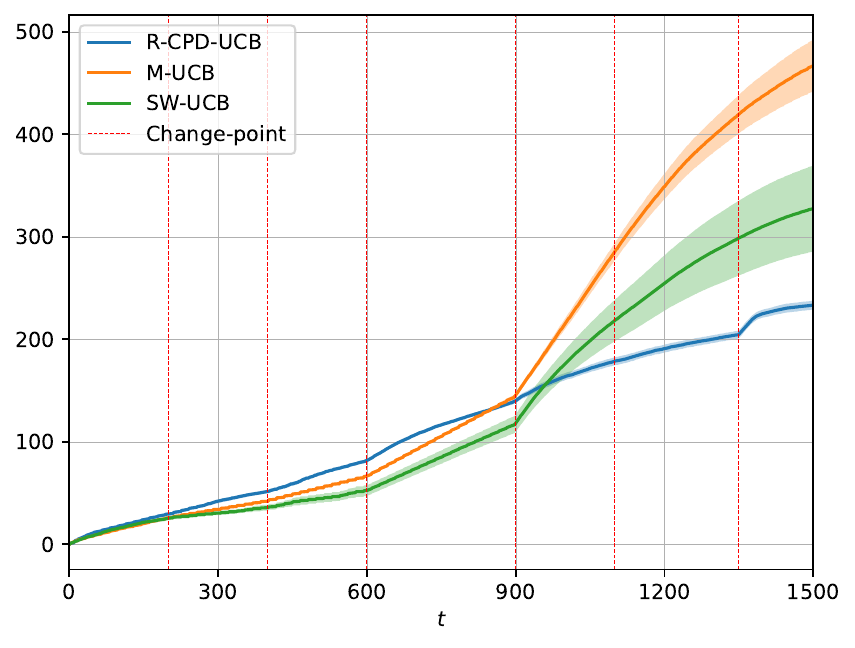}
        \caption{Gaussian rewards.}
        \label{fig:subA}
      \end{subfigure}
      \hfill
      \begin{subfigure}[t]{0.48\textwidth}
        \centering
        \includegraphics[width=\linewidth]{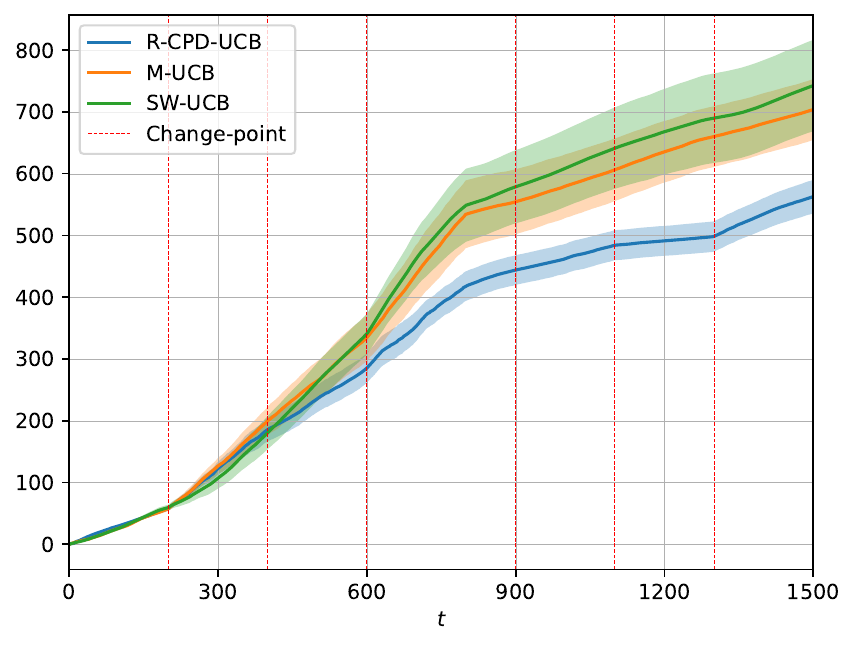}
        \caption{Pareto rewards.}
        \label{fig:subB}
      \end{subfigure}
    \captionof{figure}{{Cumulative regrets. $20$ trials, mean $\pm$ std.}}
    \label{fig:composite}
  \end{minipage}
\end{figure}

\subsection{Regret Minimization in Highly Non-Stationary Environments}
We now evaluate how \algnameshort behaves in highly dynamic scenarios where changes are close. \\ 
\textbf{Setting.}~~We compare \algnameshort with two of the most popular algorithms from the literature, \texttt{Monitored UCB} \cite{cao2019nearly} and \texttt{Sliding Window UCB} \cite{garivier2011upper}. We consider two PS MABs: Gaussian rewards with $\sigma=1$ and Pareto rewards with $\epsilon < \frac{1}{2}$. In both MABs, we have $K=3$, $T=1500$ and $\Upsilon = 6$. Also, we have $\delta_{min}=0$, \emph{i.e.}, and some actions may not change their means after a change point. However, at least one arm has its mean change, and the optimal action changes at least $4$ times. Interestingly, these instances violate Assumption \ref{ass:length}. ~\textbf{Results.}~~In Figure \ref{fig:results_high}, we report the cumulative regrets suffered by the considered algorithms. \algnameshort achieves, in both instances, a smaller cumulative regret than competitors. Moreover, it shows a smaller uncertainty and more stable performances across the trials, especially when rewards have infinite variance (Figure \ref{fig:pareto}). Interestingly, \algnameshort can outperform both \texttt{Monitored UCB} and \texttt{Sliding Window UCB} even when the rewards are Gaussian. This is because the change points are frequent and very close. Robust mean estimation using median-of-means stabilizes the algorithm's behavior in data-scarce regimes. Finally, we remark that Assumption \ref{ass:length} is violated by these two instances; however, \algnameshort performs well (and so is \texttt{Monitored UCB}, which relies on a similar hypothesis). This phenomenon was already observed in \cite{cao2019nearly}, and shows how Assumption \ref{ass:length}, in practice, is not very limiting.

\section{Conclusions}
In this work, we provided the first study on regret minimization in heavy-tailed piecewise-stationary bandits. We provided a lower bound on the performance of every algorithm and proposed \algname, a novel algorithm whose regret nearly matches the lower bound. We leverage novel advancements in the theory of change-point detection, building \texttt{Catoni-FCS-detector}, a general detection strategy suited for distributions with infinite variance. Finally, numerical evaluation shows that the performance of \algnameshort is solid when compared to existing baselines. An interesting future direction would be to study the HTPS MAB problem when $v$ and $\epsilon$ are unknown.

\clearpage
\bibliographystyle{plain}
\bibliography{bibliography}

\newpage
\appendix
\onecolumn
\section{Additional Related Works on Non-Stationary MABs}
\label{apx:related}
In this appendix, we discuss more in detail the related works on non-stationary MABs. 

\subsection{Piecewise-Stationary MABs}
The most common definition of piecewise-stationary MABs in the literature is the one introduced by \cite{yu2009piecewise}. In this work, the authors deal with the PS MAB problem as it is defined in this work, and consider both a scenario in which side-observations are available, and an agnostic scenario in which they are not, which corresponds to the one that we study in this work. In the latter scenario, they show that every algorithm must suffer at least $\Omega(\sqrt{T})$ regret. In \cite{garivier2011upper}, the authors analyze two algorithms to tackle the PS MAB problem, namely \texttt{Discounted UCB} (introduced in \cite{kocsis2006discounted}) and \texttt{Sliding Window UCB}. Contrary to ours, these algorithms don't rely on any CPD strategy but rather \textit{passively} adapt to the changes in the environment. In practice, \textit{actively} adapting algorithm, \emph{e.g.}, algorithm based on CPD strategies like ours, hence show better performances. The idea of actively adapt to changes first appeared in \cite{hartland2007change}. More recent works, such as \cite{liu2018change} and \cite{cao2019nearly}, paved the way for the analysis of actively adaptive algorithms, which were considered tougher to analyze from a theoretical perspective w.r.t. to their passive counterparts. Recently, with \cite{auer2019adaptively} and \cite{besson2022efficient}, there has been focus on removing the prior knowledge on $\Upsilon$ from the algorithms. In the former, the \texttt{AdSwitch} algorithm they propose does not require any additional assumptions, but it is not optimized for tractability or numerical efficiency. Indeed, as shown in \cite{besson2022efficient}, \texttt{AdSwitch} enjoys poor empirical performance.
In the latter, the authors propose an algorithm that performs well in practice and has tight theoretical guarantees without any need for $\Upsilon$ to be known beforehand, however they rely on an assumption which is nearly the same as ours. All of the aforementioned works don't account for the heavy-tailed setting, as their scope is restricted to rewards with bounded support or sub-gaussian. The only work that accounts for non-stationarity in heavy-tailed settings is \cite{bhatt2023piecewise}, where the authors consider a general framework to allow for more general risk measures (linear being the case considered here), and consider the same setup of piecewise-stationary bandits and heavy-tailed rewards, and establish upper and lower bounds on the regret under special assumptions on the risk measures and distributions. However, there are multiple reasons for why our approach is better suited in the regret-minimization scenario:
\begin{itemize}
    \item Assumption 1 (Stability): since their paper focuses on achieving strong regret guarantees with heavy-tailed rewards, the stability assumption plays a crucial role in the analysis, where the rate functions on the decay of the empirical and truncated distributions are assumed to be known. While the assumption is not strong in and of itself, the knowledge of these parameters play a crucial role in the change detection and regret minimization procedures, and also appear in the regret bounds. \algname, on the other hand, requires no knowledge of such functions, relies on novel analysis of Catoni estimators that do not necessitate truncations, and is simpler to run online in practice.
    \item The CPD routine used in \cite{bhatt2023piecewise} is based on a sliding window method requiring specification of both widow size and threshold. \algname is based on the newly developed CPD method based on Catoni estimator that runs online only with the same assumption on the distributions.
    \item The regret minimization algorithm in \cite{bhatt2023piecewise} uses a data-driven truncation of the rewards that depends on the policy, and the knowledge of the decay rate functions to compute the exploration bias for the arm index. Our work, on the other hand, requires no such methods and uses a simple combination of the novel Catoni-CPD and any policy suited for the stationary heavy-tailed regret minimization.
\end{itemize} 
For the most common case of linear risk/ regret in mean, \algname establishes stronger guarantees with the CPD procedure requiring weaker assumptions. Finally, it is easier to implement owing to not requiring distributional knowledge or thresholds.

\subsection{Bounded Variation and Monotonically Non-stationary MABs}
Another setting of interest is non-stationary MABs with \textit{bounded variations}. In this setting, the rewards' distributions changes are less restricted, and the focus moves from the number of changes to the total amount of change $V_T$. In \cite{besbes2014stochastic}, the authors propose \texttt{Rexp3}, an algorithm that leverages tools from the adversarial MAB problem to deal with non-stationarity in stochastic settings. The regret upper bound that they provide is in the order of $\mathcal{O}(V_T^\frac{1}{3}T^\frac{2}{3})$. Over the last years, there has been increasing interest in \textit{monotonically non-stationary} MABs, \emph{i.e.}, non-stationary MABs where the mean rewards are only allowed to decrease (rotting bandits, \cite{seznec2019rotting, seznec2020single}) or to increase (rising bandits, \cite{ metelli2022stochastic}), some works focus on both settings \cite{heidari2016tight, genalti2024graph}. The monotonicity assumptions substitutes the need for piecewise-stationarity, as it is a strong enough assumption to allow for strong theoretical characterizations. In such settings, regret bounds depend in general on the total variation of the distributions' means and instance-dependent-type of results are common in this literature. Moreover, the additional structure added by this assumption, put the accent on the difference between \textit{restless} bandits (a proper non-stationary setting) and \textit{rested} bandits, where the evolution of rewards depends on learner's actions rather than just time.

\section{Proofs}
\label{apx:proofs}
\regretLowerBound*
\begin{proof}
    The proof of this theorem combines techniques from Lemma 5 of \cite{seznec2020single}, Theorem 4 of \cite{genalti2024varepsilon}, and Theorem 6 from \cite{garivier2019explore}.

    Consider the following prototype of reward distribution, defined for $y\in \left(0, 1 \right)$ and $\Delta \in \left(0, 1 \right)$:
    \begin{equation*}
        \rho_y = \left(1-v^{-\frac{1}{\epsilon}}y^{\frac{1+\epsilon}{\epsilon}}\right)\delta_0 + \left(v^{-\frac{1}{\epsilon}}y^{\frac{1+\epsilon}{\epsilon}}\right)\delta_{v^{\frac{1}{\epsilon}}\Delta^{-\frac{1}{\epsilon}}}.
    \end{equation*}
    It is easy to verify that $\rho_y \in \mathcal{H}_{(1,\epsilon)}$ for every $y \in [0, \Delta]$.
    
    Consider a set of of instances belonging to $\mathcal{B}_{(v, \epsilon, \Upsilon)}$ indexed by a vector $\boldsymbol{i}^* \in [K]^\Upsilon$ in a way that, for every $j \in [\Upsilon]$ and every $t \in E_j$, we have 
    \begin{equation*}
    \nu_i^{(j)} = \begin{cases}
        \rho_{2^\frac{\epsilon}{1+\epsilon}\Delta},~~~&\text{if}~i= i_j^*\\
        \rho_\Delta,~~~~~~&\text{if}~i\neq i_j^* \\ 
    \end{cases}\,.
    \end{equation*}
    It follows that $\mu_{i_j^*}^{(j)} - \mu_{i}^{(j)} = \Delta$ for every $j \in [\Upsilon]$ and $i \neq i_j^*$.
    Let $|E_j| = \frac{T}{\Upsilon}$ for every $j \in [\Upsilon]$, assuming w.l.o.g. that $T$ is divisible by $\Upsilon$. Thus, all epochs are of the same length. For every fixed policy $\pi$, we write the average expected regret among the instances indexed by $\boldsymbol{i}^*$:
    \begin{align}
    \frac{1}{K^\Upsilon}\sum_{\boldsymbol{i}^* \in [K]^\Upsilon}\mathbb{E}_{\boldsymbol{i}^*}\left[R^\pi(T)\right] &= \frac{1}{K^\Upsilon}\sum_{\boldsymbol{i}^* \in [K]^\Upsilon}\sum_{j=1}^\Upsilon \Delta\mathbb{E}_{\boldsymbol{i}^*}\left[|E_j|-N_{i_j^*}^{(j)}\right]\notag\\
    &= \Delta \left(T - \frac{1}{K^\Upsilon}\sum_{\boldsymbol{i}^* \in [K]^\Upsilon}\sum_{j=1}^\Upsilon \mathbb{E}_{\boldsymbol{i}^*}\left[N_{i_j^*}^{(j)}\right] \right)\notag\\
    &=\Delta \left(T - \sum_{j=1}^\Upsilon \frac{1}{K^{\Upsilon-1}}\sum_{\boldsymbol{i}_{-j}^* \in [K]^{\Upsilon-1}}\frac{1}{K}\sum_{i=1}^K\mathbb{E}_{(\boldsymbol{i}_{-j}^*,i)}\left[N_{i}^{(j)}\right] \right), \label{eq:lb_00}
    \end{align}
    where $\boldsymbol{i}_{-j}^*$ equals to $\boldsymbol{i}^*$ where the $j$-th coordinate is set to $0$ and $(\boldsymbol{i}_{-j}^*,i)$ equals to $\boldsymbol{i}^*$ where the $j$-th coordinate is set to $i$, for $i \in [K]$.

    Let $D_{KL}(P,Q)$ be the Kullback-Leibler divergence between $P$ and $Q$, then we have:
    \begin{align*}
        D_{KL}\left(\rho_{2^\frac{\epsilon}{1+\epsilon}\Delta},\rho_{\Delta}\right) &= \left(1-2v^{-\frac{1}{\epsilon}}\Delta^\frac{1+\epsilon}{\epsilon}\right)\log\left(\frac{1-2v^{-\frac{1}{\epsilon}}\Delta^\frac{1+\epsilon}{\epsilon}}{1-v^{-\frac{1}{\epsilon}}\Delta^\frac{1+\epsilon}{\epsilon}}\right) + 2v^{-\frac{1}{\epsilon}}\Delta^\frac{1+\epsilon}{\epsilon}\log\left(\frac{2v^{-\frac{1}{\epsilon}}\Delta^\frac{1+\epsilon}{\epsilon}}{v^{-\frac{1}{\epsilon}}\Delta^\frac{1+\epsilon}{\epsilon}}\right) \\
        &\le 2v^{-\frac{1}{\epsilon}}\Delta^\frac{1+\epsilon}{\epsilon}\log\left(2\right),
    \end{align*}
    where the inequality follows by upper bounding the first addendum with $0$.
    Using Pinsker Inequality and the previous bound on the KL divergence, for every $j \in [\Upsilon]$, we get
    \begin{align*}
        2\left(\frac{1}{K}\sum_{i=1}^K\mathbb{E}_{(\boldsymbol{i}_{-j}^*,i)}\left[\frac{N_{i}^{(j)}}{|E_j|}\right]-\frac{1}{K}\sum_{i=1}^K\mathbb{E}_{(\boldsymbol{i}_{-j}^*,0)}\left[\frac{N_{i}^{(j)}}{|E_j|}\right]\right)^2 &\le D_{KL}\left(\mathbb{P}_{(\boldsymbol{i}_{-j}^*,i)},\mathbb{P}_{(\boldsymbol{i}_{-j}^*,0)}\right) \\
        &\le \frac{1}{K}\sum_{i=1}^K \mathbb{E}_{(\boldsymbol{i}_{-j}^*,i)}\left[N_{i}^{(j)}\right]D_{KL}\left(\rho_{2^\frac{\epsilon}{1+\epsilon}\Delta},\rho_{\Delta}\right)\\
        &\le \frac{\log(2)}{2K}|E_j| v^{-\frac{1}{\epsilon}}\Delta^\frac{1+\epsilon}{\epsilon},
    \end{align*}
    that implies
    \begin{align}
    \label{eq:lb_01}
        \frac{1}{K}\sum_{i=1}^K\mathbb{E}_{(\boldsymbol{i}_{-j}^*,i)}\left[N_{i}^{(j)}\right] \le \frac{|E_j|}{K} + \sqrt{\frac{\log(2)}{2K}}|E_j|^\frac{3}{2} v^{-\frac{1}{2\epsilon}}\Delta^\frac{1+\epsilon}{2\epsilon}.
    \end{align}
    Combining Equation \eqref{eq:lb_00} and Equation \eqref{eq:lb_01}, we get
    \begin{align*}
         \frac{1}{K^\Upsilon}\sum_{\boldsymbol{i}^* \in [K]^\Upsilon}\mathbb{E}_{\boldsymbol{i}^*}\left[R^\pi(T)\right]&\ge \left(\frac{T}{2} - \sum_{k=1}^\Upsilon \sqrt{\frac{\log(2)}{2K}}|E_j|^\frac{3}{2}v^{-\frac{1}{2\epsilon}} \Delta^\frac{1+\epsilon}{2\epsilon}\right)\Delta \\
         &\ge \frac{1}{2}\left(\frac{2\log(2)}{16}\right)^\frac{\epsilon}{1+\epsilon}K^\frac{\epsilon}{1+\epsilon}\Upsilon^\frac{\epsilon}{1+\epsilon}T^\frac{1}{1+\epsilon},
    \end{align*}
    by setting $\Delta = v^{\frac{1}{1+\epsilon}}\left(\frac{2\log(2)K\Upsilon}{16T}\right)^\frac{\epsilon}{1+\epsilon}$.
\end{proof}
\begin{remark}
\label{rem:lb_easier}
    In the proof of Theorem \ref{thr:lb}, we do not impose any condition on $T$. Moreover, the length of every epoch is equal to $T/\Upsilon$. This means that in principle one can choose a large enough $T$, \emph{i.e.}, $T\ge 2\Upsilon\lceil L_j \frac{K}{\eta}\rceil$, such that Assumption \ref{ass:length} holds. This proves that our assumption does not make the problem easier from a regret minimization perspective.
\end{remark}
\detectionDelay*
\begin{proof}
Due to its length, we divided this proof into several steps. In Steps 1-3 we extend Theorem 10 of \cite{wang2023catoni} to the case of heavy-tailed random variables. In Step 4 we apply the \textit{sticthing} technique to the resulting CS, tightening its width. Then, in Step 5 we define the CS hyper-parameters and define a set of good events, under which we are able to properly bound the detection delay in Step 7. The proof is concluded by showing that no false alarm occurs under the good event (Step 8).\\
    \textbf{Step 1 (Building a nonnegative supermartingale)} First, we observe that
    \begin{align*}
        M_t &\coloneqq \prod_{i=1}^t \exp\left\{\phi_\epsilon(\lambda_i(X_i -\mu))-\lambda_i^{1+\epsilon}\frac{v}{1+\epsilon}\right\}, \\
        N_t &\coloneqq \prod_{i=1}^t \exp\left\{-\phi_\epsilon(\lambda_i(X_i -\mu))-\lambda_i^{1+\epsilon}\frac{v}{1+\epsilon}\right\},
    \end{align*}
    are nonnegative supermartingales. To prove this for $M_t$ (all steps are analogous for $N_t$), we bound
    \begin{align*}
        \mathbb{E}&\left[\exp\left\{\phi_\epsilon(\lambda_t(X_t-\mu))-\lambda_t\frac{v}{1+\epsilon} \right\}\,\,\bigg\lvert\, \mathcal{F}_{t-1}\right] \le\\
        &\le \mathbb{E}\left[1+\lambda_t(X_t-\mu)+\lambda_t^{1+\epsilon}\frac{(X_t-\mu)^{1+\epsilon}}{1+\epsilon} \,\,\bigg\lvert\, \mathcal{F}_{t-1}\right]\exp\left\{-\lambda_t^{1+\epsilon}\frac{v}{1+\epsilon}\right\}\\
        &\le \left(1+\lambda_t^{1+\epsilon}\frac{v}{1+\epsilon}\right)\exp\left\{-\lambda_t^{1+\epsilon}\frac{v}{1+\epsilon}\right\} \le 1,
    \end{align*}
    and, subsequently,
    \begin{align*}
        \mathbb{E}\left[M_t \,\,\bigg\lvert\, \mathcal{F}_{t-1} \right] = M_{t-1}\mathbb{E}\left[ \exp\left\{\phi_\epsilon(\lambda_t(X_t -\mu))-\lambda_t^{1+\epsilon}\frac{v}{1+\epsilon}\right\} \,\,\bigg\lvert\, \mathcal{F}_{t-1} \right] \le M_{t-1}.
    \end{align*}
    \textbf{Step 2 (Building a CS for $\phi_{\epsilon}$)} Then, we can leverage Ville's inequality to construct a CS around $\phi_{\epsilon}(\lambda(X-\mu))$:
    \begin{align*}
        \mathbb{P}\left(\exists t \ge 1 : M_t \ge \frac{2}{\gamma}\right) \le \frac{\gamma}{2},
    \end{align*}
    which implies
    \begin{align*}
        \mathbb{P}\left(\exists t \ge 1 : \sum_{i=1}^t \phi_{\epsilon}(\lambda_i(X_i-\mu)) \ge \frac{v \sum_{i=1}^t\lambda_i^{1+\epsilon}}{1+\epsilon}+\log \left(\frac{2}{\gamma}\right)\right) \le \frac{\gamma}{2}.
    \end{align*}
    Analogous calculations for $N_t$ and a union bound, yield a $(1-\gamma)$-CS where the intervals have the following form:
    \begin{equation*}
        CI_t^\phi = \left\{ m \in \mathbb{R} : -\frac{v \sum_{i=1}^t\lambda_i^{1+\epsilon}}{1+\epsilon}-\log \left(\frac{2}{\gamma}\right) \le \sum_{i=1}^t \phi_{\epsilon}(\lambda_i(X_i-m)) \le \frac{v \sum_{i=1}^t\lambda_i^{1+\epsilon}}{1+\epsilon}+\log \left(\frac{2}{\gamma}\right)\right\}.
    \end{equation*}
    \textbf{Step 3 (Bounding the width of the CS for $\mu$)} We are now required to provide a bound on the width of the previously derived $(1-\gamma)$-CS. To do so, we derive high-probability lower and upper bounds over the random solution of $f_t(m)\coloneqq \sum_{i=1}^t \phi_{\epsilon}(\lambda_i(X_i-m))=0$.
    For all $m \in \mathbb{R}$, let
    \begin{align*}
        M_t(m) = \exp \left\{f_t(m)-\sum_{i=1}^t \left(\lambda_i(\mu-m) + \frac{\lambda_i^{1+\epsilon}}{1+\epsilon}(v + (\mu-m)^{1+\epsilon})\right)\right\},
    \end{align*}
    then, with steps analogous to Step 1, we observe that $M_t(m)$ is a nonnegative supermartingale. Note that $M_t(\mu)=M_t$, and an analogous definition leads to the nonnegative supermartingale $N_t(m)$. We define:
    \begin{align*}
        B_t^+(m) &= \sum_{i=1}^t \left(\lambda_i(\mu-m) +\frac{\lambda_i^{1+\epsilon}}{1+\epsilon}(v + (\mu-m)^{1+\epsilon})\right) + \log\left(\frac{2}{\theta}\right) \\
        B_t^-(m) &= \sum_{i=1}^t \left(\lambda_i(\mu-m) - \frac{\lambda_i^{1+\epsilon}}{1+\epsilon}(v + (\mu-m)^{1+\epsilon})\right) - \log\left(\frac{2}{\theta}\right), 
    \end{align*}
    and using Markov's ineqality we get:
    \begin{align*}
        \forall m \in \mathbb{R},~~~ \mathbb{P}\left(f_t(m) \le B_t^+(m)\right) \ge 1-\frac{\theta}{2} \\
        \forall m \in \mathbb{R},~~~ \mathbb{P}\left(f_t(m) \ge B_t^-(m)\right) \ge 1-\frac{\theta}{2}.
    \end{align*}
    Since $B_t^+$ upper bounds $f_t(m)$ with probability at least $1-\frac{\theta}{2}$, any $\widetilde{m}_t$ s.t.
    \begin{align}
    \label{eq:catoni_cs_0}
        B_t^+(\widetilde{m}_t) = - \sum_{i=1}^t \frac{v\lambda_i^{1+\epsilon}}{1+\epsilon}- \log \left(\frac{2}{\gamma}\right) = f_t(\max\{CI_t^\phi\})
    \end{align}
    also satisfies
    \begin{align}
    \label{eq:catoni_cs_1}
        \mathbb{P}\left(f_t(\widetilde{m}_t) \le f_t(\max\{CI_t^\phi\}\right) \ge 1-\frac{\theta}{2},
    \end{align}
    where $\widetilde{m}_t$ is a non-random quantity as it's the solution to a deterministic equation. 
    As $f_t(m)$ is a non-increasing function of $m$, Equation \eqref{eq:catoni_cs_1} implies that:
    \begin{align*}
        \mathbb{P}\left(\widetilde{m}_t \le\max\{CI_t^\phi\}\right) \ge 1-\frac{\theta}{2}.
    \end{align*}
    Note that Equation \eqref{eq:catoni_cs_0} admits solutions if and only if
    \begin{equation}
    \label{eq:catoni_cs_condition}
        \left(\sum_{i=1}^t \lambda_i^{1+\epsilon}\right)^\frac{1}{\epsilon}\left(\sum_{i=1}^t \lambda_i\right)^{-\frac{1+\epsilon}{\epsilon}}\left(\sum_{i=1}^t 5 \lambda_i^{1+\epsilon} \frac{v}{1+\epsilon} + 2 \log \left( \frac{2}{\gamma}\right) + 2 \log\left( \frac{2}{\theta}\right)\right) \le \frac{\epsilon}{1+\epsilon}.
    \end{equation}
    Finally, we conclude this step by bounding
    \begin{align*}
        \widetilde{m}_t \le \mu + \frac{\sum_{i=1}^t 10 v\lambda_i^{1+\epsilon}+2(1+\epsilon)\log\left(\frac{2}{\gamma}\right)+2(1+\epsilon)\log\left(\frac{2}{\theta}\right)}{\sum_{i=1}^t \lambda_i},
    \end{align*}
    which yields the upper CS on $\mu$ in the following form:
    \begin{align*}
        \mathbb{P}\left(\max\{CI_t^\phi\} \le \mu +  \frac{\sum_{i=1}^t 10 v\lambda_i^{1+\epsilon}+2(1+\epsilon)\log\left(\frac{2}{\gamma}\right)+2(1+\epsilon)\log\left(\frac{2}{\theta}\right)}{\sum_{i=1}^t \lambda_i}\right) \ge 1-\frac{\theta}{2}.
    \end{align*}
    Repeating all the previous steps for $B_t^-(m)$, and by applying a union bound, yields a two-sided $(1-\gamma)$-CS for $\mu$. The width $w(t, \mu, \gamma) = \max\{CI_t^\phi\}-\min\{CI_t^\phi\}$ of such CS concentrates as 
    \begin{align}
    \label{eq:catoni_cs_2}
        \mathbb{P}\left(w(t,\mu,\gamma) \le 2\frac{\sum_{i=1}^t 10 v\lambda_i^{1+\epsilon}+2(1+\epsilon)\log\left(\frac{2}{\gamma}\right)+2(1+\epsilon)\log\left(\frac{2}{\theta}\right)}{\sum_{i=1}^t \lambda_i}\right) \ge 1-\theta.
    \end{align}
\textbf{Step 4 (Stitching)} We now discuss the choice of the sequence $\{\lambda_t\}_{t\ge 1}$. The idea is to partition time in an exponential grid, and then fix the same value of $\lambda_t$ inside the same cell. Moreover, the confidence level is modified and set to a cell-specific value $\gamma_j$. This idea, called \textit{stitching}, first appeared in \cite{howard2021time}. In particular, set $t_j = e^j$, $\gamma_j = \frac{\gamma}{(j+1)^2}$, and $\Lambda_j = \left(\log \left(\frac{2}{\gamma_j}\right)e^{-j}v^{-1}\right)^{\frac{1}{1+\epsilon}}$. Then, for every $t_j < t \le t_{j+1}$, we set $\lambda_i = \Lambda_j$ for every $i \in [t]$.
Assume $\theta = \frac{\gamma}{4}$. For every $t_j < t \le t_{j+1}$, we have
\begin{align*}
    &\frac{\sum_{i=1}^t 10 v\lambda_i^{1+\epsilon}+2(1+\epsilon)\log\left(\frac{2}{\gamma_j}\right)+2(1+\epsilon)\log\left(\frac{2}{\theta}\right)}{\sum_{i=1}^t \lambda_i}  \\
    &= v^{\frac{1}{1+\epsilon}}\frac{10t v\Lambda_j^{1+\epsilon}+2(1+\epsilon)\log\left(\frac{2}{\gamma_j}\right)+2(1+\epsilon)\log\left(\frac{2}{\theta}\right)}{t\Lambda_j} \le \\
    &\le v^{\frac{1}{1+\epsilon}}(1+\epsilon)\frac{\frac{10v}{1+\epsilon}t \Lambda_j^{1+\epsilon}+4\log\left(\frac{2}{\gamma_j}\right)}{t\Lambda_j} \le \\
    &\le 34v^{\frac{1}{1+\epsilon}}(1+\epsilon)\left(\frac{\log \left(\frac{2}{\gamma}\right) + 2\log(\log(e^2t))}{t}\right)^{\frac{\epsilon}{1+\epsilon}}.
\end{align*}
Noting that $\sum_{j=1}^\infty \gamma_j < \gamma$, this yields a tight bound over the width of the $(1-\gamma)$-CS for $\mu$.

\textbf{Step 5 (Good event characterization)} As the width derived in the previous steps is not deterministic, we now characterize a favorable event in which such bound hold simultaneously for all $(1-\gamma)$-CS. For any $(1-\gamma)$-CS of length $t$, we have that 
\begin{align*}
     \mathbb{P}\left(w(t,\mu,\gamma) \le 68v^{\frac{1}{1+\epsilon}}(1+\epsilon)\left(\frac{\log \left(\frac{2}{\gamma}\right) + 2\log(\log(e^2t))}{t}\right)^{\frac{\epsilon}{1+\epsilon}}\right) \ge 1-\frac{\gamma}{4}.
\end{align*}
Thus, considering a stream of $T$ samples, the probability of this to be violated for at least one interval of the $(1-\gamma)$-CS is bounded as
\begin{align*}
   1-\mathbb{P}(\mathcal{W}_T)&=\mathbb{P}\left(\exists t \le T : w(t,\mu,\gamma) > 68v^{\frac{1}{1+\epsilon}}(1+\epsilon)\left(\frac{\log \left(\frac{2}{\gamma}\right) + 2\log(\log(e^2t))}{t}\right)^{\frac{\epsilon}{1+\epsilon}}\right) \\
     &\le\sum_{t=1}^T \mathbb{P}\left( w(t,\mu,\gamma) > 68v^{\frac{1}{1+\epsilon}}(1+\epsilon)\left(\frac{\log \left(\frac{2}{\gamma}\right) + 2\log(\log(e^2t))}{t}\right)^{\frac{\epsilon}{1+\epsilon}}\right) \\
     &\le T\frac{\gamma}{4}.
\end{align*}
The event $\mathcal{W}_T$, defined above, represents a good event in which the $(1-\gamma)$-CS starting from $t=1$ have the widths of the single CIs deterministically bounded up until horizon $T$. Now, we note that if we have $t$ different $(1-\gamma)$-CS of lengths $1,\ldots,t$, we define $ \mathcal{W}_{1:t} \coloneqq \bigcap_{i=1}^t \mathcal{W}_i$. This event describe the scenario in which all $(1-\gamma)$-CS starting sequentially before $t$ have the widths of all of their CIs bounded. Using another union bound argument, we can see that $\mathbb{P}(\mathcal{W}_{1:t}) \ge 1-\frac{t(t+1)\gamma}{8}$. Finally, note that $\mathcal{W}_{a:b} \subset \mathcal{W}_{a':b'}$, for every $a'>a$ and $b'<b$. Thus $\mathbb{P}(\mathcal{W}_{a:b}) \ge 1-\frac{T(T+1)\gamma}{8}$ for every $a,b \in [T]$. Characterizing this event is necessary since \texttt{Catoni-FCS-detector} requires that CS widths \textit{well behave}, \emph{i.e.}, they possess a deterministic upper bound.

We also introduce the event $\mathcal{E}_t^T = \left\{\forall i \in \{t,\ldots,T\}, \forall t' \in \{i,\ldots,T\} : \mu \in CI_{t'}^{(i)}\right\}$, that represents the scenario in which every $(1-\gamma)$-CS starting from a timestamp greater or equal than $t$ never miscovers the true mean up to time $T$. By the definition of $(1-\gamma)$-CS, we have $\mathbb{P}(\mathcal{E}_t^T) \ge 1-(T-t)\gamma$.
From now on, we continue by setting $\gamma = \frac{2}{T^3}$.

\paragraph{Step 6 (Verifying condition \eqref{eq:catoni_cs_condition})}
For every $t\in [T]$, we use the previously defined values for $\{\lambda_i\}_{i=1}^t$, $\gamma$ and $\theta$, and solve inequality \eqref{eq:catoni_cs_condition}. We obtain that it is satisfied for every $t \ge 68\log\left(T^\frac{1+\epsilon}{\epsilon}\right) = n_{min}$, which is always true under the theorem's assumptions.

\textbf{Step 7 (Bounding the detection delay)} We are now ready to bound the detection delay of \texttt{Catoni-FCS-detector} after a change of magnitude $\delta$ happened after $t_c$ samples. Note that we assume $t_c$ to be large enough to satisfy Equation \eqref{eq:catoni_cs_condition}. To do so, we leverage the width of the $(1-\gamma)$-CS that has just been derived. Suppose, without loss of generality, that a change point is detected after at most $T$ overall samples. Thus, we work under the events $\mathcal{W}_{1:T}$, $\mathcal{E}_1^{t_c}$, and $\mathcal{E}_{t_c}^T$, defined in Step 5, which hold simultaneously with probability at least $1-\frac{14}{T}$, and guarantee that the CS widths are always properly bounded and the pre-change mean $\mu_0$ and the post-change mean $\mu_1$ are never miscovered. By assumption, $t_c$ is large enough to ensure
\begin{align*}
    w(t_c, \mu_0, \gamma) \le 68v^{\frac{1}{1+\epsilon}}(1+\epsilon)\left(\frac{3\log \left(T\right) + 2\log(\log(e^2t_c))}{t_c}\right)^{\frac{\epsilon}{1+\epsilon}} \le \frac{\delta}{2}
\end{align*}
and we have to find an $n$ s.t.:
\begin{align*}
    w(n, \mu_1, \gamma) \le 68v^{\frac{1}{1+\epsilon}}(1+\epsilon)\left(\frac{3\log \left(T\right) + 2\log(\log(e^2n))}{n}\right)^{\frac{\epsilon}{1+\epsilon}} \le \frac{\delta}{2}.
\end{align*}
We first bound $2\log(\log(e^2n))\le 3\log(\log(n))$, that holds under the trivial requirements that $\log(n) \ge 2$ and $T\ge 2$. Moreover, we define $\widetilde{c} \coloneqq 136(3)^\frac{\epsilon}{1+\epsilon}(v)^{\frac{1}{1+\epsilon}}$ and $\widetilde{\delta} = \frac{\delta}{\widetilde{c}}$. Thus, we can find an upper bound on the expected detection delay $n_0$ by solving the following:
\begin{align*}
    n_0 = \min_{n \ge 1}\left\{ \left(\frac{\log \left(T\right) + \log(\log(n))}{n}\right)^{\frac{\epsilon}{1+\epsilon}} \le \widetilde{\delta}\right\}.
\end{align*}
If $\widetilde{\delta} \ge 1$, then $n_0 \le \log(T)$. Else, for $\widetilde{\delta} \le 1$, we define 
\begin{align*}
    n_1 = \min_{n \ge 1}\left\{\left(\frac{ \log(\log(n))}{n}\right)^{\frac{\epsilon}{1+\epsilon}} \le \frac{\widetilde{\delta}}{2}\right\} ~~~\text{and}~~~ n_2 = \min_{n \ge 1}\left\{\left(\frac{ \log(T)}{n}\right)^{\frac{\epsilon}{1+\epsilon}} \le \frac{\widetilde{\delta}}{2}\right\},
\end{align*}
and note that $n_0 \le n_1 + n_2$. We can thus upper bound them separately. It is trivial to observe that
\begin{align*}
    n_2 = 2^{\frac{1+\epsilon}{\epsilon}}\frac{\log(T)}{\widetilde{\delta}^\frac{1+\epsilon}{\epsilon}} = (2\widetilde{c})^\frac{1+\epsilon}{\epsilon}\frac{\log(T)}{\delta^\frac{1+\epsilon}{\epsilon}}.
\end{align*}
Upper bounding $n_1$ requires additional effort. We start by identifying a value $n_3$ which satisfies
\begin{align*}
    \left(\frac{ \log(\log(n_3))}{n_3}\right)^{\frac{\epsilon}{1+\epsilon}} \le \frac{\widetilde{\delta}}{2}.
\end{align*}
Let $n_3 = \left(\frac{4}{\widetilde{\delta}^2}\right)^{\frac{1+\epsilon}{\epsilon}}$, and $\widetilde{y}\coloneqq \left(\frac{\widetilde{\delta}}{2}\right)^{\frac{1+\epsilon}{\epsilon}} \le 1$, then
\begin{align*}
    \left(\frac{2}{\widetilde{\delta}}\left(\frac{ \log(\log(n_3))}{n_3}\right)^{\frac{\epsilon}{1+\epsilon}}\right)^{\frac{1+\epsilon}{\epsilon}} &= \left(\frac{2}{\widetilde{\delta}}\right)^{\frac{1+\epsilon}{\epsilon}}\left(\frac{ \log(\log(n_3))}{n_3}\right) \\
    &= \left(\frac{\widetilde{\delta}}{2}\right)^{\frac{1+\epsilon}{\epsilon}}\log\left(\log\left(\left(\frac{4}{\widetilde{\delta}^2}\right)^{\frac{1+\epsilon}{\epsilon}}\right)\right)\\
    &= \left(\frac{\widetilde{\delta}}{2}\right)^{\frac{1+\epsilon}{\epsilon}}\log\left(\log\left(\left(\frac{2}{\widetilde{\delta}}\right)^{2\frac{1+\epsilon}{\epsilon}}\right)\right)\\
    &= \widetilde{y}\log\left(\log\left(\frac{1}{\widetilde{y}^2}\right)\right) \le 0.27 < 1.
\end{align*}
Since $n_3$ is an upper bound on the expected detection delay, we have 
\begin{align*}
    \log(\log(n_1)) &\le \log(\log(n_3)) \\
    &= \log\left(\log\left(\left(\frac{4}{\widetilde{\delta}^2}\right)^{\frac{1+\epsilon}{\epsilon}}\right)\right) \\
    &= \log\left(2\frac{1+\epsilon}{\epsilon}\log\left(\frac{2}{\widetilde{\delta}}\right)\right) \\
    &= \log\left(2\frac{1+\epsilon}{\epsilon}\right) + \log\left(\log\left(\frac{2\widetilde{c}}{\delta}\right)\right) \\
    &\le \log\left(2\frac{1+\epsilon}{\epsilon}\right) + \log(\log(2\widetilde{c}))+\log\left(\log\left(\frac{1}{{\delta}}\right)\right)\\
    &= \log \left(2\log(2\widetilde{c})\frac{1+\epsilon}{\epsilon}\right)+\log\left(\log\left(\frac{1}{{\delta}}\right)\right).
\end{align*}
As a consequence, we can rewrite:
\begin{align*}
    \left(\frac{ \log(\log(n_1))}{n_1}\right)^{\frac{\epsilon}{1+\epsilon}} \le \left(\frac{ \log \left(2\log(2\widetilde{c})\frac{1+\epsilon}{\epsilon}\right)+\log\left(\log\left(\frac{1}{{\delta}}\right)\right)}{n_1}\right)^{\frac{\epsilon}{1+\epsilon}},
\end{align*}
which immediately implies that 
\begin{align*}
    n_1 &\le \frac{(2\widetilde{c})^\frac{1+\epsilon}{\epsilon}\log \left(2\log(2\widetilde{c})\frac{1+\epsilon}{\epsilon}\right)+(2\widetilde{c})^\frac{1+\epsilon}{\epsilon}\log\left(\log\left(\frac{1}{{\delta}}\right)\right)}{{\delta}^{\frac{1+\epsilon}{\epsilon}}}.
\end{align*}

Under the events defined above and that hold with probability at least $1-\frac{14}{T}$, the detection delay is bounded as
\begin{align*}
   (\tau-t_c)^+ &\le (2\widetilde{c})^\frac{1+\epsilon}{\epsilon}\frac{\log \left(2\log(2\widetilde{c})\frac{1+\epsilon}{\epsilon}\right)+\log\left(\log\left(\frac{1}{{\delta}}\right)\right)+\log(T)}{{\delta}^{\frac{1+\epsilon}{\epsilon}}} \\
   &\le 6 (472)^\frac{1+\epsilon}{\epsilon}v^\frac{1}{\epsilon}\frac{\log\left(\log\left(\frac{1}{{\delta}}\right)\right)+\log(T)}{{\delta}^{\frac{1+\epsilon}{\epsilon}}}.
\end{align*}
\paragraph{Step 8 (Bounding the probability of false alarm)}
The bound on the probability of false alarm is a trivial consequence of the definition of event $\mathcal{E}_1^{t_c}$. Under this event, it is impossible by construction for the detector to raise a false alarm, as all the CS always intersect at least on $\mu_0$. Thus, the probability of false alarm is bounded by $\mathbb{P}\left((\mathcal{E}_1^{t_c})^C\right)\le \frac{1}{T}$.
\end{proof}
\begin{lemma}
\label{lemma:best_event}
    Let $\mathcal{G}_T \coloneqq \left\{ \forall j\in[\Upsilon] : \tau_j \in \left\{t_c^{(j)},\ldots, t_c^{(j)} + \left\lceil L_j \frac{K}{\eta}\right\rceil\right\} \text{ and } t_c^{(\Upsilon+1)}> T\right\}$ be the event in which \algnameshort restarts exactly $\Upsilon$ times without false alarms and excessive delays.
    Then, we have $\mathbb{P}\left(\mathcal{G}_T^C\right) \le \frac{15K\Upsilon}{T}$.
\end{lemma}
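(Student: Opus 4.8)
The plan is to union-bound $\mathcal{G}_T^C$ over the $\Upsilon$ epochs, handled one after another. For $j\in[\Upsilon]$ write $\mathcal{G}_{<j}$ for the event that the first $j-1$ detections were both false-alarm-free and timely, so that $\mathcal{G}_{<1}$ is the whole sample space. Decompose the failure at epoch $j$ into a \emph{false alarm} $F_j\coloneqq\{\tau_j<t_c^{(j)}\}$ and an \emph{excessive delay} $D_j\coloneqq\{\tau_j>t_c^{(j)}+\lceil L_j K/\eta\rceil\}$, and regard the closing clause $t_c^{(\Upsilon+1)}>T$ as one further false-alarm-type event (``no $(\Upsilon+1)$-th restart before $T$''). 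By the chain rule and a union bound, $\mathbb{P}(\mathcal{G}_T^C)\le\sum_{j=1}^{\Upsilon}\bigl(\mathbb{P}(F_j\mid\mathcal{G}_{<j})+\mathbb{P}(D_j\mid\mathcal{G}_{<j})\bigr)+\mathbb{P}(\text{spurious final restart}\mid\mathcal{G}_{<\Upsilon+1})$, so it suffices to bound each conditional false-alarm term by $O(K/T^2)$ and each conditional delay term by $14/T$; the $\Upsilon$ delay failures then dominate and produce $\mathbb{P}(\mathcal{G}_T^C)\le 15K\Upsilon/T$.

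The engine is a \emph{restart argument} that lets us invoke Proposition \ref{prop:detection_delay} afresh at every epoch. Conditionally on $\mathcal{G}_{<j}$, the previous reset happened at $\tau_{j-1}\in\{t_c^{(j-1)},\dots,t_c^{(j-1)}+\lceil L_{j-1}K/\eta\rceil\}$ (or at $0$ when $j=1$), hence --- since every epoch is long by Assumption \ref{ass:length} --- strictly inside the stationary segment $E_j$. Because $\mathcal{G}_{<j}$ is measurable with respect to the history up to the stopping time $\tau_{j-1}$, the strong Markov property makes the post-reset run a genuinely fresh instance: arm by arm, each \texttt{Catoni-FCS-detector} opened after $\tau_{j-1}$ processes i.i.d.\ samples drawn from $\nu_i^{(j)}$ up to $t_c^{(j)}$ and from $\nu_i^{(j+1)}$ thereafter, which is exactly the single-change-point setup of Proposition \ref{prop:detection_delay}. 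Its two preconditions --- at least $n_{min}$ clean samples of each arm before $t_c^{(j)}$, and a pre-change Catoni-CS width that has dropped below half the upcoming change --- then follow from the cyclic forced exploration (any window of $m$ rounds contains at least $\lfloor m/\lfloor K/\eta\rfloor\rfloor$ plays of each arm) together with Assumption \ref{ass:length}: the number of rounds between $\tau_{j-1}$ and $t_c^{(j)}$ is at least $|E_j|-\lceil L_{j-1}K/\eta\rceil$, and the assumed epoch length --- whose factor of two, and whose use of $\widetilde{\delta}_{min}^{(j)}=\min\{\delta_{min}^{(j-1)},\delta_{min}^{(j)}\}$ as a minimum over the two changes bracketing $E_j$, are tailored precisely for this --- makes that stretch long enough both to collect the $n_{min}$ samples and to collect the $\Theta(L_j)$ further samples that the width estimate inside the proof of Proposition \ref{prop:detection_delay} requires.

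With the preconditions in force, the two sub-events are dispatched directly. For $F_j$: under the pre-change coverage event $\mathcal{E}_1^{t_c^{(j)}}$ from the proof of Proposition \ref{prop:detection_delay}, every confidence sequence opened before $t_c^{(j)}$ contains the (constant) pre-change mean of its arm, so no two of them can be disjoint and no detection can fire; Step 8 of that proof bounds $\mathbb{P}((\mathcal{E}_1^{t_c^{(j)}})^C)$ per arm, and a union bound over the $K$ arms gives $\mathbb{P}(F_j\mid\mathcal{G}_{<j})=O(K/T^2)$ (the same estimate handles the spurious final restart). For $D_j$: choose an arm $i$ whose mean changes at $t_c^{(j)}$; Proposition \ref{prop:detection_delay}(i) bounds its detection delay by $L_j$ \emph{samples of arm $i$} outside an event of probability $\le 14/T$, and since $\lceil L_j K/\eta\rceil$ rounds contain enough plays of arm $i$, on the complement the detector --- whose stopping time $\tau_j$ is the first firing over all arms --- fires by round $t_c^{(j)}+\lceil L_j K/\eta\rceil$. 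Hence $\mathbb{P}(D_j\mid\mathcal{G}_{<j})\le14/T$, and summing the $\Upsilon$ copies gives the stated bound.

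I expect the restart/conditioning step to be the main obstacle: one has to argue carefully that conditioning on $\mathcal{G}_{<j}$ --- equivalently, on the data-dependent stopping time $\tau_{j-1}$ landing in its good window --- does not disturb the conditional law of the post-reset sample stream nor the forced-exploration sample counts, so that Proposition \ref{prop:detection_delay} may legitimately be applied to a ``restarted'' detector. The second delicate point is the bookkeeping that ties the factor-of-two slack in Assumption \ref{ass:length} to the two preconditions of Proposition \ref{prop:detection_delay}, absorbing all floor/ceiling losses of the cyclic schedule and the fact that $L_{j-1}$ and $L_j$ need not be comparable; this is exactly why $\widetilde{\delta}_{min}^{(j)}$ is defined as the minimum over the pair of changes adjacent to $E_j$, so that a single length $\lceil L_j K/\eta\rceil$ simultaneously covers the residual delay carried over from the previous detection and the pre-change window needed for the current one.
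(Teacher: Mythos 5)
Your proposal follows essentially the same route as the paper's proof: condition on the event $\mathcal{L}^{(j-1)}$ that all prior detections were timely and alarm-free, union-bound over epochs, split each epoch's failure into a false alarm and an excessive delay, and invoke Proposition \ref{prop:detection_delay} afresh after each restart, with Assumption \ref{ass:length} and the cyclic exploration guaranteeing the $n_{min}$-sample and width preconditions. Your accounting of the per-epoch constants differs slightly (you charge $O(K/T^2)$ per false alarm and $14/T$ per delay by tracking only one changing arm, where the paper uses $K/T$ and $14K/T$), but both tallies land under the stated $15K\Upsilon/T$, and your explicit treatment of the restart/conditioning step is a correct elaboration of what the paper leaves implicit.
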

\begin{proof}
   Note that, by construction of the algorithm, each action is sampled at least $L_j$ times after $\left\lceil L_j \frac{K}{\eta}\right\rceil$ timesteps have passed since the last detection point. Thanks to Assumption \ref{ass:length}, the length of every epoch is at least $2\left\lceil L_j \frac{K}{\eta}\right\rceil$. 

   Let $\mathcal{L}^{(j)}\coloneqq \left\{\forall m \le j : \tau_m \in \left\{t_c^{(m)}, \ldots, t_c^{(m)}+\left\lceil L_m \frac{K}{\eta}\right\rceil\right\}\right\}$ be the event in which all detections happened without false alarms and excessive delays up to the $j$-th epoch. Then, by a union bound and by Proposition \ref{prop:detection_delay}, we have:
   \begin{align*}
       \mathbb{P}\left(\mathcal{G}_T^C\right) &\le  \sum_{j=1}^{\Upsilon+1}\mathbb{P}\left(\tau_j \le t_c^{(j)} \mid \mathcal{L}^{(j-1)}\right) + \sum_{j=1}^{\Upsilon}\mathbb{P}\left(\tau_j \ge t_c^{(j)} + \left\lceil L_j \frac{K}{\eta}\right\rceil \mid \mathcal{L}^{(j-1)}\right) \\
       &\le \frac{K(\Upsilon+1)}{T} + \frac{14K\Upsilon}{T} \le \frac{15K\Upsilon}{T}.
   \end{align*}
\end{proof}

\regretBound*
\begin{proof}
Let $\mathcal{G}_T \coloneqq \left\{ \forall j\in[\Upsilon] : \tau_j \in \left\{t_c^{(j)},\ldots, t_c^{(j)} + \left\lceil L_j \frac{K}{\eta}\right\rceil\right\} \text{ and } t_c^{(\Upsilon+1)}> T\right\}$ be the event in which \algnameshort restarts exactly $\Upsilon$ times without false alarms and excessive delays.

We start by decomposing the regret in the following way:
\begin{align*}
        \mathbb{E}[R^{\pi^\text{\algnameshort}}(T)] &\le  \mathbb{E}[R^{\pi^\text{\algnameshort}}(T) \mid \mathcal{G}_T] + \mathbb{E}[R^{\pi^\text{\algnameshort}}(T) \mid \mathcal{G}_T^C] \mathbb{P}\left(\mathcal{G}_T^C\right)  \\
        &\le \mathbb{E}[R^{\pi^\text{\algnameshort}}(T) \mid \mathcal{G}_T] + 15K\Upsilon,
\end{align*}
where the second inequality follows from Lemma \ref{lemma:best_event}. We can now focus on bounding the first addendum. We decompose it as follows:
\begin{align*}
    \mathbb{E}[R^{\pi^\text{\algnameshort}}(T) \mid \mathcal{G}_T] &= \mathbb{E}[R^{\pi^\text{\algnameshort}}(T) - R^{\pi^\text{\algnameshort}}(t_c^{(1)}) \mid \mathcal{G}_T] + \mathbb{E}[R^{\pi^\text{\algnameshort}}(t_c^{(1)}) \mid \mathcal{G}_T] \\
    &\le \mathbb{E}[R^{\pi^\text{\algnameshort}}(T) - R^{\pi^\text{\algnameshort}}(t_c^{(1)}) \mid \mathcal{G}_T] + \eta t_c^{(1)}\Delta_{max}^{(1)} + \mathbb{E}[R^{\pi_s}(t_c^{(1)})],
\end{align*}
where the inequality follows by upper bounding the contribution to the regret given by the forced exploration in the first $t_c^{(1)}$ rounds, the remaining term is the expected regret accrued by the policy $\pi_s$ up to $t_c^{(1)}$.
We prosecute by bounding the first addendum as follows:
\begin{align*}
    \mathbb{E}[R^{\pi^\text{\algnameshort}}(T) - R^{\pi^\text{\algnameshort}}(t_c^{(1)}) \mid \mathcal{G}_T] &= \mathbb{E}[R^{\pi^\text{\algnameshort}}(T) - R^{\pi^\text{\algnameshort}}(\tau_1) \mid \mathcal{G}_T]  + \\
    &~~~~~+\mathbb{E}[R^{\pi^\text{\algnameshort}}(\tau_1) - R^{\pi^\text{\algnameshort}}(t_c^{(1)}) \mid \mathcal{G}_T] \\
    &\le \mathbb{E}_2[R^{\pi^\text{\algnameshort}}(T-\tau_1) \mid \mathcal{G}_T]  + \mathbb{E}[(\tau_1-t_c^{(1)})\mid \mathcal{G}_T]\Delta_{max}^{(1)},
\end{align*}
where $\mathbb{E}_2$ is the expectation according to an environment starting from the second segment.
Putting all together, we can write
\begin{equation*}
\mathbb{E}[R^{\pi^\text{\algnameshort}}(T) \mid \mathcal{G}_T] \le \mathbb{E}_2[R^{\pi^\text{\algnameshort}}(T-\tau_1) \mid \mathcal{G}_T]  + \mathbb{E}[(\tau_1-t_c^{(1)})\mid \mathcal{G}_T]\Delta_{max}^{(1)} + \eta t_c^{(1)}\Delta_{max}^{(1)} + \mathbb{E}[R^{\pi_s}(t_c^{(1)})],
\end{equation*}
which yields, by a recursive application:
\begin{align*}
    \mathbb{E}[R^{\pi^\text{\algnameshort}}(T) \mid \mathcal{G}_T] &\le \sum_{j=1}^\Upsilon \mathbb{E}[(\tau_j-t_c^{(j)})\mid \mathcal{G}_T]\Delta_{max}^{(1)} + \sum_{j=1}^\Upsilon \mathbb{E}[R^{\pi_s}(|E_j|)]  + \eta T\Delta_{max}^{(1)} \\
    &\le \sum_{j=1}^\Upsilon \left\lceil L_j \frac{K}{\eta}\right\rceil\Delta_{max}^{(1)} + \sum_{j=1}^\Upsilon \mathbb{E}[R^{\pi_s}(|E_j|)]  + \eta T\Delta_{max}^{(1)},
\end{align*}
where the second inequality follows from the definition of $\mathcal{G}_T$. The proof is concluded by substituting $L_j$ with its definition.
\end{proof}
\regretBoundInstance*
\begin{proof}
    The proof of this theorem trivially follows from plugging the regret bounds of \texttt{Robust UCB} with MoM estimator (Theorem 3 and Proposition 1 of \cite{bubeck2013bandits}). Equation \eqref{eq:regret_instance_indep} necessitates an additional step using Jensen Inequality:
    $$
    \sum_{j=1}^\Upsilon K^\frac{\epsilon}{1+\epsilon}(vT)^\frac{1}{1+\epsilon} \le \Upsilon (K)^\frac{\epsilon}{1+\epsilon}\left(\frac{vT}{\Upsilon}\right)^\frac{1}{1+\epsilon} =  (\Upsilon K)^\frac{\epsilon}{1+\epsilon}(vT)^\frac{1}{1+\epsilon}.
    $$
\end{proof}
\regretBoundFinal*
\begin{proof}
    First, note that the proof of Theorem \ref{thr:ub} can be conducted in the exact same way by substituting $\eta$ with the sequence $\{\eta_j\}_{j \in [\Upsilon+1]}$. Note that, thanks to event $\mathcal{G}_T$ the algorithm restarts exactly $\Upsilon$ times.
    Equation \eqref{eq:final_regret} is a trivial consequence of the fact that $\eta_{\Upsilon+1} \ge \eta_j$ for every $j \le \Upsilon$, due to the monotonocity of the sequence. Moreover, we bound $\Delta_{max}^{(j)} \le \Delta_{max}$.

    To prove Equation \eqref{eq:final_regret_id}, we need an additional step. In particular:
    \begin{align*}
        \sum_{j=1}^\Upsilon \frac{1}{\eta_j} = \frac{1}{\eta_0}\sqrt{\frac{T}{K\log (T)}}\sum_{j=1}^\Upsilon \frac{1}{\sqrt{j}} \le \frac{1}{\eta_0}\sqrt{\frac{\Upsilon T}{K\log (T)}}.
    \end{align*}
    Plugging this in \textcolor{blue}{(A)}, and bounding $\widetilde{\delta}_{min}^{(j)} \ge \delta_{min}$ for every $j \in [\Upsilon]$, concludes the proof. 
\end{proof}
\section{Additional Numerical Evaluations}
\label{apx:exp}
In this appendix, we provide additional details on the experimental evaluation of Section \ref{sec:experiments} and additional experimental campaigns in synthetic environments.

\subsection{Detection Delay Analysis}
We evaluate how reactive is \texttt{Catoni-FCS-detector} to changes of data-generating distribution, and comparing it \texttt{repeated-FCS-detector} with Empirical Bernstein CSs from \cite{shekhar2023reducing}, Section 2.2, which is suited for distributions with finite variance.
We consider two distribution-shift scenarios: Gaussian distributions with $\sigma=1$ and Laplace distributions with scale equal to $1$. The change happens after $t_c = 400$ steps, and the total horizon is $T=1000$. The magnitude of change is $\delta = 1$. In Figure \ref{fig:results_delay}, we report the distribution of the detection delay of both algorithm over $20$ trials. We can see how, in general, \texttt{Catoni-FCS-detector} has a smaller detection delay w.r.t. \texttt{repeated-FCS-detector}. Moreover, no false alarm is raised along the $20$ trials.

\begin{figure}[tp!]
    \centering
    \subfloat[Gaussian samples.]{
        \resizebox{0.48\linewidth}{!}{\includegraphics{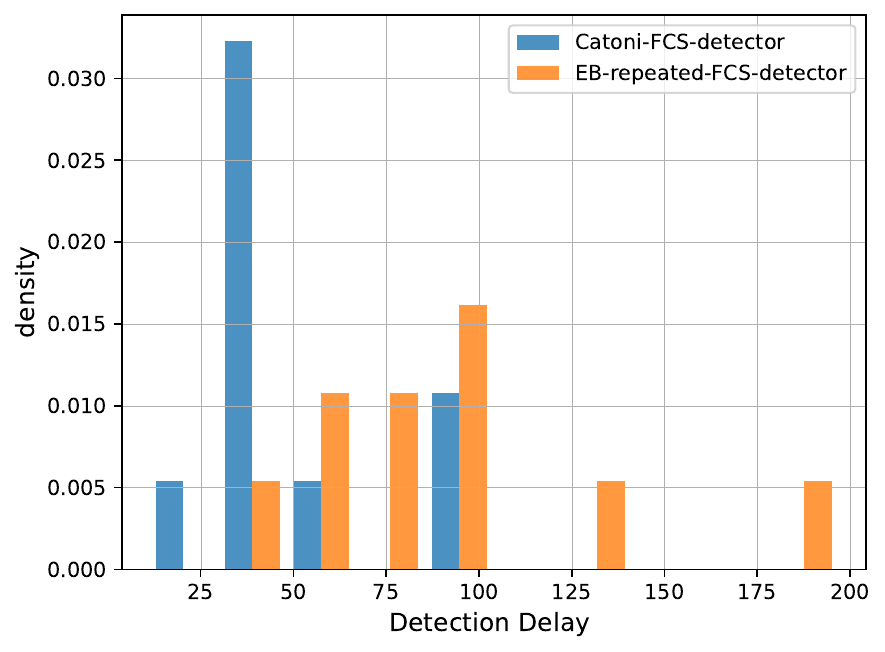}}
        \label{fig:gauss}
    }
    \subfloat[Laplace samples.]{
        \resizebox{0.48\linewidth}{!}{\includegraphics{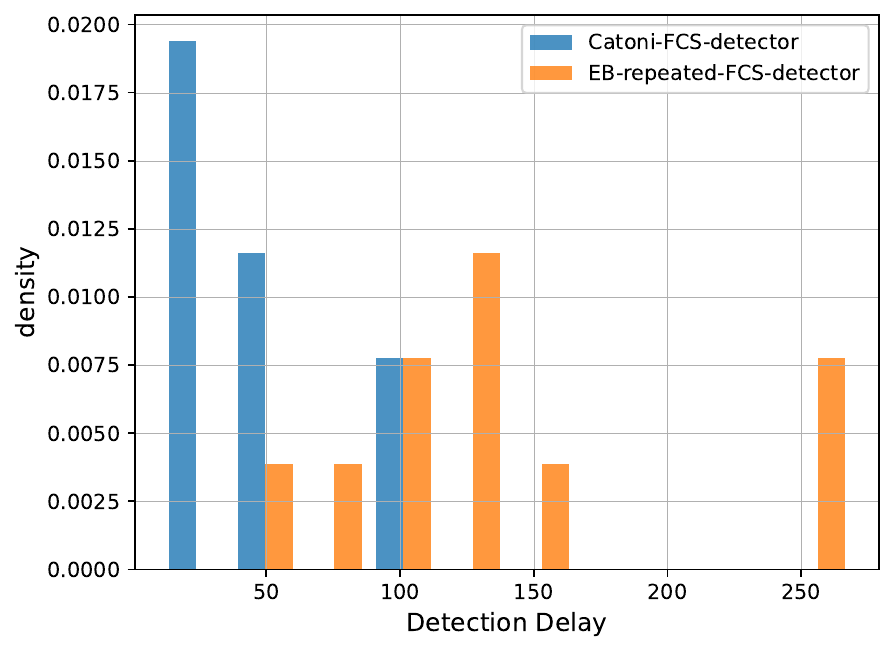}}
        \label{fig:laplace}
    }
    \caption{Distribution delay distribution over $20$ trials.}
    \label{fig:results_delay}
\end{figure}

\subsection{Regret Minimization in Highly Non-Stationary Environments}
\begin{figure}[tp!]
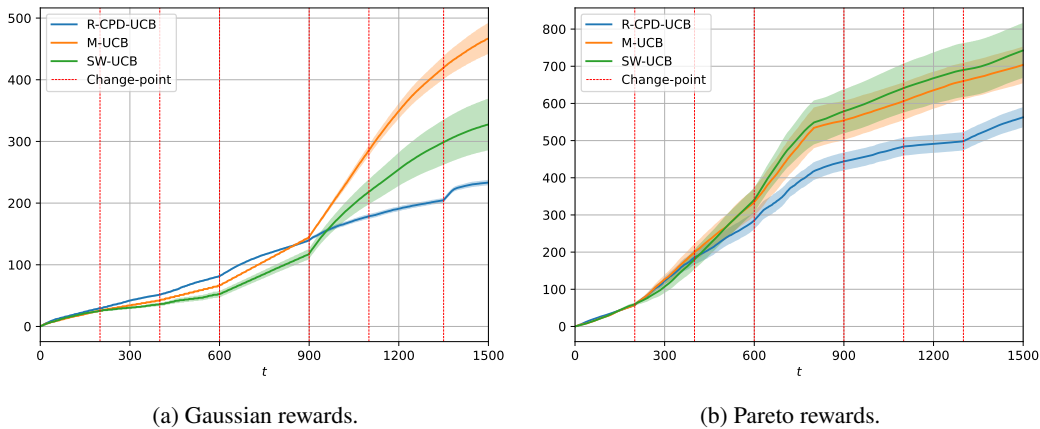

    \centering
    \subfloat[Gaussian rewards.]{
        \resizebox{0.49\linewidth}{!}{\includegraphics{gauss.pdf}}
        \label{fig:gauss}
    }
    \subfloat[Pareto rewards.]{
        \resizebox{0.49\linewidth}{!}{\includegraphics{pareto.pdf}}
        \label{fig:pareto}
    }
    \caption{Cumulative regrets of the considered algorithms. We performed $20$ trials for each instance and reported mean $\pm$ std. The $6$ change-points are indicated by the vertical lines.}
    \label{fig:results_high}
\end{figure}
In this section, we evaluate how \algnameshort behaves in highly dynamic scenarios where change-points are close. \\ 
\textbf{Setting} We confront \algnameshort with two of the most popular algorithms from the literature, \texttt{Monitored UCB} \cite{cao2019nearly} and \texttt{Sliding Window UCB} \cite{garivier2011upper}. We consider two PS MABs: Gaussian rewards with $\sigma=1$ and Pareto rewards with $\epsilon < \frac{1}{2}$ and $v < 3$. In both MABs, we have $K=3$, $T=1500$ and $\Upsilon = 6$. Also, we have $\delta_{min}=0$, \emph{i.e.}, and some actions may not change their means after a change-point. However, at least one arm has its mean change, and the optimal action changes at least $4$ times. Interestingly, these instances violate Assumption \ref{ass:length}. We use $\sigma=1$ and the means reported in Table \ref{tab:gaussian_instance} for the Gaussian scenario. The optimal actions change $4$ times. For the Pareto scenario, we use $\epsilon=\frac{1}{2}$, $v=3$, and the means reported in Table \ref{tab:pareto_instance}. The optimal actions change $4$ times. In Figure \ref{fig:instance_high} we report the means of every action in every epoch.
\begin{figure}[htbp]
  \centering
  
  \begin{subtable}[t]{0.45\linewidth}   
    \footnotesize\begin{tabular}{
>{\columncolor[HTML]{FFFFFF}}c |
>{\columncolor[HTML]{FFFFFF}}c |
>{\columncolor[HTML]{FFFFFF}}c |
>{\columncolor[HTML]{FFFFFF}}c |
>{\columncolor[HTML]{FFFFFF}}c |
>{\columncolor[HTML]{FFFFFF}}c |
>{\columncolor[HTML]{FFFFFF}}c |
>{\columncolor[HTML]{FFFFFF}}l }
        & $E_1$                         & $E_2$                         & $E_3$                         & $E_4$                       & $E_5$                         & $E_6$                         & $E_7$                         \\ \hline
$\mu_1$ & \cellcolor[HTML]{FFFFC7}$1.2$ & $1.5$                         & $1.5$                         & \cellcolor[HTML]{FFFFC7}$2$ & \cellcolor[HTML]{FFFFC7}$1.8$ & $1.2$                         & $1.2$                         \\
$\mu_2$ & $1$                           & \cellcolor[HTML]{FFFFC7}$1.8$ & \cellcolor[HTML]{FFFFC7}$2.4$ & $1.8$                       & $1$                           & \cellcolor[HTML]{FFFFC7}$1.8$ & $1$                           \\
$\mu_3$ & $0.5$                         & $0.5$                         & $0.5$                         & $0.5$                       & $1$                           & $0.5$                         & \cellcolor[HTML]{FFFFC7}$1,7$
\end{tabular}
\caption{Gaussian instance.}
\label{tab:gaussian_instance}
  \end{subtable}\hfill
  \begin{subtable}[t]{0.48\linewidth}
   \footnotesize \begin{tabular}{
>{\columncolor[HTML]{FFFFFF}}c |
>{\columncolor[HTML]{FFFFFF}}c |
>{\columncolor[HTML]{FFFFFF}}c |
>{\columncolor[HTML]{FFFFFF}}c |
>{\columncolor[HTML]{FFFFFF}}c |
>{\columncolor[HTML]{FFFFFF}}c |
>{\columncolor[HTML]{FFFFFF}}c |
>{\columncolor[HTML]{FFFFFF}}l }
        & $E_1$                         & $E_2$                         & $E_3$                       & $E_4$                         & $E_5$                         & $E_6$                         & $E_7$                         \\ \hline
$\mu_1$ & \cellcolor[HTML]{FFFFC7}$1.2$ & $1.5$                         & \cellcolor[HTML]{FFFFC7}$2$ & $2$                           & $1.2$                         & $1.2$                         & $0.8$                         \\
$\mu_2$ & $1$                           & \cellcolor[HTML]{FFFFC7}$2.4$ & $1.8$                       & \cellcolor[HTML]{FFFFC7}$2.8$ & $1$                           & $1.5$                         & $2$                           \\
$\mu_3$ & $0.5$                         & $0.5$                         & $0.5$                       & $0.5$                         & \cellcolor[HTML]{FFFFC7}$1.7$ & \cellcolor[HTML]{FFFFC7}$1.7$ & \cellcolor[HTML]{FFFFC7}$2.9$
\end{tabular}
\caption{Pareto instance. }
\label{tab:pareto_instance}
  \end{subtable}

  \caption{Mean rewards per epoch. Cells highlighted in yellow contain the optimal reward for the corresponding epoch.}
  \label{fig:plot-plus-tables}
\end{figure}
\\
\textbf{Results} In Figure \ref{fig:results_high}, we report the cumulative regrets suffered by the considered algorithms. For each instance and algorithm, we performed $20$ trials and reported the average cumulative regrets with their aleatoric uncertainties. \algnameshort achieves, in both instances, a smaller cumulative regret than competitors. Moreover, it shows a smaller uncertainty and more stable performances across the trials, especially when rewards have infinite variance (Figure \ref{fig:pareto}). Interestingly, \algnameshort can outperform both \texttt{Monitored UCB} and \texttt{Sliding Window UCB} even when the rewards are Gaussian. This is probably because the change-points are frequent and very close. Robust mean estimation using median-of-means stabilizes the algorithm's behavior in data-scarce regimes. Finally, we remark that Assumption \ref{ass:length} is violated by these two instances; however, \algnameshort performs well (and so is \texttt{Monitored UCB}, which relies on a similar hypothesis). This phenomenon was already observed in \cite{cao2019nearly}, and shows how Assumption \ref{ass:length} is, in practice, is not very limiting.

\begin{figure}[h!]
    \centering
    \subfloat[Gaussian rewards.]{
        \resizebox{0.47\linewidth}{!}{\includegraphics{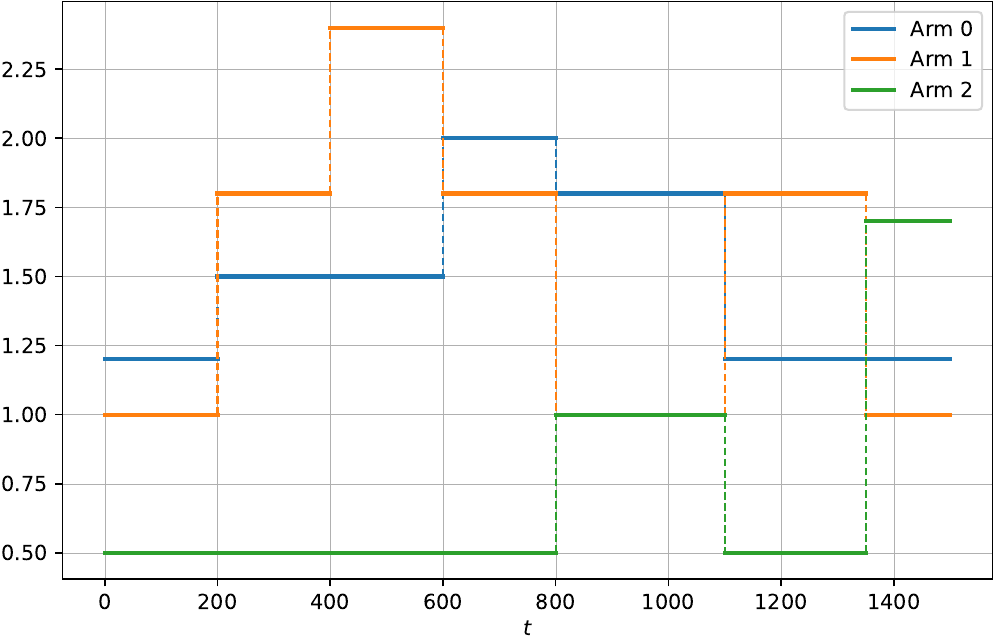}}
        \label{fig:gauss_instance}
    }
    \subfloat[Pareto rewards.]{
        \resizebox{0.47\linewidth}{!}{\includegraphics{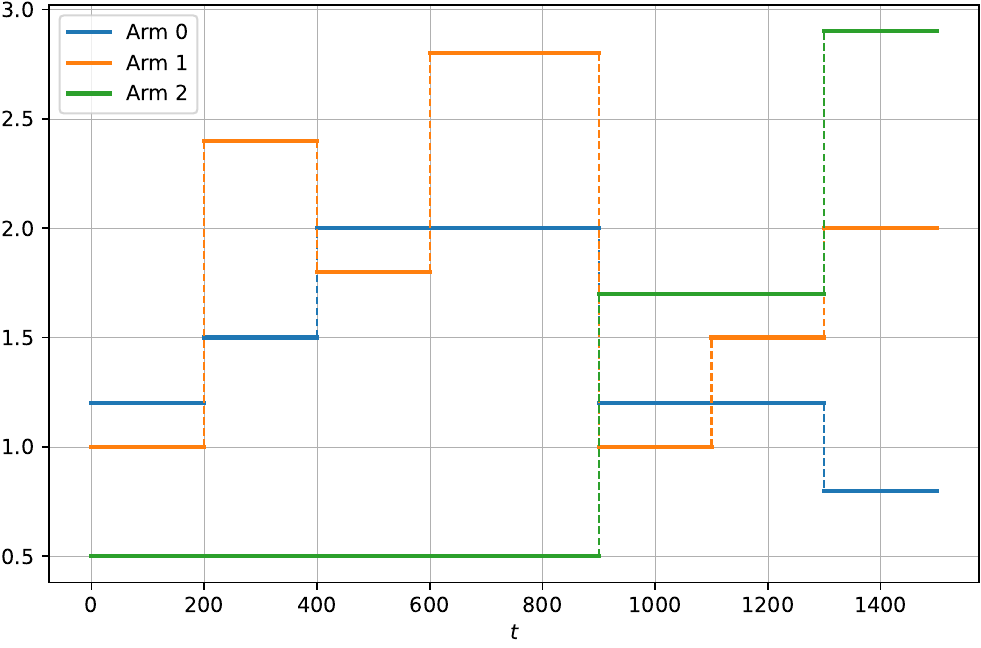}}
        \label{fig:pareto_instance}
    }
    \caption{Average rewards per epoch.}
    \label{fig:instance_high}
\end{figure}
\subsection{Sensibility to $\delta$}
In this section, we study the sensibility of \algnameshort to different magnitudes of changes. 
\paragraph{Setting} We consider four HTPS MABs with Pareto rewards ($\epsilon<1$, $v=1$), $K=3$, $T=500$, and $\Upsilon=1$. The starting means are $\mu_1 = 1$, $\mu_2=0.8$ and $\mu_2=0.5$, and a change occurs at $t_c=200$. We let $\delta_1 = 1$, $\delta_3 = 0$ (thus, $\delta_{min}=0$) and $\delta_2 \in \{1,2,5,10\}$, respectively. In the first PS MAB, the first action remains optimal from the start to the end of the trial; in the others, the second action becomes optimal after the change. In Figure \ref{fig:sensibility_instance}, we report the means of every action in every epoch for the four HTPS MABs.
\begin{figure}[t]
    \centering
    \resizebox{0.9\linewidth}{!}{\includegraphics{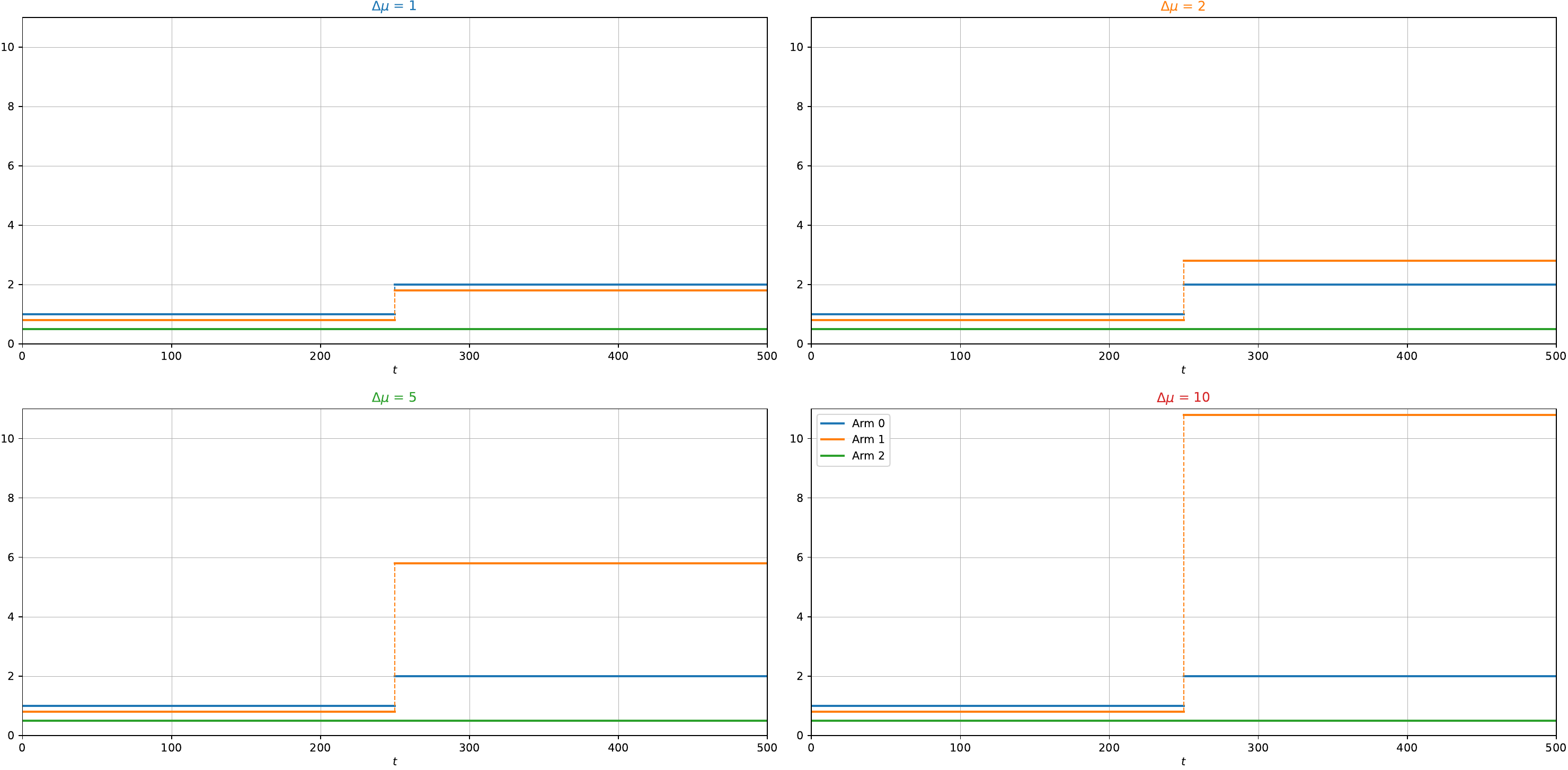}}  
    \caption{HTPS MABs with different magnitudes of change.  For all instances: $\Upsilon=1$, $K=3$, $\delta \in \{1,2,5,10\}$. Pareto noise with $\epsilon < 1$ and $v=3$.}
    \label{fig:sensibility_instance}
\end{figure}
\paragraph{Results} In Figure \ref{fig:sensibility_analysis}, we report the cumulative regrets suffered by \algnameshort in the four HTPS MABs. For each instance, we performed $20$ trials and reported the average cumulative regret (on the right), together with its standard deviation. Moreover, the dashed vertical lines indicate the average detection time and their standard deviations. As the four instances differ in terms of magnitude (\emph{e.g.}, in the first instance, the maximum mean is $2$, and in the fourth is $10.8$), we also reported the rescaled cumulative regrets (on the left). \algnameshort can detect the change with a reasonable delay, and all cumulative regrets show sublinear growth. As $\delta$ grows, the cumulative regret is larger, but the detection delay decreases. Intuitively, a larger change yields a larger regret but is also easier to detect. From the rescaled cumulative regrets, we can observe how a large $\delta$ w.r.t. to the mean does not deteriorate the performance of \algnameshort.
\begin{figure}[t]
    \centering
    \subfloat[Cumulative Regret.]{
        \resizebox{0.47\linewidth}{!}{\includegraphics{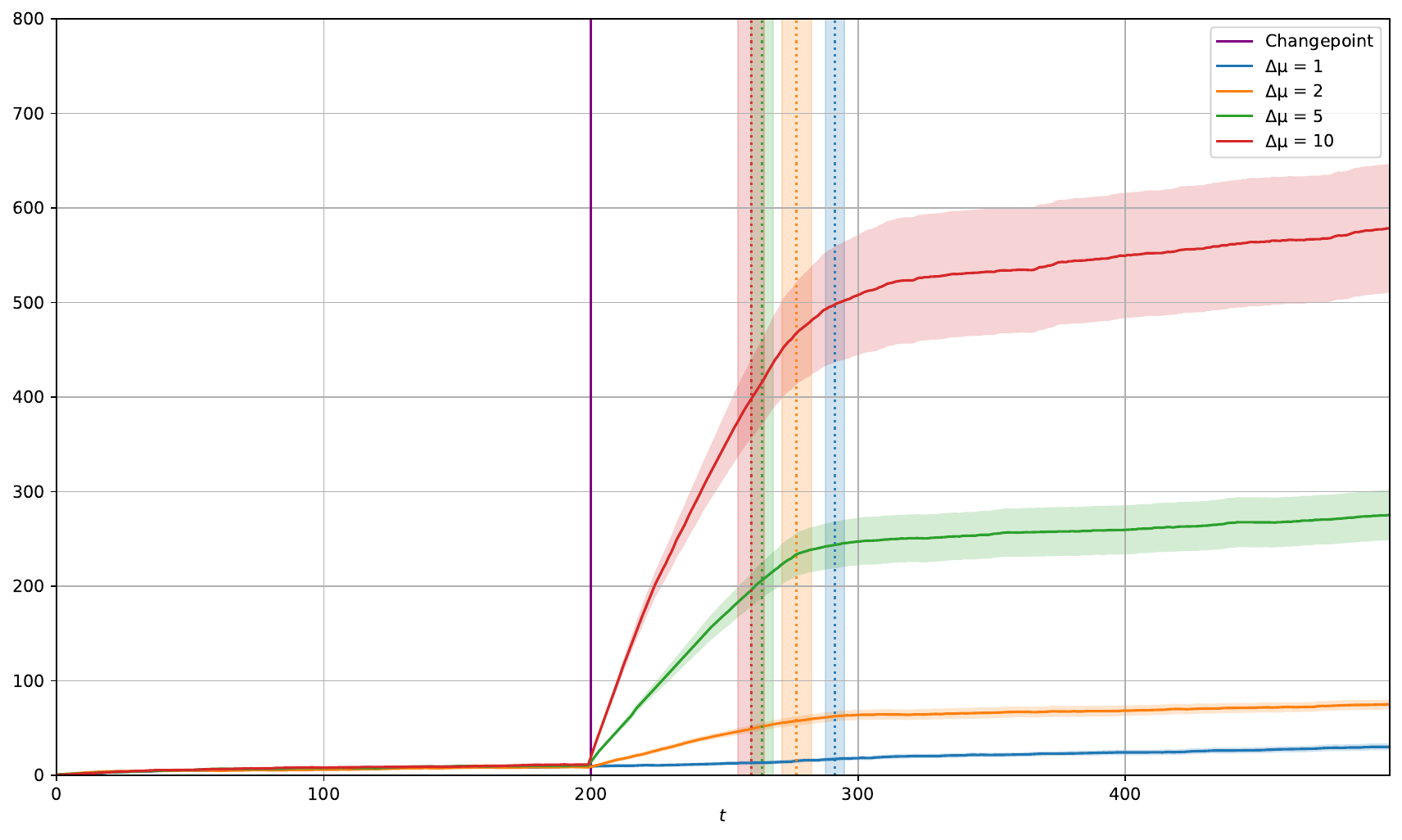}}
        \label{fig:delta}
    }
    \subfloat[Rescaled Cumulative Regret.]{
        \resizebox{0.47\linewidth}{!}{\includegraphics{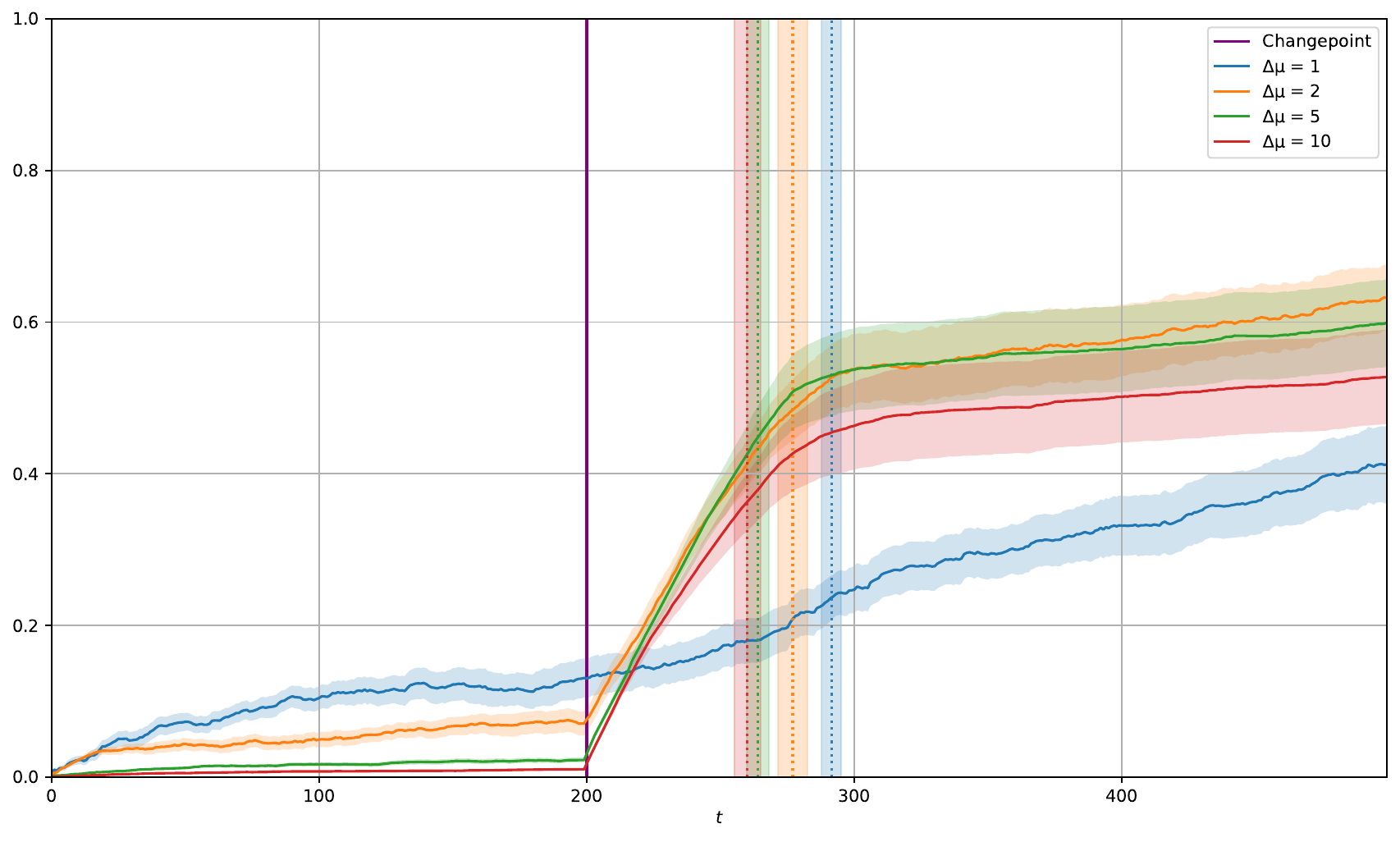}}
        \label{fig:delta_norm}
    }
    \caption{Cumulative regrets of \algnameshort in the four HTPS MABs represented in Figure 6. We performed $20$ trials for each instance and reported mean $\pm$ std. The purple vertical line indicates the change-point. The dashed vertical lines indicated the average detection time in the corresponding instance ($\pm$ std). On the left, we report the cumulative regrets of \algname. On the right, we have the same quantity rescaled by the maximum mean reward.}
    \label{fig:sensibility_analysis}
\end{figure}

\subsection{Stationary Environments}
In this section, we study the empirical behavior of \algnameshort in stationary HT MABs. 
\paragraph{Setting} We consider four HT MABs with Pareto rewards ($\epsilon<1$, $v=1$), $K=3$, $T=300$, and $\Upsilon=0$. We compare \algnameshort with two gold standards from the literature: \texttt{Robust UCB} \cite{bubeck2013bandits} and \texttt{MR-APE} \cite{lee2022minimax}. 

\begin{table}[t]
\centering
\begin{tabular}{
>{\columncolor[HTML]{FFFFFF}}c |
>{\columncolor[HTML]{FFFFFF}}c |
>{\columncolor[HTML]{FFFFFF}}c |
>{\columncolor[HTML]{FFFFFF}}l }
        & $\mu_1$  & $\mu_2$ &$\mu_3$                                        \\ \hline
Instance $1$ & $1$ & $0.5$  & $0.1$  \\
Instance $2$ & $1$ & $0.8$ &$0.7$  \\
Instance $3$ & $1$   & $0.9$   & $0.1$ \\
Instance $4$ & $1$   & $0.5$   & $0.5$ \\

\end{tabular}
\caption{Mean rewards per epoch in four stationary HT MABs. For all instances: $\Upsilon=0$, $K=3$. Pareto noise with $\epsilon < 1$ and $v=3$. }
\label{tab:stationary_instances}
\end{table}

\paragraph{Results} We remark that, in a stationary environment, the behavior of \algnameshort diverges from the one of \texttt{Robust UCB} in only two cases: (i) when there is a false detection (happens with probability smaller than $T^{-1}$) and (ii) \algnameshort performs a forced exploration (once every $\mathcal{O}(T^{-\frac{1}{2}})$ rounds). In both cases, the contribution to the regret is small compared to the dominant term, adding a constant factor at most. In Figure \ref{fig:stationary}, we report the cumulative regrets suffered by the algorithms in the four HT MABs. For each instance, we performed $20$ trials and reported the average cumulative regret, together with its standard deviation. \algnameshort raises only one false alarm in one trial of the fourth instance, and the average cumulative regret is thus slightly larger than the one \texttt{Robust UCB}, which is suited for the stationary setting. \algnameshort suffers cumulative regrets comparable to the ones of algorithms suited for the stationary case. All cumulative regrets show sublinear growth. 
\begin{figure}[t]
    \centering
    \resizebox{0.8\linewidth}{!}{\includegraphics{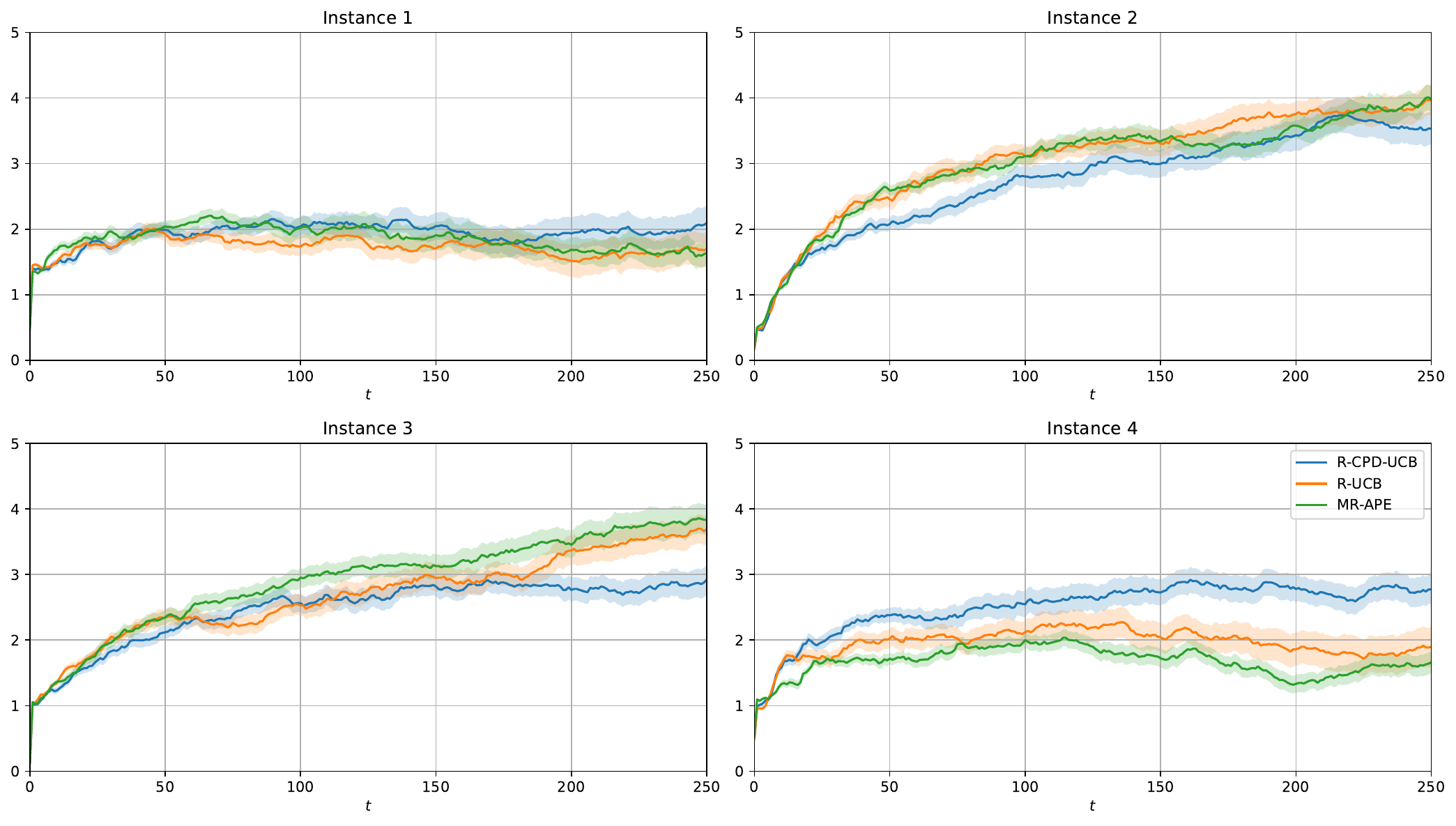}}
    \caption{Cumulative regrets of \algnameshort, \texttt{Robust UCB} \cite{bubeck2013bandits}, and \texttt{MR-APE} \cite{lee2022minimax} on the four HT MABs defined in Table 3. We performed $20$ trials for each instance and reported mean $\pm$ std.}
    \label{fig:stationary}
\end{figure}
\section{Computational Complexity of \algname}
In this section, we characterize the computational complexity of \algnameshort and provide a simple modification to the algorithm that allows for quicker computation while keeping similar theoretical guarantees. For the sake of traceability, we assume that all means belong to the set $[-M, M]$. We start by upper bounding the computational complexity of \algnameshort.
\begin{proposition}
    \label{prop:computational}
    Let $\widetilde{M} = \mathcal{O}\left( M + v^\frac{1}{1+\epsilon}\log(T)^\frac{\epsilon}{1+\epsilon}\right)$. Let $\xi$ be the machine tolerance. Then, \algnameshort takes at most $\mathcal{O}\left(KT^4 \log_2\left(\frac{\widetilde{M}}{\xi}\right)\right)$ steps with probability at least $1-\frac{1}{T}$.
\end{proposition}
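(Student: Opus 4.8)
The plan is to reduce the running time of \algnameshort to the cost of maintaining its Catoni confidence sequences and then to count elementary operations. At each of the at most $T$ rounds the algorithm, for the pulled arm $I_t$, creates one new Catoni CS and refreshes every previously created CS of $I_t$ with the fresh sample, and then tests whether two of these intervals are disjoint. Since at most one CS per arm is created per round, at most $KT$ confidence sequences are ever in play, each supported on at most $T$ samples; the only non-trivial ingredient is therefore the cost of computing one confidence interval $CI_t^{\phi}$ and comparing it with the running intersection.

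First I would bound the cost of a single interval. By the argument used in the proof of Proposition~\ref{prop:detection_delay} (Step~3), the map $f_t(m)=\sum_{i=1}^{t}\phi_{\epsilon}(\lambda_i(X_i-m))$ is non-increasing in $m$, so the set in Equation~\eqref{eq:catoni_cs} is a genuine interval whose two endpoints are the unique roots of $f_t(m)=\pm\big(\tfrac{v}{2}\sum_{i}\lambda_i^{1+\epsilon}+\log(2/\gamma)\big)$; each root is located by bisection. The weights $\{\lambda_i\}_{i\le T}$ are the deterministic stitched sequence fixed in Step~4 of that proof, so they and the relevant prefix sums are precomputed once; a single evaluation of $f_n(\cdot)$ then costs $O(n)\le O(T)$ arithmetic operations, and reaching argument precision $\xi$ on a search interval of length $2\widetilde M$ takes $\lceil\log_2(\widetilde M/\xi)\rceil$ such evaluations. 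Hence one interval, and thus $CI_t^{\phi}$ and its comparison with the current intersection, costs $O\big(T\log_2(\widetilde M/\xi)\big)$.

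The crux is to justify running bisection on a bounded interval $[-\widetilde M,\widetilde M]$: because the rewards are heavy-tailed, both the observations and, a priori, the interval endpoints are unbounded. I would introduce the good event that every confidence interval started over the horizon simultaneously covers its mean and satisfies the width bound derived in Steps~3--4 of the proof of Proposition~\ref{prop:detection_delay}. With $\gamma=2/T^{3}$ each CS fails one of these with probability $O(1/T^{2})$, so a union bound over the at most $T$ confidence sequences keeps the failure probability at most $1/T$. On this event every $CI_t^{\phi}$ lies in an $O\big(v^{\frac{1}{1+\epsilon}}\log(T)^{\frac{\epsilon}{1+\epsilon}}\big)$-neighbourhood of its mean, which by assumption is in $[-M,M]$; hence all endpoints lie in $[-\widetilde M,\widetilde M]$ with $\widetilde M=\mathcal O\big(M+v^{\frac{1}{1+\epsilon}}\log(T)^{\frac{\epsilon}{1+\epsilon}}\big)$, and bisection on this interval is correct and terminates in the claimed number of steps.

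Finally I would chain the pieces: on the good event the total work is at most $O(T)$ rounds times $O(KT)$ confidence sequences (re)computed, times $O(\log_2(\widetilde M/\xi))$ bisection steps, times $O(T)$ arithmetic per step — with the extra $O(T)$ slack absorbed by refreshing each CS from scratch and by the pairwise disjointness test — which gives $O\big(KT^{4}\log_2(\widetilde M/\xi)\big)$ steps with probability at least $1-1/T$, as claimed. I expect the third paragraph to be the main obstacle: turning the high-probability width and coverage guarantees of Proposition~\ref{prop:detection_delay} into an explicit deterministic radius $\widetilde M$ for the search region while keeping the overall failure probability at $1/T$. Once the search interval is bounded, the remaining count is routine, and monotonicity of $f_t$ is the only regularity bisection needs — no Lipschitzness of $f_t$ is required, since we only demand $\xi$-precision on the argument, not on the value.
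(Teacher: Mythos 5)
The proposal is correct and follows essentially the same route as the paper: bisection over $[-\widetilde M,\widetilde M]$ for the two endpoints of each Catoni interval, a high-probability guarantee that all roots fall inside that search range, and a crude count of the number of confidence-interval computations yielding $\mathcal{O}\bigl(KT^4\log_2(\widetilde M/\xi)\bigr)$. The only difference is cosmetic: the paper justifies containment of the roots by invoking Theorem 3.2 of \cite{bhatt2022catoni}, whereas you re-derive it from the coverage and width events already established in the proof of Proposition \ref{prop:detection_delay}; your version also makes explicit the monotonicity of $f_t$ in $m$ that bisection relies on, which the paper leaves implicit.
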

\begin{proof}
    Using the bisection method with a tolerance of $\xi$, and searching in the interval $[-\widetilde{M}, \widetilde{M}]$, we can solve the two root-finding problems implied by Equation \eqref{eq:catoni_cs} in at most $\mathcal{O}\left(T\log_2\left(\frac{\widetilde{M}}{\xi}\right)\right)$. Note that, by Theorem 3.2 from \cite{bhatt2022catoni}, the solution lies in the search interval with probability at least $\Omega\left(1-\frac{1}{T}\right)$. Then, we observe that \algnameshort, at each round $t \in [T]$, for every action $i \in [K]$, runs a step of the \texttt{Catoni-FCS-detector} which computes $t$ CS of lengths $t, t-1, \ldots, 2, 1$. Computing a CS of length $t'\le T$ requires to compute $t'$ CIs, which requires $\mathcal{O}\left(t'\log_2\left(\frac{\widetilde{M}}{\xi}\right)\right)$ steps at most for each of them. Note that a solution always exists as the number of samples is always greater than $n_{min}$ when the CPD routine is executed. The result follows by upper bounding $t, t' \le T$.
\end{proof}
Proposition \ref{prop:computational} states that the computational complexity of \algnameshort is polynomial in $T$.
\end{document}